\documentclass{article} 
\usepackage{iclr2021_conference,times}
\usepackage{url}

\usepackage{xr-hyper}
\usepackage{hyperref}
\usepackage{nicefrac}
\usepackage{graphicx}
\usepackage{wrapfig}
\usepackage{tabularx}
\usepackage{amsmath}
\usepackage{mathtools}
\usepackage{amsthm}
\usepackage{microtype}
\usepackage{subfigure}
\usepackage{xcolor}
\usepackage{amssymb}
\usepackage{bbm}
\usepackage{algorithm} 
\usepackage[noend]{algorithmic}
\usepackage{rotating} 
\usepackage{kbordermatrix} 
\usepackage[inline]{enumitem}
\usepackage{tablefootnote}
\usepackage{mathabx}
\usepackage{footnote}
\usepackage{adjustbox}
\usepackage{caption}
\usepackage{tikz}
\usetikzlibrary{automata,arrows,positioning,calc}
\usepackage{dblfloatfix} 
\usepackage{xparse}
\usepackage{multirow}
\usepackage{svg}

\newtheorem{corollary}{Corollary}[section] 
\newtheorem*{corollary*}{Corollary}
\newtheorem{definition}{Definition}[section]
\newtheorem{theorem}{Theorem}[section] 
\newtheorem*{theorem*}{Theorem}
\newtheorem{lemma}{Lemma}[section] 
\newtheorem*{lemma*}{Lemma}
\newtheorem{proposition}{Proposition}[section] 
\newtheorem*{proposition*}{Proposition}

\newtheorem{remark}{Remark}[section]
\newtheorem{example}{Example}[section]

\makeatletter
\newcommand*{\addFileDependency}[1]{
  \typeout{(#1)}
  \@addtofilelist{#1}
  \IfFileExists{#1}{}{\typeout{No file #1.}}
}
\makeatother

\DeclareMathOperator{\St}{\mathcal{S}}
\DeclareMathOperator{\A}{\mathcal{A}}
\DeclareMathOperator{\X}{\mathcal{X}}

\DeclarePairedDelimiter\abs{\lvert}{\rvert}
\DeclarePairedDelimiter\norm{\lVert}{\rVert}
\DeclarePairedDelimiter\ceil{\lceil}{\rceil}


\newcommand \PiD{\bar{\Pi}_m}
\newcommand \piD{\bar{\pi}}

 \newcommand{\ie}{i.\,e., }
 \newcommand{\eg}{e.\,g., }

\newcommand{\algorithmfootnote}[2][\footnotesize]{%
  \let\old@algocf@finish\@algocf@finish
  \def\@algocf@finish{\old@algocf@finish
    \leavevmode\rlap{\begin{minipage}{\linewidth}
    #1#2
    \end{minipage}}%
  }%
}

\title{Acting in Delayed Environments with \\ Non-Stationary Markov Policies
}

\author{Esther Derman\thanks{Equal contribution} \\
Technion\\
\texttt{estherderman@campus.technion.ac.il} \\
\And
Gal Dalal\footnotemark[1] \\
Nvidia Research \\
\texttt{gdalal@nvidia.com} \\
\And
Shie Mannor \\
Nvidia Research \& Technion \\
\texttt{shie@ee.technion.ac.il} \\
}

\iclrfinalcopy  
\begin{document}

\maketitle
\vspace{-0.7cm}
\begin{abstract}
The standard Markov Decision Process (MDP) formulation hinges on the assumption that an action is executed immediately after it was chosen. However, assuming it is often unrealistic and can lead to catastrophic failures in applications such as robotic manipulation, cloud computing, and finance. We introduce a framework for learning and planning in MDPs where the decision-maker commits actions that are executed with a delay of $m$ steps. The brute-force state augmentation baseline where the state is concatenated to the last $m$ committed actions suffers from an exponential complexity in $m$, as we show for policy iteration. We then prove that with execution delay, deterministic Markov policies in the original state-space are sufficient for attaining maximal reward, but need to be non-stationary. As for stationary Markov policies, we show they are sub-optimal in general. Consequently, we devise a non-stationary Q-learning style model-based algorithm that solves delayed execution tasks without resorting to state-augmentation. Experiments on tabular, physical, and Atari domains reveal that it converges quickly to high performance even for substantial delays, while standard approaches that either ignore the delay or rely on state-augmentation struggle or fail due to divergence. The code is available at \small{\url{github.com/galdl/rl_delay_basic}} and \small{\url{github.com/galdl/rl_delay_atari}}.
\end{abstract}

\vspace{-0.5cm}
\section{Introduction}
\label{section:intro}
\vspace{-0.25cm}
The body of work on reinforcement learning (RL) and planning problem setups has grown vast in recent decades. Examples for such distinctions are different objectives and constraints, assumptions on access to the model or logged trajectories, on-policy or off-policy paradigms, etc. \citep{puterman2014markov}. However, the study of delay in RL remains scarce. It is almost always assumed the action is executed as soon as the agent chooses it. This assumption seldom holds in real-world applications \citep{dulac2019challenges}. Latency in action execution can either stem from the increasing computational complexity of modern systems and related tasks, or the infrastructure itself. The wide range of such applications includes robotic manipulation, cloud computing, financial trading, sensor feedback in autonomous systems, and more. 
To elaborate, consider an autonomous vehicle required for immediate response to a sudden hazard on the highway. Driving at high speed, it suffers from perception module latency when inferring the surrounding scene, as well as delay in actuation once a decision has been made. While the latter phenomenon is an instance of \emph{execution delay}, the former corresponds to \emph{observation delay}. These two types of delay are in fact equivalent and can thus be treated with the same tools \citep{katsikopoulos2003markov}.  

{ \bf Related works. }
The notion of delay is prominent in control theory with linear time-invariant systems \citep{bar1995explicit, dugard1998stability, richard2003time, fridman2014introduction, bruder2009impulse}. 
 While the delayed control literature is vast, our work intersects with it mostly in motivation. 
In the above control theory formulations, the system evolves according to some known diffusion or stochastic differential equation. Differently, the discrete-time MDP framework does not require any structural assumption on the transition function or reward. 

A few works consider a delay in the reward signal rather than in observation or execution. Delayed reward has been studied on multi-armed bandits for deterministic and stochastic latencies \citep{joulani2013online} and for the resulting arm credit assignment problem \citep{pike2017bandits}. In the MDP setting, \citet{campbell2016multiple} proposed a Q-learning variant for reward-delay that follows a Poisson distribution. 
\cite{katsikopoulos2003markov} considered three types of delay: observation, execution, and reward. \cite{chen2020delaymultiagent} studied execution delay on multi-agent systems. The above works on MDPs employed state-augmentation with a primary focus on empirical evaluation of the degradation introduced by the delay. In this augmentation method, all missing information is concatenated with the original state to overcome the partial observability induced by the delay. 
 The main drawback of this embedding method is the exponential growth of the state-space with the delay value \citep{walsh2009learning, chen2020delay} and, in the case of \citep{chen2020delaymultiagent}, an additional growth that is polynomial with the number of agents. 
 
\citet{walsh2009learning} avoided state-augmentation in MDPs with delayed feedback via a planning approach. By assuming the transition kernel to be close to deterministic, their \emph{model-based simulation} (MBS) algorithm relies on a most-likely present state estimate. Since the Delayed-Q algorithm we devise here resembles to MBS in spirit, we highlight crucial differences between them:
First, MBS is a conceptual algorithm that requires the state-space to be finite or discretized. This makes it highly sensitive to the state-space size, as we shall demonstrate in Sec.~\ref{section: experiments}[Fig.~\ref{fig:varying_maze_size}], prohibiting it from running on domains like Atari. Differently, Delayed-Q works with the original, possibly continuous state-space. Second, MBS is an offline algorithm: it estimates a surrogate, non-delayed MDP from samples, and only then does it solve that MDP to obtain the optimal policy \citep{walsh2009learning}[Alg.~2, l.~16]. This is inapplicable to large continuous domains and is again in contrast to Delayed-Q.

Recent studies considered a concurrent control setting where action sampling occurs simultaneously with state transition \citep{ramstedt2019real, xiao2020thinking}. Both assumed a single action selection between two consecutive observations, thus reducing the problem to an MDP with execution delay of $m=1$. \cite{chen2020delay} have generalized it to an arbitrary number of actions between two observations. \cite{hester2013texplore} addressed execution delay in the braking control of autonomous vehicles with a relatively low delay of $m\leq 3.$ All these works employ state-augmentation to preserve the Markov property of the process, whereas we are interested whether this restriction can be lifted. Additionally, they studied policy-gradient (policy-based) methods, while we introduce a Q-learning style (value-based) algorithm. Likewise, \cite{firoiu2018human} proposed a modified version of the policy-based IMPALA \citep{espeholt2018impala} which is evaluated on a single video game with delay values of $m\leq 7$.
To the best of our knowledge, our work is the first to tackle a delayed variant of the popular Atari suite \citep{bellemare2013arcade}.

{\bf Contributions.} Revisiting RL with execution delay both in theory and practice, we introduce:
    \begin{enumerate}
        \item Analysis of a delayed MDP quantifying the trade-off between stochasticity and delay.
        \item The first tight upper and lower complexity bounds on policy iteration for action-augmented MDPs. We stress that this is also a contribution to general RL theory of non-delayed MDPs. 
        \item A new formalism of execution-delay MDPs that avoids action-embedding. Using it, we  prove
        that out of the larger set of history-dependent policies, restricting to non-stationary deterministic Markov policies  is sufficient for optimality in delayed MDPs. We also derive a Bellman-type recursion for a \emph{delayed value function}.
        \item A model-based DQN-style algorithm that yields non-stationary Markov policies. Our algorithm outperforms the alternative standard and state-augmented DDQN in 39 of 42 experiments spanning over 3 environment categories and delay of up to $m=25$.
    \end{enumerate}
 
 \vspace{-0.4cm}
\section{Preliminaries: Non-Delayed Standard MDP}
\label{sec:prelim}
\vspace{-0.2cm}
Here, we describe the standard non-delayed MDP setup. Later, in Sec.~\ref{section: ed-mdp new formulation}, we  introduce its generalization to the delayed case. We follow and extend  notations from \citep{puterman2014markov}[Sec. 2.1.]. 
	An infinite horizon discounted MDP is a tuple $(\St, \A, P, r, \gamma)$ where $\St$ and  $\A$ are finite state and action spaces, 
	$P : \St\times \A \rightarrow \Delta_{\St}$ is a transition kernel,
	the reward $r:\St\times\A\rightarrow\mathbb{R}$ is a bounded function, and $\gamma\in [0,1)$ is a discount factor. 
	At time $t$, the agent is in $s_t$ and draws an action $a_t$ according to a decision rule $d_t$ that maps past information to a probability distribution $q_{d_t}$ over the action set. Once $a_t$ is taken,
	the agent receives a reward $r(s_t,a_t).$ 
	
	A decision rule can be history-dependent (H) or Markovian (M) , and randomized (R) or deterministic (D). Denote by $\mathcal{H}_t$ the set of possible histories up to time $t.$ Then, a history-dependent decision-rule is given by $d_t: \mathcal{H}_t \rightarrow \Delta_{\A}$ with $h_t \mapsto q_{d_t(h_t)}(\cdot).$ A Markovian decision-rule, on the other hand, maps states to actions, i.e., $d_t: \St \rightarrow \Delta_{\A}$ with $s \mapsto q_{d_t(s)}(\cdot)$.
    A \emph{policy} $\pi:= (d_t)_{t\geq 0}$ is a sequence of decision rules whose type dictates that of the policy. It can be either Markovian deterministic ($\Pi^\textsc{MD}$) or randomized ($\Pi^\textsc{MR}$), history-dependent deterministic ($\Pi^\textsc{HD}$) or randomized ($\Pi^\textsc{HR}$). It is stationary 
    if its decision rules do not depend on time, \ie $d_t = d$ for all $t\geq 0$. This defines the smaller class of stationary policies: deterministic ($\Pi^\textsc{SD}$) and randomized ($\Pi^\textsc{SR}$). Note that stationary policies are inherently Markovian. Indeed, at time $t=0$, $d: \mathcal{H}_0\rightarrow \Delta_{\A}$ is state-dependent because $\mathcal{H}_0 = \St$. Since the policy is stationary, \ie $d_t =d ~\forall t$, subsequent decision rules are also state-dependent, thus Markovian. This makes $\Pi^\textsc{HR}$ the most general set and $\Pi^\textsc{SD}$ the most specific.

    We denote probability model by $\mathbb{P}_0^{\pi}$, where the subscript $0$ stands for the delay value $m=0$.  The related random variables are denoted by $\tilde{s}_t \in \St, \tilde{a}_t \in \A$ and $\tilde{h}_t \in (\St\times\A)^t\times \St.$ 
	The value function given policy $\pi\in\Pi^{\textsc{HR}}$ is defined as
	$v^{\pi}(s) = \mathbb{E}_0^{\pi}\left[ \sum_{t = 0}^{\infty}\gamma^{t}r(\tilde{s}_t, \tilde{a}_t) \biggm | \tilde{s}_0 = s\right]$, where the expectation is taken with respect to (w.r.t.) $\mathbb{P}_0^{\pi}(\cdot|\tilde{s}_0=s)$. 
	Let the optimal value function 
\begin{equation}
    \label{def: optimal value}
    v^*(s) := \max_{\pi\in\Pi^{\textsc{HR}}} v^\pi(s), \quad \forall s \in \St.
\end{equation}
Our goal is to find a policy $\pi^*$ that yields $v^*$,
and it is known that focusing on stationary deterministic policies $\pi\in\Pi^{\textsc{SD}}$ is sufficient for reaching the optimum in \eqref{def: optimal value} \citep{puterman2014markov}[Thm. 6.2.10.].

\section{MDPs with Delay: A Degradation Example}
\label{sec: ED MDP}
\begin{wrapfigure}{R}{.36\textwidth}
\vspace{-0.55cm}
  \centering
\begin{tikzpicture}[->, >=stealth', auto, semithick,on grid, node distance=3cm]
\tikzstyle{every state}=[fill=white,draw=black,thick,text=black,scale=1]
\node[state, text width=1.5cm]    (A) {$\qquad s_0$ \footnotesize{$a_0: r=1$ $a_1: r=0$}};
\node[state, text width=1.5cm]    (B)[right of=A]   {$\qquad s_1$ \footnotesize{$a_0: r=0$ $a_1: r=1$}};
\path
(A) edge[bend left, above]     node{$p$}     (B)
    edge[loop above, right]    node[yshift = -.1cm, xshift = .2cm]{$1-p$}      (A)
(B) edge[bend left]               node{$p$}           (A)
    edge[loop above, left]       node[yshift = -.1cm, xshift = -.2cm]{$1-p$}         (B);
\end{tikzpicture}
 \captionof{figure}{Degradation due to delay in a two-state MDP.} 
 \label{fig: degradation example}
\end{wrapfigure}
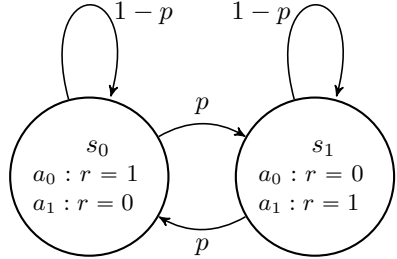
\vspace{-2mm}
In an MDP with execution delay\footnote{The exact terminology used by \citet{katsikopoulos2003markov} is \emph{action delay}, while in \citep{bertsekas1995dynamic}[Section~1.4] it is \emph{time lag}. We prefer the term \emph{execution delay} since the action is itself decided instantaneously.} $m$, any action chosen at time $t$ is executed at $t+m$. Therefore, at each step, the agent witnesses the current state and action being executed, but selects a new action that will be applied in a future state. We assume that $m$ decided actions are already awaiting execution at $t=0$, so at any given time, the queue of pending actions is of constant length $m$.
As we illustrate in the next example, having a  delay generally comes at a price. 

\begin{example}[Two-state MDP]
\label{exmp: 2-state MDP}
	Consider the MDP in Fig.~\ref{fig: degradation example}. It has two states and two actions: $\St = \{s_0, s_1\},$  $\A = \{a_0,a_1\}.$ The transition kernel is independent of the action: for all $s,s' \in \St \mbox{ s.t. } s\neq s', P(s'|s,a)=P(s'|s) = p$ where $p\in[0.5, 1]$. The reward is positive for one of the two actions only: $r(s_0,a_0) =r(s_1,a_1)= 1,$ $r(s_0,a_1) =r(s_1,a_0)= 0.$ 
\end{example}

We inspect the return obtained from the commonly used set of stationary deterministic policies $\Pi^{\text{SD}}$. As expected, the highest possible return is attained when $m=0,$ but monotonically decreases with the delay, $m,$ and increases with the level of certainty, $p.$ We analytically quantify this effect in the following and give a proof in Appx.~\ref{sec: example proof}.

    \begin{proposition}
    \label{prop: degredation}
    For delay $m\in \mathbb{N}$ and $p\in[0.5, 1]$, the optimal return of $\pi^* \in \Pi^{\textsc{SD}}$ is
    $
    \frac{1 + (2p - 1)^m}{2(1-\gamma)}.
    $

    \end{proposition}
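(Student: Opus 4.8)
The plan is to exploit the fact that $\Pi^{\textsc{SD}}$ is a \emph{finite} set here: since $\abs{\St}=\abs{\A}=2$, there are exactly four stationary deterministic rules, namely the \emph{matching} rule $d(s_i)=a_i$, the \emph{swap} rule $d(s_i)=a_{1-i}$, and the two constant rules $d\equiv a_0$ and $d\equiv a_1$. The decisive structural observation is that the kernel is action-independent: under \emph{every} policy the state process $(\tilde s_t)$ is one and the same time-homogeneous Markov chain with matrix $P=\left(\begin{smallmatrix}1-p & p\\ p & 1-p\end{smallmatrix}\right)$, so the policy affects only the reward collected and never the dynamics. This decouples the maximization into a per-step reward comparison.

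First I would diagonalize $P$: its eigenvalues are $1$ (eigenvector $(1,1)^\top$) and $1-2p$ (eigenvector $(1,-1)^\top$), hence $P^m=\tfrac12\left(\begin{smallmatrix}1+(1-2p)^m & 1-(1-2p)^m\\ 1-(1-2p)^m & 1+(1-2p)^m\end{smallmatrix}\right)$. Both diagonal entries equal $\tfrac{1+(1-2p)^m}{2}$, so $\mathbb P[\tilde s_t=\tilde s_{t-m}]=\tfrac{1+(1-2p)^m}{2}$ \emph{independently} of the law of $\tilde s_{t-m}$ — this state-independence is what will ultimately make the value function a clean geometric series.

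Next I would translate each rule into an expected instantaneous reward. Under delay $m$ the action executed at time $t$ is the one selected at time $t-m$, \ie $d(\tilde s_{t-m})$, and $r(s,a)=\mathbbm 1\{a\text{ matches }s\}$. Thus the matching rule earns $\mathbbm 1\{\tilde s_t=\tilde s_{t-m}\}$ with mean $\tfrac{1+(1-2p)^m}{2}$, the swap rule earns $\mathbbm 1\{\tilde s_t\neq\tilde s_{t-m}\}$ with mean $\tfrac{1-(1-2p)^m}{2}$, and a constant rule earns $\mathbbm 1\{\tilde s_t=s_i\}$ with mean $\tfrac12$ in steady state. Since each mean is independent of the current state, each policy's return is $\tfrac{1}{1-\gamma}$ times its per-step mean. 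Using $(1-2p)^m=(-1)^m(2p-1)^m$ with $2p-1\in[0,1]$, the maximizer is the matching rule when $m$ is even and the swap rule when $m$ is odd, both attaining per-step mean $\tfrac{1+(2p-1)^m}{2}$; summing the geometric series yields the claimed $\tfrac{1+(2p-1)^m}{2(1-\gamma)}$.

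The step I expect to be most delicate is the first $m$ time steps, where the executed action comes from the pre-loaded queue rather than from a well-defined $d(\tilde s_{t-m})$. To keep the per-step reward uniform over all $t\ge 0$, and so obtain the exact geometric sum rather than a bound, I would extend the state chain back to times $-m,\dots,-1$ and let the same decision rule generate the initial queue; then the reward at time $t$ is $\mathbbm 1\{\tilde s_t=\tilde s_{t-m}\}$ for \emph{every} $t\ge 0$, and the state-independence of the diagonal of $P^m$ established above forces its mean to equal $\tfrac{1+(1-2p)^m}{2}$ regardless of the initial law — in particular the value is the same from either start state. I would finally verify that the constant rules, whose per-step mean is $\tfrac12\le\tfrac{1+(2p-1)^m}{2}$, never exceed the matching/swap optimum, which closes the maximization over $\Pi^{\textsc{SD}}$.
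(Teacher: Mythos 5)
Your proposal is correct and shares the paper's essential mechanism: both proofs reduce the delayed return to a constant per-step expected reward --- the probability of ``matching'' the state $m$ steps after the decision --- observe that this quantity is independent of both the current state and time, and conclude with a geometric series producing the $\tfrac{1}{1-\gamma}$ factor. Where you differ is in two execution choices, each of which buys something. (i) You compute the $m$-step probabilities by diagonalizing $P$ (eigenvalues $1$ and $1-2p$), obtaining the diagonal entries $\tfrac{1+(1-2p)^m}{2}$ in one line; the paper instead counts even/odd numbers of jumps as binomial sums and then collapses them via the expansion of $(1-2p)^m=(-p+(1-p))^m$ --- the same identity reached with more computation. (ii) You establish optimality over $\Pi^{\textsc{SD}}$ by exhaustively enumerating all four stationary deterministic rules, justified by the observation that the kernel is action-independent, so every policy induces the same chain and only the collected reward differs; the paper instead asserts (``it is easy to see'') that the optimal policy matches the most likely future state and analyzes only that policy, so your enumeration closes this maximization step more rigorously. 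One caveat: your claim that a constant rule has per-step mean $\tfrac12$ holds only in steady state, whereas the chain starts from a known state; under the constant rule $d\equiv a_0$ started at $s_0$, the expected reward at decision time $t$ is $\tfrac{1+(1-2p)^{t+m}}{2}$, which exceeds $\tfrac12$ whenever $t+m$ is even. Your conclusion survives because $(2p-1)^{t+m}\le(2p-1)^m$ for all $t\ge 0$, hence this per-step mean never exceeds $\tfrac{1+(2p-1)^m}{2}$ and the constant rules remain dominated term-by-term; you should replace the steady-state bound with this one-line domination. Finally, your handling of the initial queue (extending the chain to negative times and letting the same rule generate the pending actions) is a legitimate alternative to the paper's convention, which simply excludes the first $m$ queue-generated rewards and discounts by decision time; since the per-step mean is the same constant under either convention, both yield the claimed value $\tfrac{1+(2p-1)^m}{2(1-\gamma)}$.
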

        \begin{remark}
        \label{remark:noise delay tradeoff}
    This result demonstrates a clear tradeoff between stochasticity and delay. For $p \rightarrow 0.5$ or $m \rightarrow \infty,$ the return goes to its minimal value of $0.5/(1-\gamma).$ Contrarily, for $p \rightarrow 1$ or $m \rightarrow 0,$ it goes to its maximal value of $1/(1-\gamma).$
    \end{remark}
    
\vspace{-0.4cm}
\section{The Augmentation Approach}
	\label{section:med}
	\vspace{-0.25cm}
	In this section, we consider state-augmentation for solving MDPs with execution delay. We begin with defining an equivalent MDP with a larger state space that memorizes all missing information for an informed decision. Due to the full observability, the resulting optimal augmented policy attains the optimal return in the original delayed MDP.
	\begin{definition}[$m$-AMDP]
\label{def: EDMDP}
Given MDP $(\St, \A, P, r, \gamma)$ and $m\in\mathbb{N}$, an $m$-Augmented MDP ($m$-AMDP) is a tuple $(\X_m, \A, F,g, \gamma)$ such that $\X_m := \St\times\A^m$ is the augmented state-space, $\A$  the original action-space, $F$ is the transition matrix given in Appx.~\ref{appendix: augmented mdp construction},\eqref{eq:transition_F}, and $g$ is the reward function given in Appx.~\ref{appendix: augmented mdp construction}, \eqref{eq:reward_function}.
\end{definition}

The pending action queue is concatenated to the original state to form an augmented state $x_t := (s_t, a_{t}^{-1}, \cdots, a_{t}^{-m})\in\X_m,$
where $a_{t}^{-i}$ is the $i$-th pending action at time $t$. It means that in the following step, $t+1$, action $a_t^{-m}$ will be executed independently of the present action selection, the queue will shift to the right, and the newly selected action will be at the second coordinate.  By construction, the $m$-AMDP is non-delayed; it directly accounts for execution delay through its state-representation, as opposed to our coming formulation in Sec.~\ref{section: ed-mdp new formulation}.
We further define a stationary deterministic policy $\piD \in \PiD^{\textsc{SD}}$ with corresponding decision rule $\bar{d}: \X_m \rightarrow \Delta_{\A}$ and \textit{augmented value function} 
$
    v^{\piD}(x) := \mathbb{E}^{\piD}\left[ \sum_{t = 0}^{\infty}\gamma^{t}g(\tilde{x}_t, \tilde{a}_t) | \tilde{x}_0 = x\right].
$ As in \eqref{def: optimal value}, our goal is to solve
  $\bar{v}^*(x) = \max_{\piD\in\PiD^{\textsc{SD}}} v^{\piD}(x), \quad\forall x\in\X_m$.

We now analyze the classical Policy Iteration (PI) algorithm \citep{howard1960dynamic}  for $m$-augmented MDPs and provide a finite-time analysis of its convergence. We refer to it as $m$A-PI and provide its pseudo-code in Appx.~\ref{sec: med_pi_algo}. 
We consider PI since it is a canonical representative upon which many other algorithms are built. Admittedly, we did not find any other formal result quantifying the effect of augmentation on a planning or learning algorithm, other than a PAC upper bound for R-max with $\epsilon$-optimal policies  \citep{walsh2009learning}. 
A proof for the next result is given in Appx.~\ref{sec: general lower bound}.   

\begin{theorem}[Lower Bound for $m$A-PI]
\label{theorem: maPI lower bound}
The number of iterations required for $m$A-PI 
to converge in  $m$-AMDP $\mathcal{M}_m$ is $\Omega(|\X_m|) = \Omega(|\St||\A|^m).$
\end{theorem}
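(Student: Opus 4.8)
The bound is a worst-case (existential) statement, so the plan is to \emph{exhibit} a base MDP whose induced $m$-AMDP forces $m$A-PI through $\Omega(|\X_m|)$ policy-improvement steps, and to do so even for the greedy (Howard) variant that switches every improving state simultaneously. I would reduce the claim to the following well-understood phenomenon: on a long directed chain whose per-step rewards are tuned so that advancing one step only becomes profitable after the \emph{next} state has already been corrected, policy iteration repairs the policy at a single ``frontier'' state per iteration, rippling backward from a terminal reward. If such a chain has length $\Theta(|\X_m|)$, the number of iterations is $\Omega(|\X_m|)$.

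\textbf{Realizing a chain of length $|\X_m|$.} The crux is that the augmented transition map acts as a shift register on the action queue: from $x=(s,a^{-1},\dots,a^{-m})$, selecting $a$ sends the queue to $(a,a^{-1},\dots,a^{-(m-1)})$ while the base state moves under $P(\cdot\mid s,a^{-m})$. Thus the reachable queue configurations trace a walk on the de Bruijn graph on length-$m$ strings over $\A$, which admits a Hamiltonian cycle (equivalently, a de Bruijn sequence of order $m$ over $|\A|$ symbols exists, visiting all $|\A|^m$ strings exactly once). I would use the remaining design freedom in $P$ to weave the $|\St|$ base states into this traversal---e.g.\ advancing the base state through a cycle on $\St$ each time the queue completes its de Bruijn tour---so that the reachability digraph on $\X_m$ contains a directed Hamiltonian path $x_0\to x_1\to\cdots\to x_{L-1}$ with $L=|\St||\A|^m=|\X_m|$.

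\textbf{Forcing one correction per iteration.} Make $x_{L-1}$ absorbing with large reward $R$. At every other state equip the agent with two choices realized by distinct original actions: an \emph{advance} move that continues along the path at small net cost $-c$, and a \emph{stay} move that loops (or falls into a zero-reward sink) with reward $0$. Starting from the policy that stays everywhere, policy evaluation gives value $0$ at $x_0,\dots,x_{L-2}$ and $R/(1-\gamma)$ at $x_{L-1}$. With $\gamma,c,R$ tuned so that $-c+\gamma\,v(x_{i+1})>0$ exactly when $x_{i+1}$ already carries the (large) corrected value and is negative otherwise, the greedy step switches \emph{only} $x_{L-2}$ in the first iteration; in iteration $k$ only $x_{L-1-k}$ flips, since its downstream neighbour is the sole newly-corrected state while all states two or more hops behind still evaluate to $0$. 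Hence each iteration advances the frontier by one, the run passes through $L-1$ distinct improving policies, and $m$A-PI needs $\Omega(L)=\Omega(|\X_m|)=\Omega(|\St||\A|^m)$ iterations.

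\textbf{Main obstacle.} The delicate part is compatibility between the combinatorial layout and the reward schedule. Two constraints must hold simultaneously: (i) the Hamiltonian ordering on $\X_m$ is dictated by the rigid shift-register/de Bruijn dynamics, so the ``advance'' and ``stay'' actions, together with the off-path sink for ``stay,'' must be realizable within those transitions (which in particular needs $|\A|\geq 2$); and (ii) the augmented reward $g(x,a)=r(s,a^{-m})$ depends on the \emph{executed} (oldest) queued action, not on the currently selected one, so the intended per-step cost is incurred with an $m$-step delay. I would absorb this by reverse-engineering $r$ and $P$ so that the \emph{effective} reward collected along the Hamiltonian path matches the monotone ``$-c$ advance / $0$ stay'' schedule up to this fixed shift, and then verifying the inequality chain that guarantees exactly one strictly-improving state per iteration. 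Establishing that no upstream state switches prematurely---i.e.\ tightness of the greedy improvement at every iteration---is the step I expect to require the most care.
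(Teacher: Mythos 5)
Your plan diverges from the paper's proof in a substantive way, and the divergence is instructive. The paper's argument is much simpler: it first proves a general $\Omega(|\St|)$ lower bound for Howard's PI on \emph{arbitrary} standard MDPs, via an explicit chain $(s_0,\dots,s_n)$ with two actions $u,d$, an absorbing sink, deterministic transitions, and the single nonzero reward $r(s_n,u)=1-\gamma$; policy iteration then repairs exactly one state per iteration, rippling backward from $s_n$. The theorem for $m$A-PI is then obtained by invoking this general result with the augmented state space $\X_m$, and the paper states explicitly that it ``does not take advantage of the special delay problem structure.'' You instead insist on building the hard instance \emph{inside} the class of $m$-AMDPs, i.e.\ as the augmentation of an actual base MDP. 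That is a more ambitious claim than what the paper's appendix verifies (the paper never checks that its chain is realizable as an $m$-AMDP), and if completed it would be a strictly stronger and more faithful proof of the stated theorem.

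However, as written your proposal has genuine gaps at exactly the points where the shift-register structure bites, and you do not resolve them. First, the ``weaving'' step is not implementable as described: by \eqref{eq:transition_F}, the base-state transition depends only on $(e_1^\top x, e_{m+1}^\top x)$, i.e.\ on the base state and the \emph{executed} (oldest) action, never on the queue configuration, so ``advance the base state through a cycle each time the queue completes its de Bruijn tour'' cannot be encoded in $P$; one would need something like a coprimality argument between $|\St|$ and the de Bruijn period $|\A|^m$ (acceptable for an $\Omega(\cdot)$ family, but absent from your sketch). Second, your basic gadget does not exist at generic augmented states: a state $x=(s,a^{-1},\dots,a^{-m})$ can transition to itself, or be absorbing, only if its queue is constant ($a^{-1}=\dots=a^{-m}$), and ``falling into a zero-reward sink'' requires $m$ consecutive identical selections rather than one step; so the per-state ``advance vs.\ stay'' choice that drives your frontier argument is not realizable as stated. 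Third, by \eqref{eq:reward_function} the reward $g(x,a)$ is independent of the currently selected action $a$, so the ``advance at cost $-c$ / stay at $0$'' schedule cannot be attached to the current decision at all; you flag this as obstacle (ii) but only promise to ``reverse-engineer'' $r$ and $P$, and the inequality chain guaranteeing exactly one strictly improving state per iteration---which you yourself identify as the hardest step---is never established. So the proposal is a plausible research program for a stronger result, not a proof; the paper's shorter route (general chain lower bound applied to the augmented state space) obtains the stated bound while sidestepping all of this machinery, at the cost of leaving the realizability question you raise unaddressed.
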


 Thm.~\ref{theorem: maPI lower bound} does not take advantage of the special delay problem structure but rather is an application of our more general result to augmented MDPs (Appx.\ref{sec: general lower bound}). As pointed out in \cite{scherrer2016improved}, the lower-bound complexity of PI is considered an open problem, at least in the most general MDP formulation. Lower-bounds have been derived in specific cases only, such as deterministic MDPs \citep{hansen2010lower}, total reward criterion \citep{fearnley2010exponential} or high discount factor \citep{hollanders2012complexity}. Even though we did not intend to directly address this open question, our lower bound result seems to be a contribution on its own to the general theory of non-delayed MDPs.  
        
Next, we show that the above lower bound is tight (up to a factor of $\abs{\A}$ and logarithmic terms) and $m$A-PI is guaranteed to converge after $\tilde{O}(\abs{\St}\abs{\A}^{m+1}).$ 
A proof is given in Appx.~\ref{sec: augmented upper bound}.
\begin{theorem}[$m$A-PI Convergence]
\label{thm:delay_PI_augmented}
The $m$A-PI algorithm 
converges to the optimal value-policy pair $(\bar{v}^*, \piD^*)$ in at most $\abs{\St}\abs{\A}^m(\abs{\A} - 1)\ceil*{\log\left(\nicefrac{1}{\gamma}\right)^{-1}\log\left(\nicefrac{1}{1-\gamma}\right)}$
iterations. 
\end{theorem}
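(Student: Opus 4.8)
The plan is to treat $m$A-PI as an instance of Howard's policy iteration run on an ordinary discounted MDP, and then apply a general, delay-agnostic iteration bound rather than exploiting any special structure of the queue. By Definition~\ref{def: EDMDP} the $m$-AMDP $(\X_m,\A,F,g,\gamma)$ is a bona fide non-delayed MDP with $N:=\abs{\X_m}=\abs{\St}\abs{\A}^m$ states, $A:=\abs{\A}$ actions and discount $\gamma$, and $m$A-PI is exactly Howard's PI on it. It therefore suffices to prove that Howard's PI on any discounted MDP with $N$ states and $A$ actions terminates after at most $N(A-1)T$ improving iterations, where $T:=\ceil*{\log(\nicefrac{1}{\gamma})^{-1}\log(\nicefrac{1}{1-\gamma})}$; substituting $N=\abs{\St}\abs{\A}^m$ and $A=\abs{\A}$ then yields the stated bound. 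I would also record at the outset the elementary fact that $T$ is chosen precisely so that $\gamma^T\le 1-\gamma$, which is the only place the logarithmic form enters.

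First I would set up the two monotonicity/contraction facts that drive the analysis. Write $v_k:=v^{\piD_k}$ for the value of the $k$-th policy and let $L$ denote the Bellman optimality operator. The greedy step yields policy improvement $v_{k+1}\ge L v_k\ge v_k$, while subtracting from $\bar v^*$ and using that $L$ is a $\gamma$-contraction in $\norm{\cdot}_\infty$ fixing $\bar v^*$ gives the componentwise inequality $0\le \bar v^*-v_{k+1}\le \gamma(\bar v^*-v_k)$. Iterating produces the geometric bound $\norm{\bar v^*-v_k}_\infty\le \gamma^k\norm{\bar v^*-v_0}_\infty$, so the value gap contracts below a factor $1-\gamma$ over every window of $T$ consecutive iterations.

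The heart of the proof, and the step I expect to be the main obstacle, is a combinatorial \emph{crossover} argument that converts this geometric convergence into a finite iteration count without incurring a problem-dependent advantage gap (which would instead give a $\log(1/\Delta)$-type bound). The goal is to show: for each augmented state $x\in\X_m$, once the greedy step ceases to select a strictly sub-optimal action at $x$, that action is never reselected; and that at least one of the at most $N(A-1)$ sub-optimal state--action pairs becomes permanently inactive within every block of $T$ improving iterations. The delicate part is relating the per-coordinate advantage $g(x,a)+\gamma\sum_{x'\in\X_m}F(x'|x,a)\,v_k(x')-v_k(x)$ of a candidate action $a$ to the global gap $\norm{\bar v^*-v_k}_\infty$, and arguing, using $\gamma^T\le 1-\gamma$, that after $T$ steps the advantage of any action an optimal policy would discard has fallen below the threshold at which it could ever again become greedy, so it is eliminated for good. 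I would follow the template of Ye's strongly-polynomial analysis of Howard's PI and its generalization by \cite{scherrer2016improved}, adapting the locking argument to the augmented state space $\X_m$.

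Granting the crossover lemma, the count is immediate: there are at most $N(A-1)$ sub-optimal pairs, each is eliminated at most once and never reinstated, and eliminations occur at a rate of at least one per $T$ iterations, so $m$A-PI performs at most $N(A-1)T$ improving iterations before no further improvement is possible, at which point it has reached the optimal pair $(\bar v^*,\piD^*)$. Substituting $N=\abs{\St}\abs{\A}^m$ and $A=\abs{\A}$ gives the claimed bound $\abs{\St}\abs{\A}^m(\abs{\A}-1)\ceil*{\log(\nicefrac{1}{\gamma})^{-1}\log(\nicefrac{1}{1-\gamma})}$. The last loose end I would tidy is a direct verification that $T$ as defined satisfies $\gamma^T\le 1-\gamma$, closing the reduction.
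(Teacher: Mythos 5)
Your proposal follows essentially the same route as the paper: both cast $m$A-PI as Howard's PI on the augmented MDP $(\X_m,\A,F,g,\gamma)$ and then invoke the analysis of \cite{scherrer2016improved} (value-gap $\gamma$-contraction plus the advantage-based elimination argument showing one sub-optimal state--action pair is permanently discarded every $\ceil*{\log(\nicefrac{1}{\gamma})^{-1}\log(\nicefrac{1}{1-\gamma})}$ iterations), yielding the count $\abs{\X_m}(\abs{\A}-1)$ times that window. The crossover lemma you flag as the main obstacle is precisely the step the paper handles by citing Scherrer's Lemma 10, Lemma 2, and Section 7 adapted to the operators $\bar{T}^{\piD}$ and $\bar{T}$, so your plan is correct and matches the paper's proof.
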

\vspace{-0.45cm}
\section{Execution-Delay MDP: A New Formulation}
\label{section: ed-mdp new formulation}
\vspace{-0.25cm}
In this section, we introduce and study the stochastic process generated by an MDP with execution delay, without resorting to state-augmentation. 
In the ED-MDP we consider, the probability measure changes according to the delay value $m$. We assume that during the $m$ initial steps, actions are sequentially executed according to a \emph{fixed} queue $\bar{a}:= (\bar{a}_0,\cdots, \bar{a}_{m-1}) \in \A^m$. Unlike $m$-AMDPs, the initial queue of pending actions here plays the role of an exogenous variable that is not embedded into the state-space.
A policy $\pi\in\Pi^{\textsc{HR}}$ induces a probability measure $\mathbb{P}^{\pi}_m$ that is defined through a set of equations which, for brevity, we defer to Appx.~\ref{sec: EDMDP formulation}[\eqref{eq: init state distribution}-\eqref{eq: state transition function}].
We note that for $t<m$, decision rules do not depend on the history, while for $t\geq m$, they depend on the history up to $t-m$ only. Let $\mu$ be an initial state distribution and $\delta$ a Dirac distribution. Using this and the notations from Sec.~\ref{sec:prelim}, we can explicitly write the probability of a sample path. See proof in Appx.~\ref{appendix: delayed process distrib}. 

\begin{proposition}
\label{proposition: delayed process distribution}
For policy $\pi := (d_0,d_1,\cdots)\in \Pi^{\textsc{HR}}$, the probability of observing history $h_t :=(s_0, a_0,s_1,a_1 \cdots, a_{t-1}, s_t)$ is given by:
\begin{equation*}
\begin{split}
 &\mathbb{P}^{\pi}_m(\tilde{s}_0 = s_0, \tilde{a}_0 = a_0,\tilde{s}_1 = s_1, \tilde{a}_1 = a_1, \cdots, \tilde{a}_{t-1}  = a_{t-1}, \tilde{s}_t  = s_t) \\
 &=
\mu(s_0)\left(\prod_{k =0}^{m-1}\delta_{\bar{a}_k}(a_k) p(s_{k+1}|s_k, a_k)\right)\left(\prod_{k=m}^{t-1} q_{d_{k-m}(h_{k-m})}(a_k)p(s_{k+1}|s_k, a_k)
  \right). 
\end{split}
\end{equation*}
\end{proposition}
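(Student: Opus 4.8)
The plan is to prove the product formula by induction on the length $t$ of the history, unrolling the recursive definition of $\mathbb{P}^{\pi}_m$ given in Appx.~\ref{sec: EDMDP formulation}[\eqref{eq: init state distribution}--\eqref{eq: state transition function}] one time-step at a time. For the base case $t=0$ the realized history is $h_0=(s_0)$, which contains no executed actions or transitions; the initial-state equation then gives $\mathbb{P}^{\pi}_m(\tilde{s}_0=s_0)=\mu(s_0)$, matching the claimed expression with both action--transition products empty. (For $0<t<m$ the first product is understood to truncate at its highest available index $k=t-1$; the displayed form with upper limit $m-1$ corresponds to the steady-state regime $t\ge m$, and this truncation is produced automatically by the induction.)

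For the inductive step I would assume the formula for $h_t$ and factor the probability of the one-step extension via the chain rule, $\mathbb{P}^{\pi}_m(h_{t+1})=\mathbb{P}^{\pi}_m(h_t)\,\mathbb{P}^{\pi}_m(\tilde{a}_t=a_t\mid\tilde{h}_t=h_t)\,\mathbb{P}^{\pi}_m(\tilde{s}_{t+1}=s_{t+1}\mid\tilde{h}_t=h_t,\tilde{a}_t=a_t)$. The rightmost factor is supplied directly by the state-transition equation and equals $p(s_{t+1}\mid s_t,a_t)$ for every $t$. The middle factor is the action-selection probability, and this is where the delay enters: when $t<m$ the action executed at step $t$ is fixed by the initial queue, so the defining equations yield $\delta_{\bar{a}_t}(a_t)$, contributing exactly the $k=t$ term of the first product; when $t\ge m$ the action executed at step $t$ is the one committed $m$ steps earlier, whose law is $q_{d_{t-m}(h_{t-m})}(a_t)$, contributing the $k=t$ term of the second product. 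Multiplying the inductive hypothesis by these two factors reproduces the formula for $h_{t+1}$.

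The only point demanding care --- effectively the crux of the argument --- is the bookkeeping of the delay offset and the transition between the two regimes. One must read off from the defining equations that the action realized at time $t\ge m$ is distributed according to the decision rule $d_{t-m}$ evaluated at the \emph{past} history $h_{t-m}$ rather than the current $h_t$, and check that the switch at $t=m$ hands the index-$k=m$ factor cleanly from the Dirac product to the decision-rule product. Because the transition factor $p(s_{k+1}\mid s_k,a_k)$ is identical in both regimes and only the action factor changes form, the two displayed products are precisely the accumulated action--transition weights over the burn-in phase $k<m$ and the delayed-decision phase $k\ge m$, so no further estimation is required.
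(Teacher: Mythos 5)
Your proposal is correct and takes essentially the same route as the paper: the paper's proof applies the chain-rule factorization \eqref{eq: definition conditional} to the entire history at once and then substitutes the defining equations \eqref{eq: init state distribution}--\eqref{eq: state transition function}, which is exactly what your induction produces when unrolled step by step. In particular, the regime split at $k=m$ (Dirac factors $\delta_{\bar{a}_k}$ for $k<m$, decision-rule factors $q_{d_{k-m}(h_{k-m})}$ for $k\geq m$) and the identification of the chain-rule factor $\mathbb{P}^{\pi}_m(\tilde{a}_t = a_t \mid \tilde{h}_t = h_t)$ with the kernel specified via $h_{t-m}$ are treated identically in both arguments.
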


From Prop.~\ref{proposition: delayed process distribution} we deduce that, differently than the standard MDP setting where any Markov policy induces a Markov process, the delayed process is not Markovian even for stationary policies (see Appx.~\ref{appx: remark markov property} for a formal proof). 
Next, we show that for any history-dependent policy and starting state, there exists a Markov policy (not necessarily stationary) that generates the same process distribution. Consequently, despite execution delay, one can restrict attention to Markov policies without impairing performance. 

\begin{theorem}
\label{theorem: markov policy is sufficient}
Let $\pi \in \Pi^{\textsc{HR}}$ be a history dependent policy. For all $s_0\in\St$, there exists a Markov policy $\pi' \in \Pi^{\textsc{MR}}$  that yields the same process distribution as $\pi$, i.e., 
${
   \mathbb{P}^{\pi'}_m(\tilde{s}_{t-m}  = s', \tilde{a}_t  = a | \tilde{s}_0  = s_0) = 
\mathbb{P}^{\pi}_m(\tilde{s}_{t-m}  = s', \tilde{a}_t  = a | \tilde{s}_0  = s_0), \qquad \forall a\in\A, s'\in \St,t\geq m.
}$
\end{theorem}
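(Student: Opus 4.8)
The plan is to mimic the classical argument that Markov policies suffice in non-delayed MDPs (\citep{puterman2014markov}, Thm.~5.5.1), adapting it to the $m$-step gap between when an action is committed and when it is executed. Throughout I would condition on the fixed start state $\tilde s_0=s_0$, since both the construction below and the final independence argument rely on it.

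First I would construct the witness $\pi'=(d_0',d_1',\dots)\in\Pi^{\textsc{MR}}$ explicitly. Because the action executed at time $t$ is committed at decision time $t-m$ from the state $\tilde s_{t-m}$, the natural candidate is the state-conditional law of that action under $\pi$: for every $j\ge 0$ and every $s$ with $\mathbb{P}^{\pi}_m(\tilde s_j=s\mid\tilde s_0=s_0)>0$, set $q_{d_j'(s)}(a):=\mathbb{P}^{\pi}_m(\tilde a_{j+m}=a\mid \tilde s_j=s,\tilde s_0=s_0)$, and define $d_j'$ arbitrarily on unreachable states. Next I would reduce the claim to agreement of one-dimensional marginals. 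Since $\pi'$ is Markov, the executed action depends on the past only through the committing state, so $\mathbb{P}^{\pi'}_m(\tilde s_{t-m}=s',\tilde a_t=a\mid \tilde s_0=s_0)=\mathbb{P}^{\pi'}_m(\tilde s_{t-m}=s'\mid\tilde s_0=s_0)\,q_{d_{t-m}'(s')}(a)$; the same factorization holds for $\pi$ by the definition of conditional probability together with the choice of $d'$. Hence the theorem follows once I establish that the decision-time marginals coincide, i.e.\ $\mathbb{P}^{\pi'}_m(\tilde s_j=s\mid\tilde s_0=s_0)=\mathbb{P}^{\pi}_m(\tilde s_j=s\mid\tilde s_0=s_0)$ for all $j\ge 0$.

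I would prove this marginal agreement by induction on $j$, reading the transition structure directly off Prop.~\ref{proposition: delayed process distribution}. For $j\le m$ the law of $\tilde s_j$ is generated solely by $\mu$ and the fixed queue $\bar a$, hence is policy-independent, so the base case is immediate. For the inductive step I would push the marginal forward one step, $\mathbb{P}_m(\tilde s_{j+1}=s''\mid\tilde s_0=s_0)=\sum_{s,a}\mathbb{P}_m(\tilde s_j=s,\tilde a_j=a\mid\tilde s_0=s_0)\,p(s''\mid s,a)$, and then rewrite the joint law of the current state $\tilde s_j$ and the action $\tilde a_j$ currently being executed by conditioning on the committing state $\tilde s_{j-m}$.

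The hard part, and the step I expect to be the main obstacle, is precisely this rewriting. The executed action $\tilde a_j$ was committed $m$ steps earlier, so it is governed not by $\tilde s_j$ but by $\tilde s_{j-m}$, whereas $\tilde s_j$ is produced by the queue of actions that were committed in between; the pairing $(\tilde s_j,\tilde a_j)$ needed by the marginal recursion is therefore \emph{misaligned} with the clean pairing $(\tilde s_{j-m},\tilde a_j)$ that $d'$ reproduces by construction. To close the recursion I must show that, conditioned on $\tilde s_{j-m}$, the committed action $\tilde a_j$ is independent of the intermediate evolution that carries $\tilde s_{j-m}$ to $\tilde s_j$; this is what collapses the history dependence of $\pi$ onto the already-matched state-conditional rules $d'_{\le j-m}$ and the marginals at times $\le j$, leaving a right-hand side that depends on $\pi$ only through quantities equal to their $\pi'$ counterparts. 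I expect this conditional-independence/decoupling argument to be where the fixed initial queue, the freshness of the decision randomness at each commit time, and the Markov form of $p$ all enter; once it is in place the induction closes and, together with the factorization of the second paragraph, yields the stated equality of $(\tilde s_{t-m},\tilde a_t)$ for all $t\ge m$.
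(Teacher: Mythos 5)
Your construction of $\pi'$ (matching the state--conditional law of the committed action), your factorization reducing the theorem to equality of the state marginals $\mathbb{P}^{\pi}_m(\tilde{s}_j=s\mid\tilde{s}_0=s_0)=\mathbb{P}^{\pi'}_m(\tilde{s}_j=s\mid\tilde{s}_0=s_0)$, and your base case $j\le m$ are all correct, and they coincide exactly with the paper's own construction, reduction, and induction. The genuine gap is the step you yourself flag as the main obstacle and leave unproven: that under the history-dependent policy $\pi$, conditioned on $\tilde{s}_{j-m}$ (and $\tilde{s}_0$), the committed action $\tilde{a}_j$ is independent of the intermediate evolution carrying $\tilde{s}_{j-m}$ to $\tilde{s}_j$. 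This cannot be deferred as a technicality, because it is false in general: $\tilde{a}_j\sim q_{d_{j-m}(\tilde{h}_{j-m})}$ may depend on components of $\tilde{h}_{j-m}$ other than $\tilde{s}_{j-m}$, and those same components also determine the pending actions $\tilde{a}_{j-m},\dots,\tilde{a}_{j-1}$ that drive the state to $\tilde{s}_j$, so conditioning on $\tilde{s}_{j-m}$ alone does not sever the common dependence.

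Concretely, take $m=1$, $\St=\{0,1\}$, $\A=\{a^{(0)},a^{(1)},b\}$, transitions $P(\cdot\mid s,b)$ uniform and $P(s'\mid s,a^{(i)})=\delta_{s\oplus i}(s')$, initial state $s_0=0$, initial queue $\bar{a}=(b)$, and $\pi$ given by $d_0\equiv\delta_b$, $d_1(h_1)=\delta_{a^{(s_1)}}$, $d_2(h_2)=\delta_{a^{(s_1\oplus s_2)}}$. Then $\tilde{s}_1,\tilde{s}_2$ are independent fair coins, $\tilde{a}_2=a^{(\tilde{s}_1)}$, $\tilde{s}_3=\tilde{s}_1\oplus\tilde{s}_2$, and $\tilde{a}_3=a^{(\tilde{s}_1\oplus\tilde{s}_2)}$; hence $\tilde{a}_3$ and $\tilde{s}_3$ are perfectly correlated given $\tilde{s}_2$ (your decoupling claim fails), and $\tilde{s}_4=\tilde{s}_3\oplus(\tilde{s}_1\oplus\tilde{s}_2)=0$ almost surely. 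Worse, the failure dooms the route and not merely the lemma: matching the theorem's joints at $t=1,2,3$ forces $d_0'(0)=\delta_b$, $d_1'(s)=\delta_{a^{(s)}}$, and $d_2'(s)$ uniform on $\{a^{(0)},a^{(1)}\}$, and under that (unique) candidate $\pi'$ one gets $\tilde{s}_4=\tilde{s}_3\oplus B$ with $B$ an independent fair coin, so $\mathbb{P}^{\pi'}_{1}(\tilde{s}_4=0\mid\tilde{s}_0=0)=\nicefrac{1}{2}\neq 1=\mathbb{P}^{\pi}_{1}(\tilde{s}_4=0\mid\tilde{s}_0=0)$; the joint at $t=5$ therefore cannot be matched by any Markov policy, so no argument can close your induction. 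You should also know that the paper's own proof leaps over exactly this point: after the Euclidean-division step it asserts $\mathbb{P}^{\pi}_m(\tilde{a}_{km+r}=a\mid\tilde{s}_{km+r}=s'',\tilde{s}_0=s)=\mathbb{P}^{\pi}_m(\tilde{a}_{km+r}=a\mid\tilde{s}_0=s)$ on the grounds that $\tilde{a}_{km+r}$ depends only on $\tilde{h}_{(k-1)m+r}$, and the same example violates that equality too, since $\mathbb{P}^{\pi}_{1}(\tilde{a}_3=a^{(i)}\mid\tilde{s}_3=j,\tilde{s}_0=0)=\mathbbm{1}[i=j]\neq\nicefrac{1}{2}$. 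So your proposal faithfully reproduces the paper's strategy, and the step you could not fill is precisely where that strategy breaks; repairing it would require either granting the Markov policy a richer information state (e.g., the pending-action queue, as in the augmentation approach) or weakening the matching claim.
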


The proof is given in Appx.~\ref{appendix: markov policy is sufficient}. It builds on the concept that for each history-dependent policy $\pi \in \Pi^{\textsc{HR}}$, one can choose a sequence of Markov decision rules that reconstruct the same time-dependent action distribution in the process induced by $\pi.$ 

This result proves attainability of the optimum over $\Pi^{\textsc{MR}}$, but not how one can efficiently find an optimal policy. In Appx.~\ref{appx: delayed value function}, \eqref{def: delayed value function}, we formally define the \emph{delayed value function} $v^{\mu_0:\mu_{m-1},\pi}_{m}$ for policy $\pi$ and initial action distribution queue $\mu_0:\mu_{m-1}:=(\mu_0,\dots,\mu_{m-1}).$ In Thm.~\ref{thm: delayed value function} there, we show that it satisfies a non-stationary Bellman-type recursion. Though the question of how to efficiently find an optimal non-stationary Markov policy remains generally open, we partially answer it by proving that a deterministic Markov policy is sufficient for the optimal delayed value function.
\begin{theorem}
\label{thm: optimal deterministic policy}
For any action distribution queue $\mu_0:\mu_{m-1}:=(\mu_0,\dots,\mu_{m-1})$ and $s_0\in \St,$
$$
\max_{\pi \in \Pi^{\textsc{MD}}} v^{\mu_0:\mu_{m-1},\pi}_{m}= \max_{\pi \in \Pi^{\textsc{MR}}}v^{\mu_0:\mu_{m-1},\pi}_{m}.
$$
\end{theorem}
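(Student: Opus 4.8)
The plan is to prove the two inequalities separately. The direction $\max_{\pi\in\Pi^{\textsc{MD}}} v^{\mu_0:\mu_{m-1},\pi}_{m}\le \max_{\pi\in\Pi^{\textsc{MR}}} v^{\mu_0:\mu_{m-1},\pi}_{m}$ is immediate from the inclusion $\Pi^{\textsc{MD}}\subseteq\Pi^{\textsc{MR}}$, so the entire content lies in the reverse inequality: every randomized Markov policy is dominated by some deterministic one.

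My first step is to identify a randomized Markov policy with its collection of action distributions. A policy $\pi=(d_0,d_1,\dots)\in\Pi^{\textsc{MR}}$ is fully specified by the family $\{\lambda_{t,s}\}_{t\ge 0,\,s\in\St}$, where $\lambda_{t,s}:=q_{d_t(s)}\in\Delta_{\A}$; the deterministic policies $\Pi^{\textsc{MD}}$ are exactly those for which every $\lambda_{t,s}$ is a vertex of $\Delta_{\A}$, i.e.\ a Dirac mass. The key structural observation, which I would extract directly from Prop.~\ref{proposition: delayed process distribution}, is that $v^{\mu_0:\mu_{m-1},\pi}_{m}(s_0)$ is \emph{separately affine} in each coordinate $\lambda_{t,s}$. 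Indeed, in the path probability the only policy-dependent factors are the terms $q_{d_{k-m}(s_{k-m})}(a_k)$, and for a fixed sample path and a fixed pair $(t,s)$ the coordinate $\lambda_{t,s}$ enters at most once, namely as the factor $\lambda_{t,s}(a_{t+m})$ when $s_t=s$, while the rewards $r(s_t,a_t)$ do not depend on the policy at all. Summing the discounted reward of each path against its probability therefore yields a value that is affine in $\lambda_{t,s}$ whenever the remaining coordinates are held fixed.

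Given this multi-affine structure, the second step is the optimization argument. Since maximizing an affine function over the simplex $\Delta_{\A}$ attains its maximum at a vertex, holding all other coordinates fixed I can replace any $\lambda_{t,s}$ by a Dirac mass without decreasing the value. To turn this coordinatewise fact into a single dominating deterministic policy across the infinite horizon, I would invoke the non-stationary Bellman-type recursion for the delayed value function (Thm.~\ref{thm: delayed value function}). Writing the optimal delayed value at each stage, the recursion exhibits the dependence on $d_t$ through a single expectation that is affine in $q_{d_t(\cdot)}$; because this maximization decouples across states $s$, a greedy \emph{deterministic} decision rule is optimal at each stage, and the discount $\gamma<1$ renders the associated operator a contraction, so that value iteration converges and the stagewise greedy deterministic rules assemble into a policy $\pi'\in\Pi^{\textsc{MD}}$ attaining the supremum.

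I expect the main obstacle to be the bookkeeping induced by the delay rather than the convexity itself: unlike the undelayed case, the decision taken at $(t,s)$ influences the reward and transition only $m$ steps later, so I must verify carefully that each action distribution still enters both the value and the recursion affinely, and that the per-state maximizations remain decoupled despite this temporal offset. The subsidiary technical point is justifying the passage from the coordinatewise vertex argument to a globally optimal deterministic policy over the infinite horizon; I would settle it through the contraction property of the delayed Bellman operator, equivalently by a finite-horizon truncation whose tail is bounded by $\gamma^{T}\|r\|_\infty/(1-\gamma)$ together with a limiting argument, thereby avoiding any appeal to state-augmentation.
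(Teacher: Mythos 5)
Your core argument coincides with the paper's own proof: there, too, the delayed value is expanded as an explicit path sum (Eq.~\eqref{eq: explicit delayed value}, built from Prop.~\ref{proposition: delayed process distribution}), observed to be affine in each action distribution $q_{d_t(s)}$, and the distributions are replaced one coordinate at a time by Dirac masses at maximizing actions, using exactly your ``convex combination $\leq$ max over vertices'' step (invoked in the paper as \citep{puterman2014markov}[Lemma 4.3.1]). The genuine gap is in your primary mechanism for assembling these coordinatewise replacements into one deterministic policy. There is no ``delayed Bellman operator'' acting on functions of $s$ alone whose contraction property you can invoke: the delayed process is not Markov in $s$ even under stationary policies (Appx.~\ref{appx: remark markov property}), and the relation of Thm.~\ref{thm: delayed value function} is not a self-map --- it expresses $v^{\mu_0:\mu_{m-1},\pi}_{m}(s_0)$ in terms of continuation values $v^{d_0(s_0):d_{m-1}(s_{m-1}),\pi_{m:}}_{m}(s_m)$ whose queue argument changes with the policy and with the realized intermediate states $s_0,\dots,s_{m-1}$, and it closes into a recursion only for $m$-periodic policies (the corollary following that theorem). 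To iterate it as an operator, and hence to speak of contraction and value iteration, you must carry the queue as part of the argument, i.e., work on $\St\times\Delta_{\A}^m$ --- precisely the augmentation you set out to avoid. Moreover, the greedy choice at stage $t$ does not decouple from stages $t+1,\dots,t+m-1$ in the dynamic-programming sense, since those decisions jointly determine the continuation queue; what does hold is only the weaker per-coordinate statement (optimize $\lambda_{t,s}$ with \emph{all other coordinates of the policy held fixed}), which is what the paper actually uses.

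Your fallback, by contrast, is sound and is essentially the rigorous version of what the paper does informally: truncate at horizon $T$, note that the truncated value depends on finitely many coordinates $\lambda_{t,s}$ and is affine in each, replace them one at a time by vertices (the paper performs this sequentially in time: first $d_0(s_0)$, then $d_1(s_1)$ for each $s_1$, and so on, with the tail of replacements controlled by the discount), and use the bound $\gamma^{T}\norm{r}_\infty/(1-\gamma)$ to conclude $\sup_{\pi'\in\Pi^{\textsc{MD}}} v^{\mu_0:\mu_{m-1},\pi'}_{m}(s_0) \geq v^{\mu_0:\mu_{m-1},\pi}_{m}(s_0) - \epsilon_T$ for every $T$, hence the reverse inequality. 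So the fix is simple: drop the contraction/value-iteration claim --- it is not ``equivalent'' to the truncation argument, since it is not even well defined without augmentation --- and finish with the truncation-plus-tail argument or the paper's direct sequential replacement.
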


\begin{wrapfigure}{R}{.45\textwidth}
    \centering
    \includegraphics[scale=0.26]{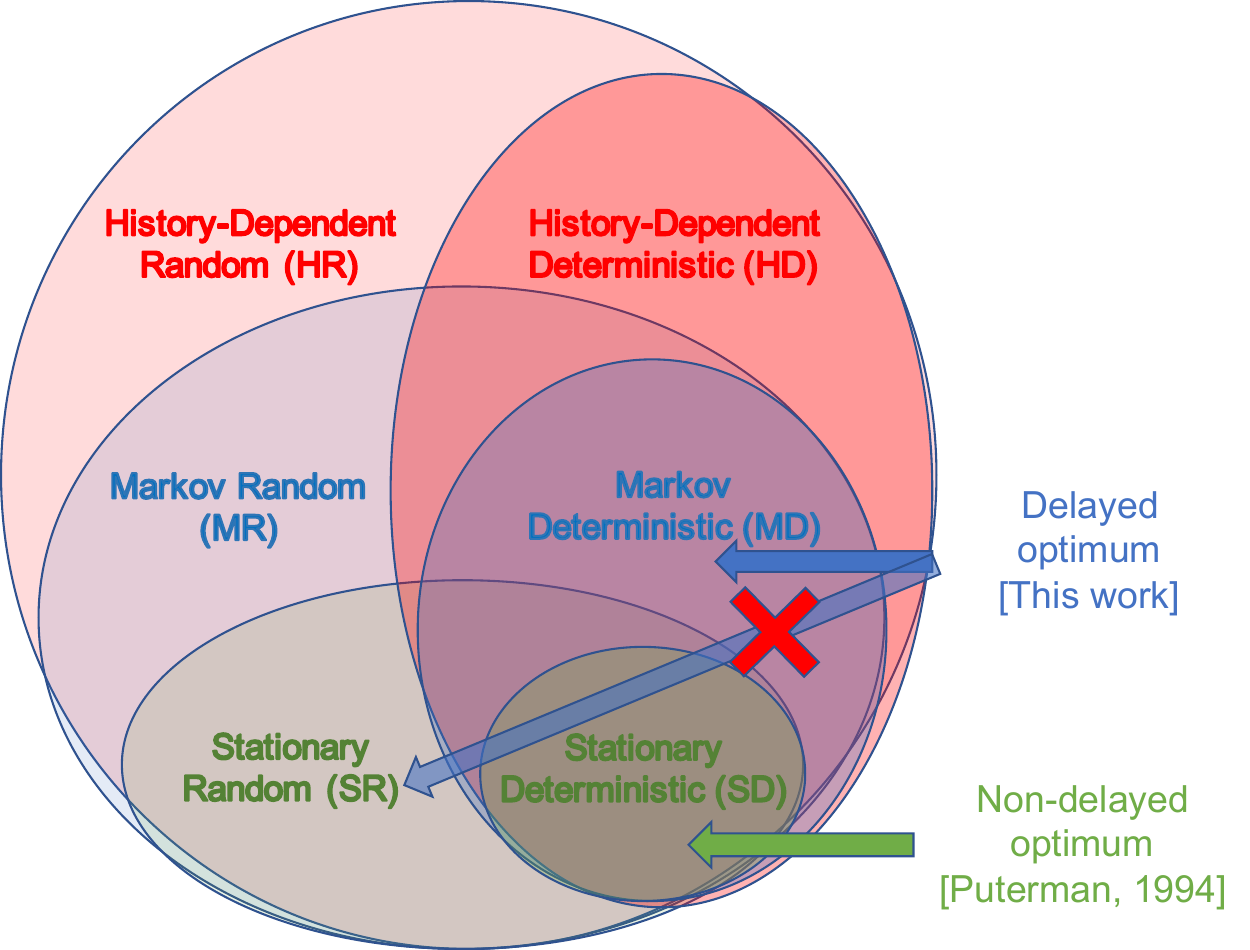}
    \caption{
    Optimality of policy types in ED-MDPs: Markovness is sufficient 
    but non-stationarity 
    is necessary.
    }
    \label{fig:diagram policies}
\end{wrapfigure}

\textbf{Degradation due to stationarity.} To complement the finding that a deterministic Markov policy can be optimal for any ED-MDP, we show that restricting to stationary policies impairs performance in general. Thus, while in non-delayed MDPs it is enough to focus on the latter, in ED-MDPs the restriction should be to the more general class of Markov policies.
\begin{proposition}
\label{prop: degradation due to stationarity}
There exists an $m$-ED-MDP for which all stationary policies are sub-optimal.
\end{proposition}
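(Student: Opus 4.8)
The plan is to construct one explicit ED-MDP with delay $m=1$ in which the optimal strategy is to commit a \emph{deterministically alternating} sequence of executed actions, a pattern a time-indexed Markov rule reproduces exactly but no state-indexed stationary rule can, because the base state is only a noisy proxy for which action should be executed next. Concretely, take $\St=\{0,1\}$, $\A=\{0,1\}$, $\gamma\in(0,1)$, and a noise level $\epsilon\in(0,\tfrac12)$. Let the transition depend only on the executed action, $p(s'\mid s,a)=1-\epsilon$ if $s'=a$ and $\epsilon$ otherwise (executing $a$ steers the next state to $a$ up to noise), and let the reward pay for \emph{moving}, $r(s,a)=\mathbbm{1}\{a\neq s\}$. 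Fix the initial queue $\bar a_0=0$ and $s_0=1$. Observe that without delay this MDP is trivially solved by the stationary rule $d(s)=1-s$ (reward $1$ at every step), so any degradation below will be a pure delay effect.

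The first step is a one-step bound valid for \emph{every} policy. For $\pi\in\Pi^{\textsc{HR}}$ and $t\ge1$, the action $\tilde a_t$ executed at time $t$ was committed at $t-1$ and is thus measurable w.r.t.\ $\tilde h_{t-1}$, whereas $\tilde s_t$ equals the previously executed action $\tilde a_{t-1}$ with probability $1-\epsilon$. Hence $\mathbb{E}^{\pi}_m[r(\tilde s_t,\tilde a_t)\mid \tilde h_{t-1}]=(1-\epsilon)\mathbbm{1}\{\tilde a_t\neq\tilde a_{t-1}\}+\epsilon\,\mathbbm{1}\{\tilde a_t=\tilde a_{t-1}\}\le 1-\epsilon$, with equality iff consecutive executed actions differ, so $v^{\pi}\le r(s_0,\bar a_0)+(1-\epsilon)\gamma/(1-\gamma)$ for all $\pi$. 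The bound is attained by the deterministic non-stationary rule $d_t\equiv(t+1)\bmod 2$, which commits $(t+1)\bmod 2$ ignoring the state and thereby forces $\tilde a_t=t\bmod 2$: consecutive executed actions always differ (consistent with $\bar a_0=0$), so equality holds at every $t\ge1$. This policy lies in $\Pi^{\textsc{MD}}$, so it is optimal and $v^{*}=r(s_0,\bar a_0)+(1-\epsilon)\gamma/(1-\gamma)$, in agreement with Thm.~\ref{thm: optimal deterministic policy}.

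It remains to show every stationary policy incurs a strict, uniform deficit, which rests on two observations. First, under \emph{any} policy, $\mathbb{P}^{\pi}_m(\tilde s_{t-1}=\tilde s_{t-2})\ge\epsilon$ for every $t\ge2$, since given $\tilde s_{t-2}$ and the executed $\tilde a_{t-2}$ the next state repeats with probability $1-\epsilon$ or $\epsilon$, in either case at least $\epsilon$. Second, on the event $\{\tilde s_{t-2}=\tilde s_{t-1}=s\}$ the two committed actions $\tilde a_{t-1}$ and $\tilde a_t$ are, for a \emph{stationary} rule $d$, independent draws from the \emph{same} distribution $d(\cdot\mid s)$, so they differ with probability $2\,d(0\mid s)\,d(1\mid s)\le\tfrac12$; consequently $\mathbb{E}^{\pi}_m[r(\tilde s_t,\tilde a_t)\mid \tilde s_{t-2}=\tilde s_{t-1}]\le\tfrac12<1-\epsilon$. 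Combining the two gives $\mathbb{E}^{\pi}_m[r(\tilde s_t,\tilde a_t)]\le(1-\epsilon)-(\tfrac12-\epsilon)\epsilon$ for every $t\ge2$, whence $v^{\pi}\le v^{*}-(\tfrac12-\epsilon)\epsilon\,\gamma^{2}/(1-\gamma)<v^{*}$, uniformly over $\Pi^{\textsc{SR}}$.

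The delicate point, and the main obstacle, is exactly this last step: ruling out \emph{all} stationary policies with a strict inequality, including randomized ones, rather than merely checking the four deterministic rules. The resolution is the pair of observations above; the key idea is that the base state coincides at two consecutive times with a probability bounded below independently of the policy (because $\epsilon>0$), and on that event a stationary rule is forced to sample the two relevant committed actions i.i.d.\ from one distribution, which makes perfect anti-alternation impossible. I expect the residual arithmetic (the exact conditional expectations and the geometric sum) to be routine, and the same construction to extend to any $m\ge1$ verbatim, since the one-step bound only relates $\tilde s_t$ to $\tilde a_{t-1}$ and is insensitive to how far in advance actions are committed.
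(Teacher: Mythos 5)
Your proof is correct, and it takes a genuinely different route from the paper's. The paper establishes the proposition on Example~\ref{exmp: 2-state MDP}: it first argues (via Prop.~\ref{prop: degredation} and Thm.~\ref{thm: optimal deterministic policy}) that one may restrict attention to \emph{deterministic} stationary and Markov policies, and then exhaustively searches $\Pi^{\textsc{SD}}$ and $\Pi^{\textsc{MD}}$ by simulation (with $p=0.8$, $\gamma=0.5$, horizon $T=10$), reporting in Table~\ref{table: stationarity deg} that the best non-stationary Markov policy strictly beats the best stationary one (see Appx.~\ref{sec: degradation stationarity}). Your argument is instead fully analytic, on a different two-state construction (transitions determined by the executed action, reward for moving): a per-step upper bound of $1-\epsilon$ valid for \emph{every} history-dependent policy, attained exactly by an alternating non-stationary Markov deterministic policy, followed by a uniform per-step deficit of at least $(\tfrac12-\epsilon)\epsilon$ for every stationary policy, obtained from two clean facts --- the state repeats at consecutive times with probability at least $\epsilon$, and on that event a stationary rule draws the two relevant committed actions i.i.d.\ from one distribution, so they differ with probability at most $\tfrac12$. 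What your approach buys: a self-contained pen-and-paper proof, an explicit quantitative gap $(\tfrac12-\epsilon)\epsilon\,\gamma^{2}/(1-\gamma)$, and direct coverage of \emph{randomized} stationary policies, which the paper's exhaustive search does not test (it excludes them by a separate argument). What the paper's approach buys: reuse of the already-analyzed example and, as a by-product, a table quantifying how the stationarity gap grows with $m$. Two small imprecisions you should repair: (i) in your first step, $\tilde a_t$ is not measurable w.r.t.\ $\tilde h_{t-1}$ for randomized policies --- it is \emph{drawn} conditionally on $\tilde h_{t-1}$, independently of $\tilde s_t$ --- but the bound survives because the conditional expected reward is then a convex combination of $1-\epsilon$ and $\epsilon$; (ii) in your last step, the i.i.d.\ property of $(\tilde a_{t-1},\tilde a_t)$ holds conditionally on the full history $\tilde h_{t-1}$, so you should note that the bound $2\,d(0\mid s)\,d(1\mid s)\le\tfrac12$ holds pointwise on every history in the event $\{\tilde s_{t-2}=\tilde s_{t-1}\}$ and only then average over that event. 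Neither issue affects the conclusion.
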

\vspace{-0.25cm}
This result follows from computing the optimal return for stationary and non-stationary policies in the ED-MDP from Example~\ref{exmp: 2-state MDP} using simulation. 
We elaborate on this further in Appx.~\ref{sec: degradation stationarity}. There, we also confirm that our theoretical return from Prop.~\ref{prop: degredation} matches closely with simulation. Lastly, a visualization of the results from this section is given in Fig.~\ref{fig:diagram policies}.

 \vspace{-0.25cm}
\section{A New Algorithm: Delayed-Q}
\label{sec: algorithm}

\begin{wrapfigure}{R}{.55\textwidth}
\vspace{-0.355cm}
 \centering
    \includegraphics[scale=0.17]{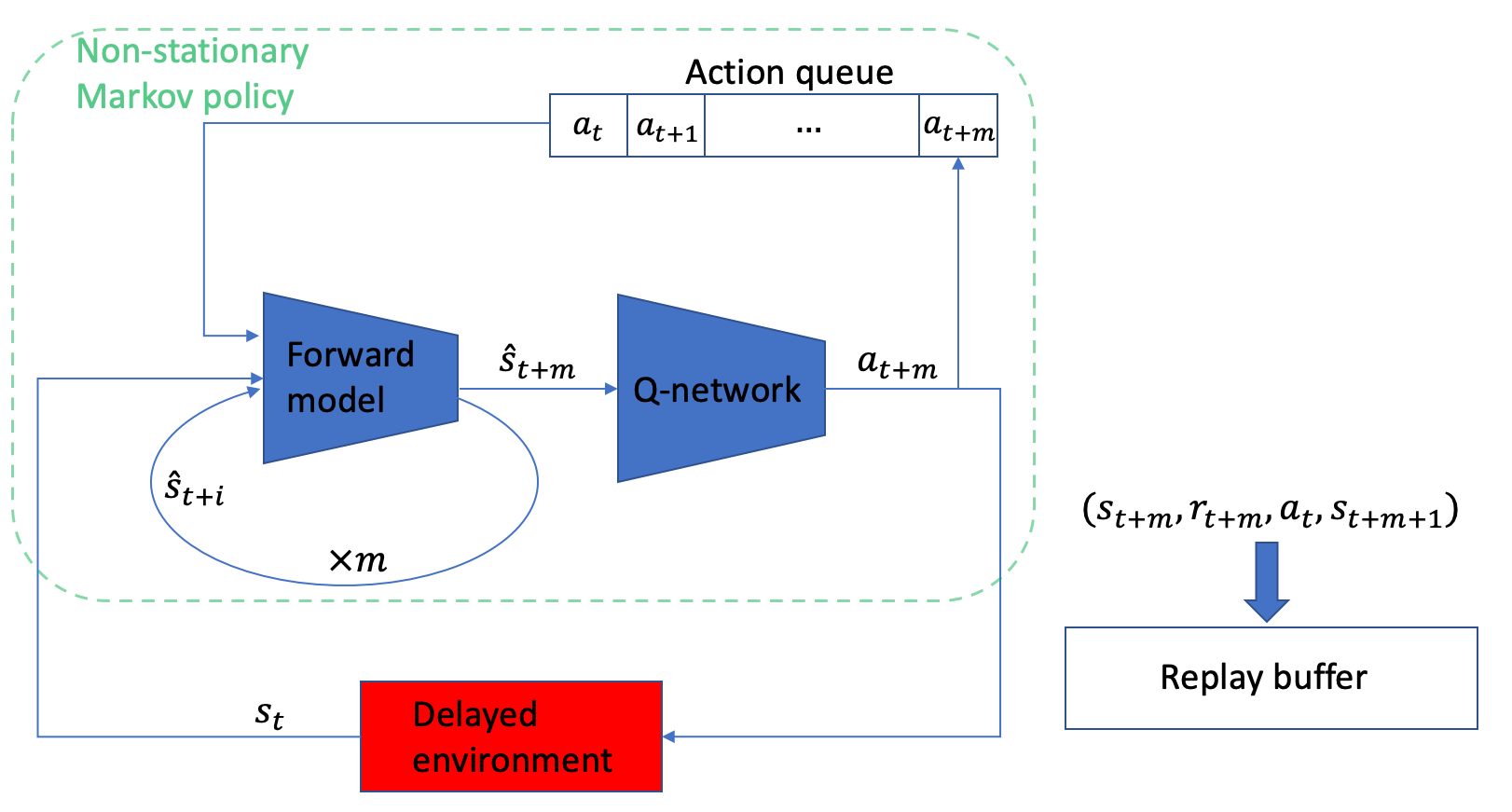}
    \captionof{figure}{Delayed-Q algorithm diagram.}
    \label{fig:delayed-q diagram}
\end{wrapfigure}

\vspace{-0.25cm}
We now introduce an algorithm capable of successfully handling tasks with execution delay by inferring the future $m$-step state before each decision.\newline
{\bf Algorithm Description.} Fig.~\ref{fig:delayed-q diagram} depicts the algorithm. As a first stage, to select an action $a_t$ to be executed in a future state $s_{t+m}$, we infer that future state $\hat{s}_{t+m}$ using the current state $s_t$ and the queue of pending actions $(a_{t-m},\dots,a_{t-1}).$ This is done by successively applying an approximate forward model $m$ times: $\hat{s}_{t+1}=f(s_t,a_{t-m}), \dots,\hat{s}_{t+m}=f(\hat{s}_{t+m-1},a_{t-1}).$ More details on the forward models are given in Sec.~\ref{section: experiments}. The approximate model here is simpler than other model-based algorithms such as tree-search methods, because it does not require access to the reward function. Also, only a single trajectory is sampled rather than exponentially many w.r.t. the horizon length. We do note this method benefits from the environment not being entirely stochastic \citep{walsh2009learning}. Still, as we show next, it performs well even on noisy environments. 
As a second stage, we select an action according to a policy $a_t=\pi(\hat{s}_{t+m}).$ The two stages of this procedure can be represented as a non-stationary Markov policy $\pi_t(s_t),$ where the non-stationarity stems from the time-dependency of the action queue, and the Markov property from the policy being applied on $s_t$ and no prior history. Notably, the Q-function here does not take past actions as input, contrarily to the augmentation approach in Sec.~\ref{section:med}. To better stress the non-stationarity, we note that applying the policy on the same state at different times can output different actions. Lastly, for training, we maintain a sample-buffer of length $m$ which we use to shift action $a_t$ into the tuple $(s_{t+m}, r_{t+m}, a_t, s_{t+m+1})$ prior to each insertion to the replay buffer.
During the course of this work, we also experimented with a model-free variant. Instead of `un-delaying' the Q-function with the forward-model, we defined a delayed Q-function trained on sequences whose actions were shifted $m$ steps forward. However, the obtained results were unsatisfactory, seemingly because the Q-function is unable to implicitly learn the $m$-step transition. 

{\bf Point-Estimate Approaches.} 
For completeness, we mention alternatives to using a `most-likely' state estimate, such as an expected future state. To demonstrate why point-estimate prediction can be devastating, consider an MDP where $s=(x,t)$: position and time, respectively. Starting from $s_0=(0,0)$, $t$ progresses deterministically, while $x$ behaves like a random walk with momentum; \ie if $x>0$, then $x+1$ is more likely than $x-1$, and vice versa. The process obviously diverges with time. 
Consider two actions: one is good when $|x|$ is big, and the other when $|x|$ is small. For a large delay $m,$ the PDF of the state is bi-modal and symmetric around $(Z,m)$ and $(-Z,m)$ for some finite $Z.$ But, a point estimate (\eg ML or MAP) would yield a value of $(0,m)$. In addition to this example, we observe that in our Ex.~\ref{exmp: 2-state MDP}, any alternative to a `most-likely' state estimate is worse: there, the optimal policy applies actions based on the most-likely state (see proof of Prop.~\ref{prop: degredation}), while it is easy to see that any other policy weighing future state probabilities leads to lower reward.

\vspace{-0.4cm}
\section{Experiments}
\label{section: experiments}
\begin{wrapfigure}{R}{.37\textwidth}
\vspace{-0.55cm}
 \centering
    \includegraphics[scale=0.2]{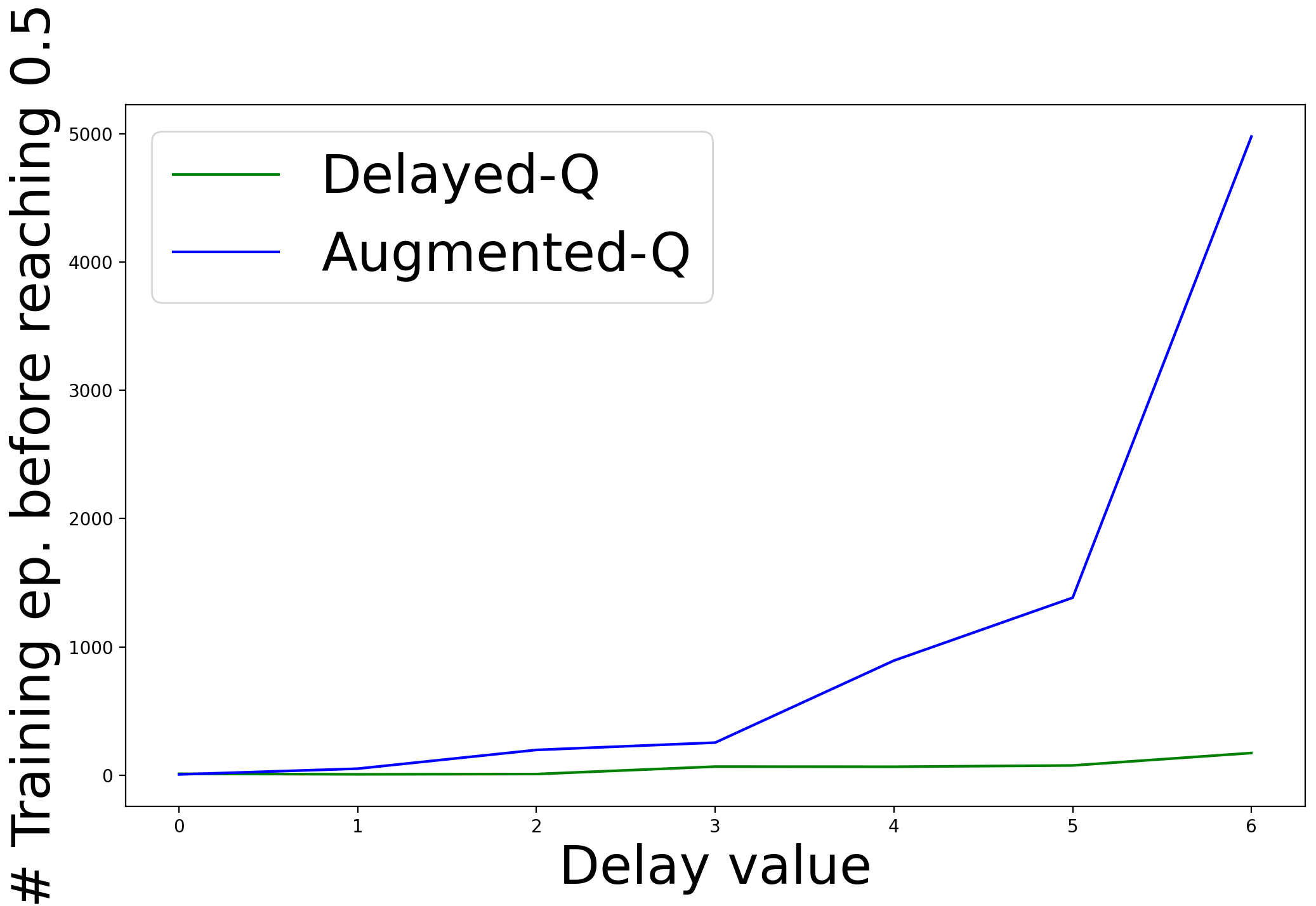}
    \captionof{figure}{Maze: Time complexity as a function of $m$}
    \label{fig:exp vs linear growth}
\end{wrapfigure}
\vspace{-0.25cm}
We perform experiments in a wide range of domains: tabular, physical, and image-based Atari. All of them include stochasticity: In the maze we inject noise to actions; in the physical domains we perturb the masses at each step; and Atari is stochastic by nature. We compare our algorithm with two baselines: \emph{Oblivious-Q} and \emph{Augmented-Q}. Oblivious-Q is the standard Q-learning that ignores delay and assumes each decision to be immediately executed. Augmented-Q acts on the $m-$AMDP introduced in Def.~\ref{def: EDMDP}. We test all domains on delays $m\in \{0, 5, 15, 25\}$ with 5 seeds per each run. All results are summarized in Fig.~\ref{fig: exp barplots}, and are provided in more detail with std. in Appx.~\ref{sec: atari table}, Table~\ref{table: exp summary}. 

{\bf Tabular Maze Domain.} 
We begin with testing Delayed-Q on a Maze domain \citep{openai}[\url{tinyurl.com/y34tmfm9}]. It is based on  tabular Q-learning and enables us to study the merits of our method decoupled from the coming DDQN added complexities. Moreover, it conveys the exponential complexity of Augmented-Q. The forward-model we construct is naturally tabular as well: it predicts a state $s'$ according to the highest visitation frequency given $(s,a)$.
The objective in Maze is to find the shortest path from a start position to a goal state in a randomly-generated $N\times N$ maze. 
Reaching the goal yields a reward of $1,$ and $-1/(10N^2)$ per step otherwise. The maximal episode length is $10N^2$ steps, so the cumulative reward is in $[-1,1]$. We also create a Noisy Maze environment that perturbs each action w.p. $p \in [0, 0.5].$

Convergence plots are given in Fig.~\ref{fig: convergence}. Delayed-Q outperforms the rest for all delay values $m$, while Oblivious-Q fails in all runs for $m>0.$ Since the augmented state-space grows exponentially with $m$, Augmented-Q converges more slowly as $m$ increases. In fact, for $m>15$ the simulation fails to run due to memory incapacity for the Q-table; this explains its absence in Figs.~\ref{fig: convergence}-\ref{fig: exp barplots}. To confirm the exponential complexity growth of Augmented-Q and compare it with  Delayed-Q, we trained both agents  with increasing delay values, and reported the number of training episodes each one required before reaching a cumulative reward of $0.5$. Fig.~\ref{fig:exp vs linear growth} clearly demonstrates the exponential (resp. linear) dependence of Augmented-Q (resp. Delayed-Q) in the delay value. The linear dependence of Delayed-Q in $m$ is not surprising: Delayed-Q is algorithmically identical to Q-learning, except for the $m$-step forward-model calls and the replay buffer shift of $m$ samples. 
To further analyze its sensitivity to the state-space size, we ran tabular Delayed-Q on increasing maze sizes, for a fixed $m=5$. As Fig.~\ref{fig:varying_maze_size} shows, the performance drops exponentially, suggesting high sensitivity to the state-space size and highlighting one shortcoming of MBS \citep{walsh2009learning} (see Sec.~\ref{section:intro}).

\begin{figure*}[htp]
	\centering
	\subfigure[]{\label{fig:heatmap_maze}
	\includegraphics[width=0.21\linewidth,keepaspectratio]{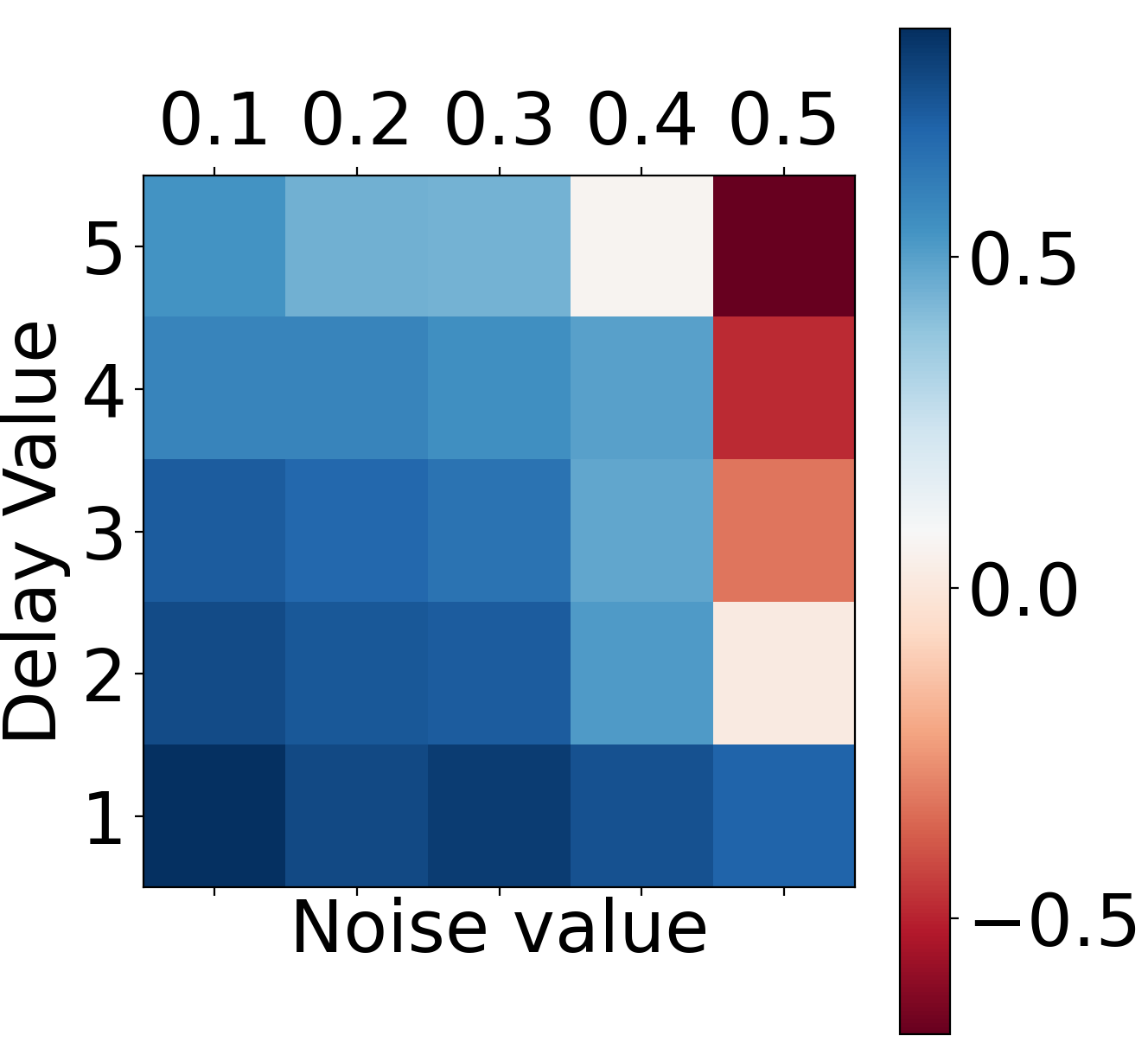}}
	\subfigure[]{\label{fig:heatmap_cartpole}
	\includegraphics[width=0.24\linewidth,keepaspectratio]{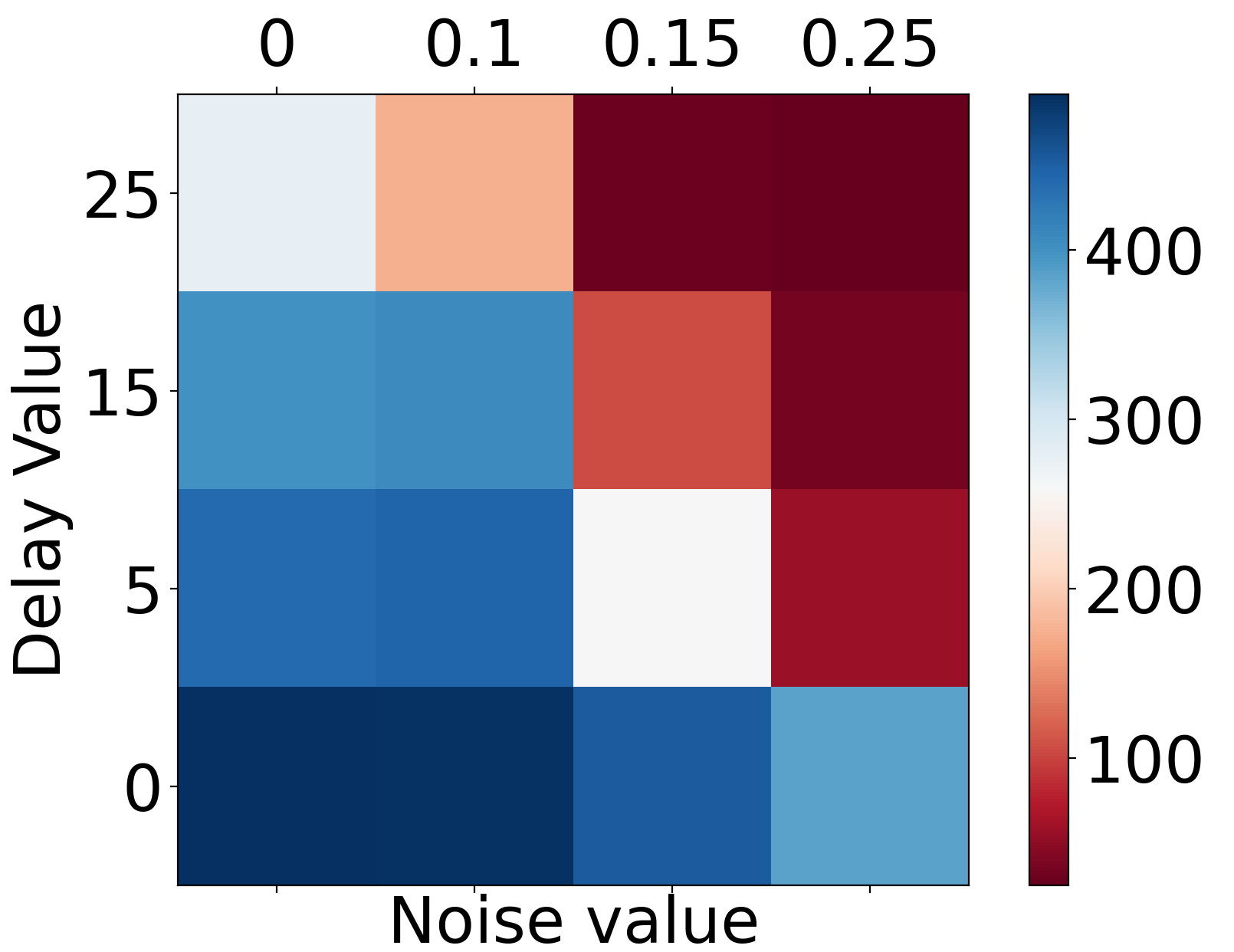}}
	\subfigure[]{\label{fig:varying_maze_size}
	\includegraphics[width=0.35\linewidth,keepaspectratio]{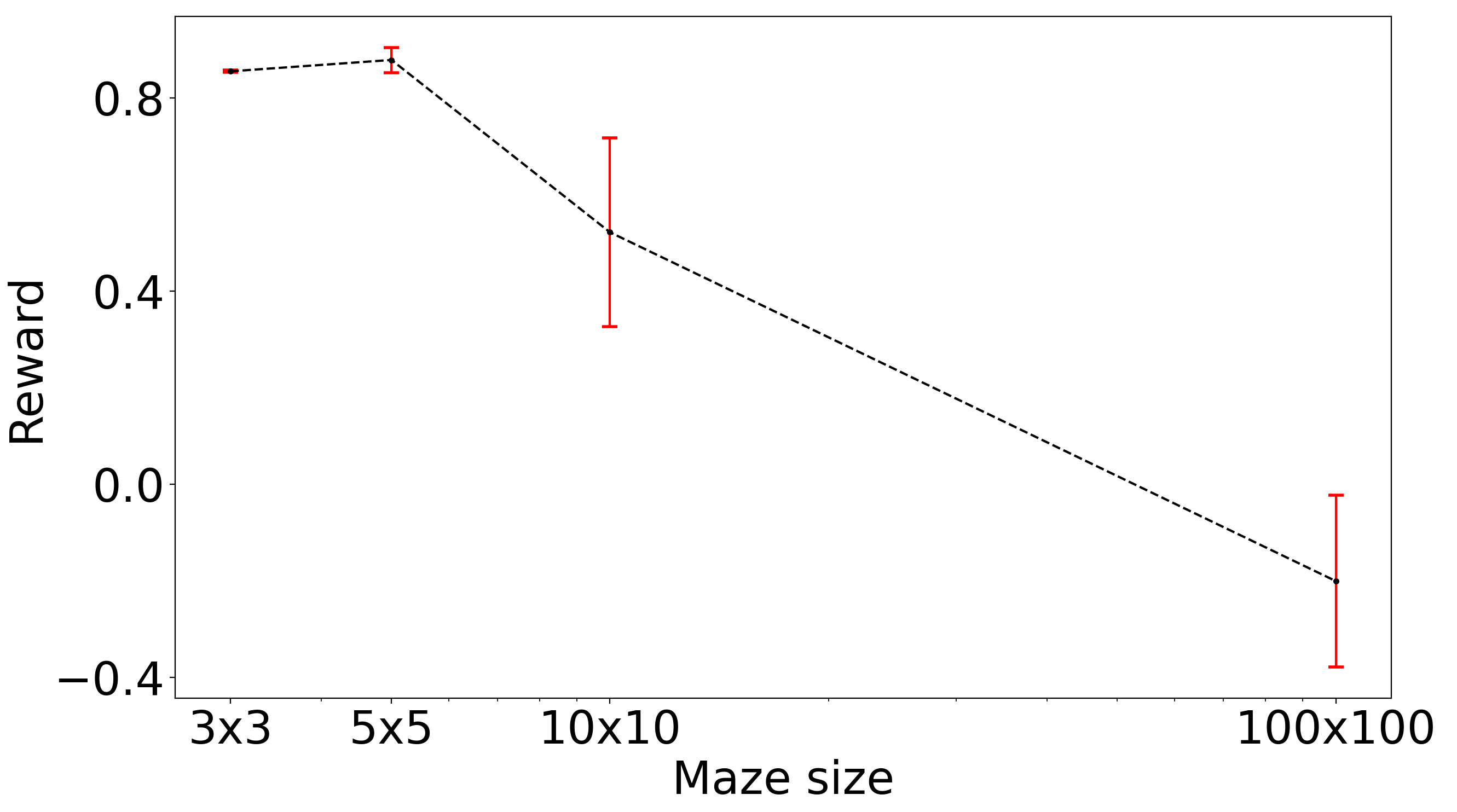}}
	\caption{Delayed-Q (median over 5 seeds): (a) Total reward after $5000$ training episodes on $10\times10$ Maze. Performance is sensitive to both delay value and stochasticity. (b) Noisy Cartpole. (c) Reward on varying Maze sizes. Abscissa is in log-scale, so the return decreases exponentially with $m$.} 
\end{figure*}

\begin{figure*}[!ht]
\begin{center}
\includegraphics[scale=0.33]{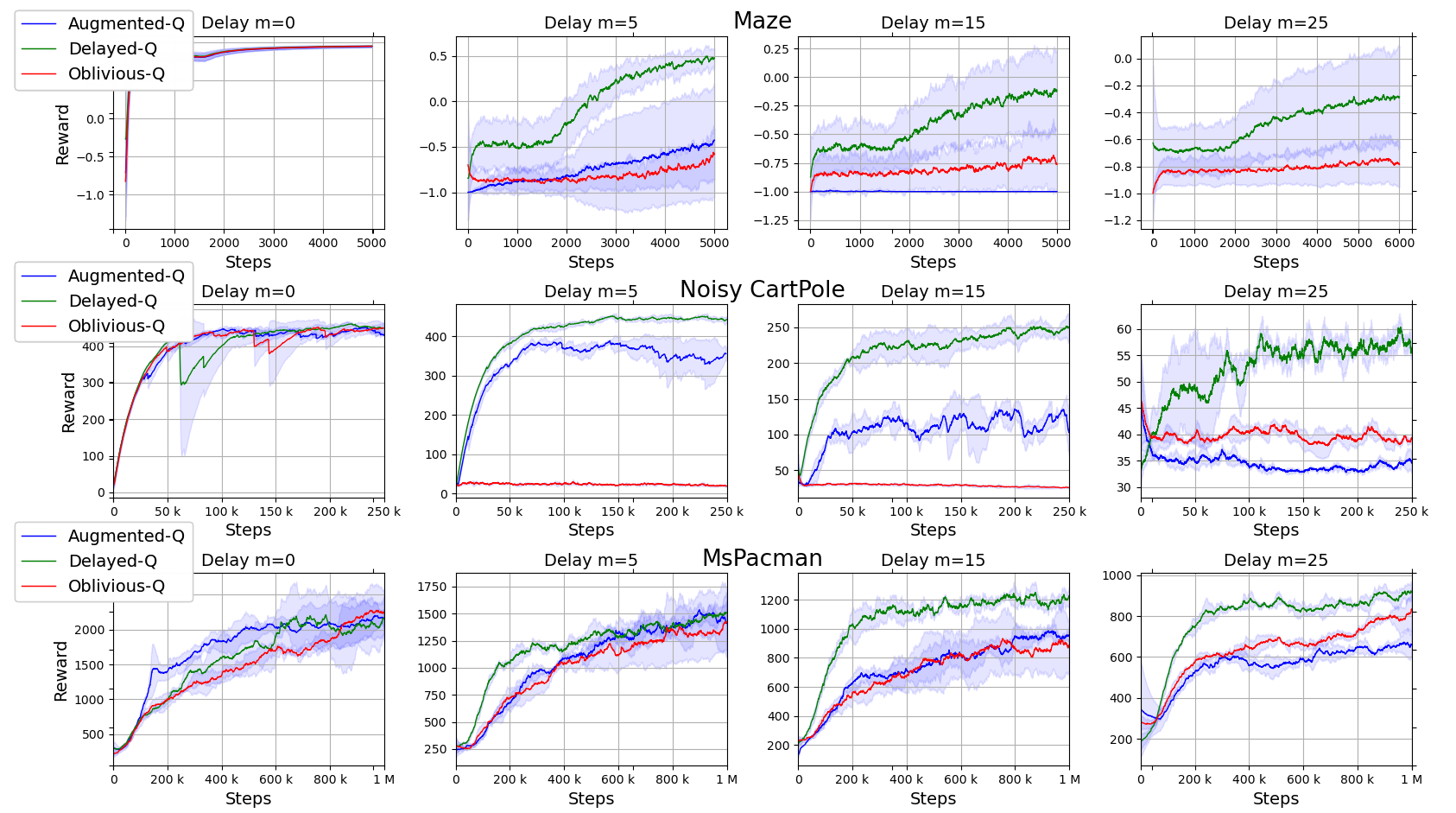}    
\end{center}
	\caption{Convergence plots for Maze, Noisy Cartpole and Atari MsPacman. Note that the scale of the y-axes (performance) may change from figure to figure.}
    \label{fig: convergence} 
\end{figure*}
\vspace{-0.2cm}
{\bf Physical Domains.} 
Next, we test our approach on two continuous domains: CartPole\footnote{Since Cartpole fails in $\sim$10 steps if the initial actions are random, we initialize the $m$-lengthed action-queue with optimal actions using a pretrained non-delayed model. We wait for $2m$ steps before starting to append samples to the replay buffer to avoid unfair advantage due to these actions.} and Acrobot. The CartPole task requires balancing a pole connected to a cart that actuates left or right. In Acrobot, one needs to swing up the lower of two links connected by a joint above a certain height. The agent receives a reward of $1$ if the pole stays above a certain angle in Cartpole, and in Acrobot it receives $-1$ until it reaches the goal. The episode length is 500 steps in both tasks. We also create noisy versions of both tasks: 
At each step, normal additive noises are independently added to each physical component's mass, with std of $0.1$ of the nominal mass.

We extend the famous DDQN algorithm \citep{van2015deep} and compare to it, though our method is general and can be seamlessly integrated into any Q-learning based algorithm. Our one-step forward-model is implemented with a neural network (NN) of the same architecture as the Q-network. Namely, it consists of two hidden layers, each of width 24, with ReLu activations. The input of the forward-model NN is the concatenation of $(s, a)$ and its output is $s'.$ Training the forward-model NN is conducted together with the Q-network training with the same hyperparameters and sample batches; this makes the implementation easy and simple. For Augmented-Q, a concatenation of the pending actions to the state is fed to the Q-network.
\begin{figure}
 \centering
    \includegraphics[scale=0.33]{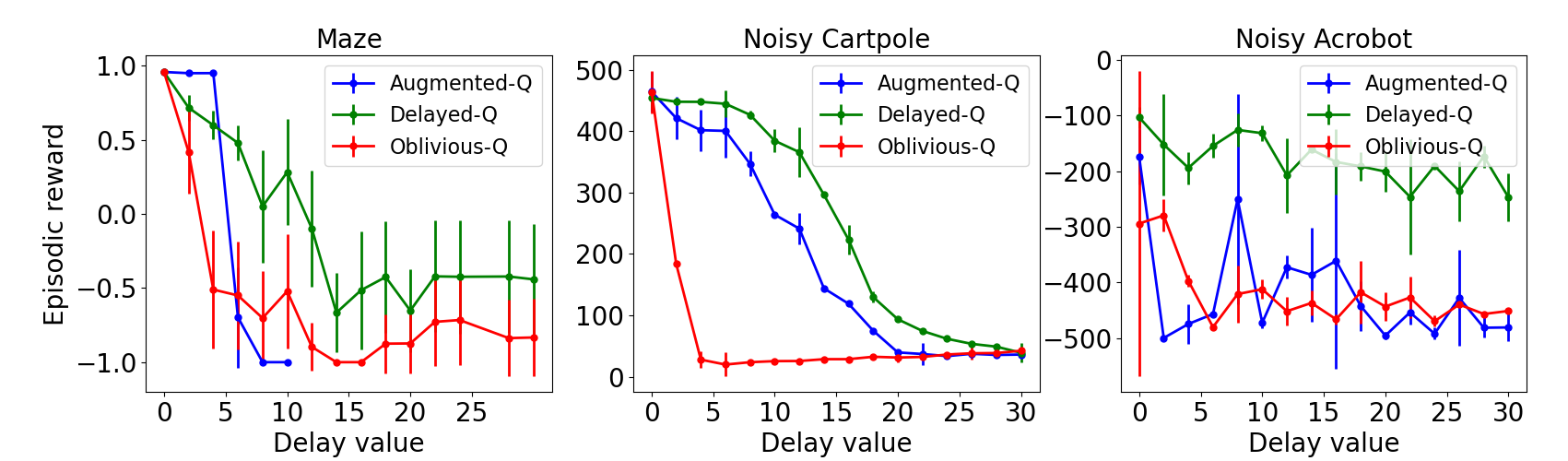}
    \captionof{figure}{Performance as a function of the delay (from left to right): Maze, Noisy Cartpole, Noisy Acrobot. For Augmented-Q in Maze, $m> 10$ is missing due to explosion of the state-space.}
    \label{fig:m sweep}
\end{figure}

\begin{wrapfigure}{R}{.37\textwidth}
\vspace{-0.35cm}
 \centering
    \includegraphics[scale=0.11]{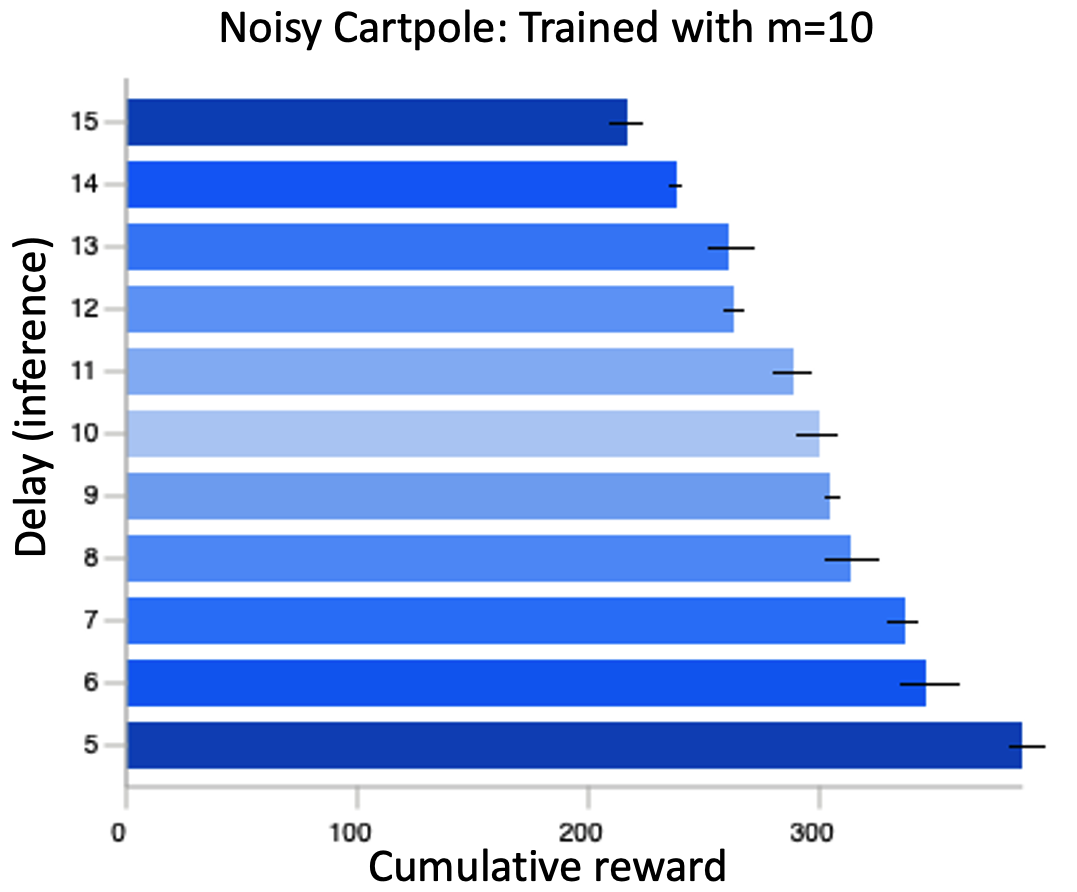}
    \captionof{figure}{Performance gap for Delayed-Q trained with a delay of $m=10$.}
    \label{fig:delay misspecification}
\end{wrapfigure}
Fig.~\ref{fig: convergence} depicts the performance of the three algorithms for different values of $m$ for Noisy Cartpole. As expected from a physical domain, ignoring delay gives catastrophic results even for $m=5.$ Augmented-Q performs moderately up to $m=15,$ but fails for larger delays. Delayed-Q performs the best for all $m$ values, and performs well even on the challenging task of balancing a noisy pole with $m=25.$ We observe similar behavior in all Cartpole and Acrobot experiments, as shown in Fig.~\ref{fig: exp barplots}. Moreover, in Fig.~\ref{fig:m sweep}, we demonstrate the relative robustness of Delayed-Q to different delay values. All tested environments exhibit superior performance of Delayed-Q for a wide range of delays. In Noisy Acrobot, Delayed-Q performs better for $m=25$ than the alternatives do for $m=2$. 
Figs.~\ref{fig:heatmap_maze}-\ref{fig:heatmap_cartpole} show a clear trade-off between noise and delay, as we also discuss in Rmk.~\ref{remark:noise delay tradeoff}. For high delays, the agent is much more sensitive to an increase in stochasticity. 

To quantify the dependence of Delayed-Q on the model accuracy, we compare the learned model to a perfect one, i.e., the environment itself. Fig.~\ref{fig: perfect learned gap} shows performance is impaired more as the delay increases and suggests a better model can potentially improve reward by 20-30\%.
\begin{figure}
    \centering
    \includegraphics[scale=0.23]{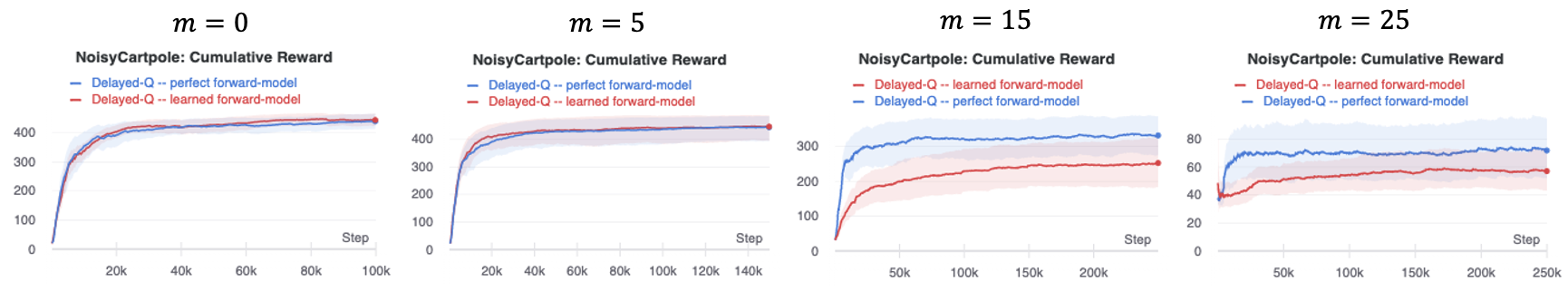}
    \caption{Noisy Cartpole: Performance gap between true and trained forward model. }
    \label{fig: perfect learned gap}
\end{figure}
Further, we test the robustness of Delayed-Q to misspecified delay by training it with $m=10$ and evaluating on other delay values. Fig.~\ref{fig:delay misspecification} shows the evaluation performance for $m \in \{5,\dots,15\}$. It demonstrates the robustness of our method -- varying performance in evaluation (for good or bad) does not stem from delay misspecification. Instead, the delay is ‘forgotten’ after training, and Fig.~\ref{fig:delay misspecification} depicts the general effect of execution delay on performance. For shorter delay than the training one, \ie $m < 10$, performance even improves. The reason is that, first, during training, the Q-function is ‘un-delayed’ due to the replay buffer shift that relates the actions to the correct execution time. Second, the forward-model is trained based on single-step transitions and only during inference is it queried $m$ times. Thus, these two networks composing the agent are oblivious to the delay they were trained on. 

{\bf Atari Domains.} 
We run the last set of experiments on the Atari Learning Environment \citep{bellemare2013arcade}. We inspect 8 games from those that were successfully tackled with the original Q-network architecture and hyperparameters of DDQN \citep{van2015deep}. Since a learned forward-model for images conditioned on actions is a hanging question in the research frontier, we leave it for future work and use the simulator itself for prediction. It is stochastic in nature and thus encompasses approximation error. For Augmented-Q, we concatenate the action queue to the output of the CNN part of the Q-network; the extended vector is then fed into the subsequent fully-connected part of it. We train all games for 1M steps.  
Fig.~\ref{fig: convergence} shows convergence plots for MsPacman. Delayed-Q is consistently better than Augmented-Q for all $m$ values, which is, in turn, better than Oblivious-Q. Although the gap between all three algorithms is small for $m=5,$ it increases with $m$. For $m=25,$ the delay is too large for the augmentation to have a positive effect compared to Oblivious-Q, and they perform the same. This behavior is representative of all Atari games, as can be seen in Fig.~\ref{fig: exp barplots}. 
Lastly, we compared Delayed-Q with a fourth algorithm which uses an RNN policy that is unaware of the delay value. The results are given in Appx.~\ref{sec: rnn comparison}, showing that a recurrent policy does not improve upon Augmented-Q or Oblivious-Q. This result is not surprising though: as stated in Thm.~\ref{theorem: markov policy is sufficient}, the history sequence $s_{t-m}, s_{t-m-1}, \dots$ does not aid the policy any further than only using $s_{t-m}$. 

\begin{figure}
    \centering
 \includegraphics[scale=0.45]{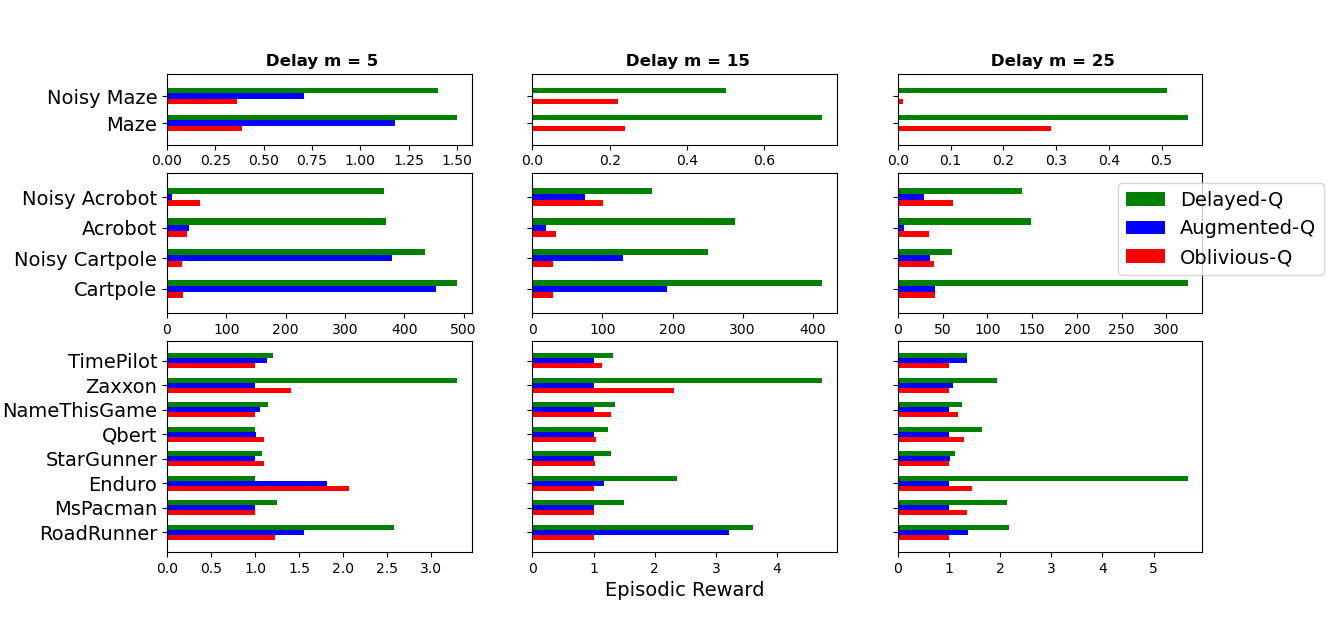}
    \caption{Experiment summary: mean of episodic return for all domains. Delayed-Q outperforms the alternatives in 39 of 42 experiments. Due to negative reward, a
    positive translation of 1 is applied for Maze and 500 for Acrobot. Atari x-axis is the gain relative to lowest result in each experiment. }
    \label{fig: exp barplots}
\end{figure}

 \vspace{-0.5cm}
\section{Discussion}
\vspace{-0.5cm}
In this work, we found that non-stationary deterministic Markov policies are optimal in delayed MDPs. 
Though more expressive, the standard state augmentation approach is intractable for all but the shortest delays, while the oblivious approach that ignores delay suffers from inferior performance.
We derived a Q-learning based algorithm that generates a Markov policy by combining a transition forward model with Q-network. 
The forward-model produces a simple future-state estimate. Incorporating probabilistic estimates and other improvements such as integration of image-based action-dependent learned forward-models \citep{kim2020learning}, are left for future research. 
Extensions of our work for real-world applications can be unknown or varying delay. In the first case, a good prior for the delay value can often be used, \eg for autonomous vehicles, as the latency statistics of the different hardware and software components are well studied \citep{zhao2019towards,niu201926ms}, while in production systems, they are almost constant \citep{toschi2019characterizing}. 
Our algorithm is also readily extendable to the second case of varying delay. 
Differently from the augmentation approach, our 1-step forward-model decouples the algorithm from the delay used for training, as Fig.~\ref{fig:delay misspecification} depicts. Also, quantization of the delay is not essential as long as the forward model can operate with variable delay values. 
Finally, our framework can be extended to policy-gradient-based methods that are particularly useful for continuous control, where observation delay is inherent. 

\section*{Acknowledgements}
The authors would like to thank Daniel J. Mankowitz and Timothy A. Mann for motivating this work.

\bibliography{lag_delay}

\begin{thebibliography}{32}
\providecommand{\natexlab}[1]{#1}
\providecommand{\url}[1]{\texttt{#1}}
\expandafter\ifx\csname urlstyle\endcsname\relax
  \providecommand{\doi}[1]{doi: #1}\else
  \providecommand{\doi}{doi: \begingroup \urlstyle{rm}\Url}\fi

\bibitem[Bar-Ilan \& Sulem(1995)Bar-Ilan and Sulem]{bar1995explicit}
Avner Bar-Ilan and Agn{\`e}s Sulem.
\newblock Explicit solution of inventory problems with delivery lags.
\newblock \emph{Mathematics of Operations Research}, 20\penalty0 (3):\penalty0
  709--720, 1995.

\bibitem[Bellemare et~al.(2013)Bellemare, Naddaf, Veness, and
  Bowling]{bellemare2013arcade}
Marc~G Bellemare, Yavar Naddaf, Joel Veness, and Michael Bowling.
\newblock The arcade learning environment: An evaluation platform for general
  agents.
\newblock \emph{Journal of Artificial Intelligence Research}, 47:\penalty0
  253--279, 2013.

\bibitem[Bertsekas et~al.(1995)Bertsekas, Bertsekas, Bertsekas, and
  Bertsekas]{bertsekas1995dynamic}
Dimitri~P Bertsekas, Dimitri~P Bertsekas, Dimitri~P Bertsekas, and Dimitri~P
  Bertsekas.
\newblock \emph{Dynamic programming and optimal control}, volume~1.
\newblock Athena scientific Belmont, MA, 1995.

\bibitem[Brockman et~al.(2016)Brockman, Cheung, Pettersson, Schneider,
  Schulman, Tang, and Zaremba]{openai}
Greg Brockman, Vicki Cheung, Ludwig Pettersson, Jonas Schneider, John Schulman,
  Jie Tang, and Wojciech Zaremba.
\newblock Open{AI} {G}ym.
\newblock arXiv:1606.01540v1, 2016.

\bibitem[Bruder \& Pham(2009)Bruder and Pham]{bruder2009impulse}
Benjamin Bruder and Huy{\^e}n Pham.
\newblock Impulse control problem on finite horizon with execution delay.
\newblock \emph{Stochastic Processes and their Applications}, 119\penalty0
  (5):\penalty0 1436--1469, 2009.

\bibitem[Campbell et~al.(2016)Campbell, Givigi, and
  Schwartz]{campbell2016multiple}
Jeffrey~S Campbell, Sidney~N Givigi, and Howard~M Schwartz.
\newblock Multiple model q-learning for stochastic asynchronous rewards.
\newblock \emph{Journal of Intelligent \& Robotic Systems}, 81\penalty0
  (3-4):\penalty0 407--422, 2016.

\bibitem[Chen et~al.(2020{\natexlab{a}})Chen, Xu, Li, and Zhao]{chen2020delay}
Baiming Chen, Mengdi Xu, Liang Li, and Ding Zhao.
\newblock Delay-aware model-based reinforcement learning for continuous
  control.
\newblock \emph{arXiv preprint arXiv:2005.05440}, 2020{\natexlab{a}}.

\bibitem[Chen et~al.(2020{\natexlab{b}})Chen, Xu, Liu, Li, and
  Zhao]{chen2020delaymultiagent}
Baiming Chen, Mengdi Xu, Zuxin Liu, Liang Li, and Ding Zhao.
\newblock Delay-aware multi-agent reinforcement learning.
\newblock \emph{arXiv preprint arXiv:2005.05441}, 2020{\natexlab{b}}.

\bibitem[Dugard \& Verriest(1998)Dugard and Verriest]{dugard1998stability}
Luc Dugard and Erik~I Verriest.
\newblock \emph{Stability and control of time-delay systems}, volume 228.
\newblock Springer, 1998.

\bibitem[Dulac-Arnold et~al.(2019)Dulac-Arnold, Mankowitz, and
  Hester]{dulac2019challenges}
Gabriel Dulac-Arnold, Daniel Mankowitz, and Todd Hester.
\newblock Challenges of real-world reinforcement learning.
\newblock \emph{arXiv preprint arXiv:1904.12901}, 2019.

\bibitem[Espeholt et~al.(2018)Espeholt, Soyer, Munos, Simonyan, Mnih, Ward,
  Doron, Firoiu, Harley, Dunning, et~al.]{espeholt2018impala}
Lasse Espeholt, Hubert Soyer, Remi Munos, Karen Simonyan, Volodymir Mnih, Tom
  Ward, Yotam Doron, Vlad Firoiu, Tim Harley, Iain Dunning, et~al.
\newblock Impala: Scalable distributed deep-rl with importance weighted
  actor-learner architectures.
\newblock \emph{arXiv preprint arXiv:1802.01561}, 2018.

\bibitem[Fearnley(2010)]{fearnley2010exponential}
John Fearnley.
\newblock Exponential lower bounds for policy iteration.
\newblock In \emph{International Colloquium on Automata, Languages, and
  Programming}, pp.\  551--562. Springer, 2010.

\bibitem[Firoiu et~al.(2018)Firoiu, Ju, and Tenenbaum]{firoiu2018human}
Vlad Firoiu, Tina Ju, and Josh Tenenbaum.
\newblock At human speed: Deep reinforcement learning with action delay.
\newblock \emph{arXiv preprint arXiv:1810.07286}, 2018.

\bibitem[Fridman(2014)]{fridman2014introduction}
Emilia Fridman.
\newblock \emph{Introduction to time-delay systems: Analysis and control}.
\newblock Springer, 2014.

\bibitem[Hansen \& Zwick(2010)Hansen and Zwick]{hansen2010lower}
Thomas~Dueholm Hansen and Uri Zwick.
\newblock Lower bounds for howard’s algorithm for finding minimum mean-cost
  cycles.
\newblock In \emph{International Symposium on Algorithms and Computation}, pp.\
   415--426. Springer, 2010.

\bibitem[Hester \& Stone(2013)Hester and Stone]{hester2013texplore}
Todd Hester and Peter Stone.
\newblock Texplore: real-time sample-efficient reinforcement learning for
  robots.
\newblock \emph{Machine learning}, 90\penalty0 (3):\penalty0 385--429, 2013.

\bibitem[Hollanders et~al.(2012)Hollanders, Delvenne, and
  Jungers]{hollanders2012complexity}
Romain Hollanders, Jean-Charles Delvenne, and Rapha{\"e}l~M Jungers.
\newblock The complexity of policy iteration is exponential for discounted
  markov decision processes.
\newblock In \emph{2012 IEEE 51st IEEE Conference on Decision and Control
  (CDC)}, pp.\  5997--6002. IEEE, 2012.

\bibitem[Howard(1960)]{howard1960dynamic}
Ronald~A Howard.
\newblock Dynamic programming and {M}arkov processes.
\newblock \emph{John Wiley}, 1960.

\bibitem[Joulani et~al.(2013)Joulani, Gyorgy, and
  Szepesv{\'a}ri]{joulani2013online}
Pooria Joulani, Andras Gyorgy, and Csaba Szepesv{\'a}ri.
\newblock Online learning under delayed feedback.
\newblock In \emph{International Conference on Machine Learning}, pp.\
  1453--1461, 2013.

\bibitem[Katsikopoulos \& Engelbrecht(2003)Katsikopoulos and
  Engelbrecht]{katsikopoulos2003markov}
Konstantinos~V Katsikopoulos and Sascha~E Engelbrecht.
\newblock Markov decision processes with delays and asynchronous cost
  collection.
\newblock \emph{IEEE transactions on automatic control}, 48\penalty0
  (4):\penalty0 568--574, 2003.

\bibitem[Kim et~al.(2020)Kim, Zhou, Philion, Torralba, and
  Fidler]{kim2020learning}
Seung~Wook Kim, Yuhao Zhou, Jonah Philion, Antonio Torralba, and Sanja Fidler.
\newblock Learning to simulate dynamic environments with gamegan.
\newblock In \emph{Proceedings of the IEEE/CVF Conference on Computer Vision
  and Pattern Recognition}, pp.\  1231--1240, 2020.

\bibitem[Niu et~al.(2019)Niu, Ma, Wang, and Ren]{niu201926ms}
Wei Niu, Xiaolong Ma, Yanzhi Wang, and Bin Ren.
\newblock 26ms inference time for resnet-50: Towards real-time execution of all
  dnns on smartphone.
\newblock \emph{arXiv preprint arXiv:1905.00571}, 2019.

\bibitem[Pike-Burke et~al.(2017)Pike-Burke, Agrawal, Szepesvari, and
  Gr{\"u}new{\"a}lder]{pike2017bandits}
Ciara Pike-Burke, Shipra Agrawal, Csaba Szepesvari, and Steffen
  Gr{\"u}new{\"a}lder.
\newblock Bandits with delayed anonymous feedback.
\newblock \emph{stat}, 1050:\penalty0 20, 2017.

\bibitem[Puterman(2014)]{puterman2014markov}
Martin~L Puterman.
\newblock \emph{Markov Decision Processes.: Discrete Stochastic Dynamic
  Programming}.
\newblock John Wiley \& Sons, 2014.

\bibitem[Ramstedt \& Pal(2019)Ramstedt and Pal]{ramstedt2019real}
Simon Ramstedt and Chris Pal.
\newblock Real-time reinforcement learning.
\newblock In \emph{Advances in Neural Information Processing Systems}, pp.\
  3073--3082, 2019.

\bibitem[Richard(2003)]{richard2003time}
Jean-Pierre Richard.
\newblock Time-delay systems: an overview of some recent advances and open
  problems.
\newblock \emph{automatica}, 39\penalty0 (10):\penalty0 1667--1694, 2003.

\bibitem[Scherrer et~al.(2016)]{scherrer2016improved}
Bruno Scherrer et~al.
\newblock Improved and generalized upper bounds on the complexity of policy
  iteration.
\newblock \emph{Mathematics of Operations Research}, 41\penalty0 (3):\penalty0
  758--774, 2016.

\bibitem[Toschi et~al.(2019)Toschi, Sanic, Leng, Chen, Wang, and
  Guo]{toschi2019characterizing}
Alessandro Toschi, Mustafa Sanic, Jingwen Leng, Quan Chen, Chunlin Wang, and
  Minyi Guo.
\newblock Characterizing perception module performance and robustness in
  production-scale autonomous driving system.
\newblock In \emph{IFIP International Conference on Network and Parallel
  Computing}, pp.\  235--247. Springer, 2019.

\bibitem[Van~Hasselt et~al.(2015)Van~Hasselt, Guez, and Silver]{van2015deep}
Hado Van~Hasselt, Arthur Guez, and David Silver.
\newblock Deep reinforcement learning with double q-learning.
\newblock \emph{arXiv preprint arXiv:1509.06461}, 2015.

\bibitem[Walsh et~al.(2009)Walsh, Nouri, Li, and Littman]{walsh2009learning}
Thomas~J Walsh, Ali Nouri, Lihong Li, and Michael~L Littman.
\newblock Learning and planning in environments with delayed feedback.
\newblock \emph{Autonomous Agents and Multi-Agent Systems}, 18\penalty0
  (1):\penalty0 83, 2009.

\bibitem[Xiao et~al.(2020)Xiao, Jang, Kalashnikov, Levine, Ibarz, Hausman, and
  Herzog]{xiao2020thinking}
Ted Xiao, Eric Jang, Dmitry Kalashnikov, Sergey Levine, Julian Ibarz, Karol
  Hausman, and Alexander Herzog.
\newblock Thinking while moving: Deep reinforcement learning with concurrent
  control.
\newblock \emph{arXiv preprint arXiv:2004.06089}, 2020.

\bibitem[Zhao et~al.(2019)Zhao, Zhang, Meng, Shi, Li, Lou, and
  Zhao]{zhao2019towards}
Hengyu Zhao, Yubo Zhang, Pingfan Meng, Hui Shi, Li~Erran Li, Tiancheng Lou, and
  Jishen Zhao.
\newblock Towards safety-aware computing system design in autonomous vehicles.
\newblock \emph{arXiv preprint arXiv:1905.08453}, 2019.

\end{thebibliography}
\bibliographystyle{iclr2021_conference}

\newpage
\appendix
\section*{Appendix}

\section{MDPs with Delay: A Degradation Example}
\subsection{Proof of Proposition~\ref{prop: degredation}}
\label{sec: example proof}
    Without loss of generality, assume $p\in[0.5, 1].$ It is easy to see than in our 2-state MDP, the optimal policy selects $a_0$ if the most likely state of the system is $s_0,$ and $a_1$ if it is $s_1.$ Since $p \geq0.5,$ the most-likely state of the system when observing $s_0$ is $s_0$ if $m$ is even, and $s_1$ if $m$ is odd. The same logic holds when observing $s_1.$ Therefore, if $m$ is even, $\pi^*(s_0)=a_0,\pi^*(s_1)=a_1.$ Otherwise, $\pi^*(s_0)=a_1,\pi^*(s_1)=a_0.$ Note that re-iterating the rest of the proof with a randomized policy (of the form $\pi(a|s) \in (0,1),~\forall a\in\{a_0,a_1\},~\forall s\in\{s_0,s_1\}$) yields sub-optimal return. Hence, in this example it is enough to consider deterministic policies.
   
    The expected reward at time $t+m$ with action $\pi^*(s_t)$ selected at $s_t$ is 
    \begin{equation}
    \label{eq: Rt+m}
    \begin{split}
        R^*_{t+m}(s_t) :&= \mathbb{E}_{s_{t+m}|s_t}[r(s_{t+m}, \pi^*(s_t))]\\
         &=  r(s_0, \pi^*(s_t))\mathbb{P}(s_{t+m}=s_0|s_t) + r(s_1, \pi^*(s_t))\mathbb{P}(s_{t+m}=s_1|s_t).
    \end{split}
    \end{equation}
    From here on, we inspect the case where $s_{t}=s_0$ for brevity. By symmetry, identical arguments apply if $s_{t}=s_1.$
    If $m$ is even, $r(s_0, \pi^*(s_0))=1$ and $r(s_1, \pi^*(s_0))=0.$ If $m$ is odd, $r(s_0, \pi^*(s_0))=0$ and $r(s_1, \pi^*(s_0))=1.$ Thus, using \eqref{eq: Rt+m},
 \begin{equation}
 \label{eq: R cases prob}
    R^*_{t+m}(s_0)  =     \begin{cases}
        \mathbb{P}(s_{t+m}=s_0|s_t=s_0) & \text{if $m$ is even,} \\
      \mathbb{P}(s_{t+m}=s_1|s_t=s_0) & \text{if $m$ is odd.}
        \end{cases}
\end{equation}
    Note that, by construction, the transition probabilities are independent of the actions. Specifically, if $m$ is even, 
    \begin{equation}
 \label{eq: m even}
    \mathbb{P}(s_{t+m}=s_0|s_t=s_0) =  \sum_{k ~ even}^m \binom{m}{k} p^k (1-p)^{m-k},
\end{equation}
    since we count the possibilities of an even number of jumps between the two states. Similarly, if $m$ is odd, 
  \begin{equation}
 \label{eq: m odd}
    \mathbb{P}(s_{t+m}=s_1|s_t=s_0) =  \sum_{k ~ odd}^m \binom{m}{k} p^k (1-p)^{m-k}.
\end{equation}
    Also note that the same applies for $s_t=s_1,$ i.e.,
    \begin{equation}
        R^*_{t+m}(s_0)=R^*_{t+m}(s_1)~ \quad\forall t, \label{eq: R constant s}
    \end{equation}
    and that these probabilities are independent of $t,$ i.e.,
        \begin{equation}
        R^*_{t+m}(s_0)=R^*_{t+m+k}(s_0)~\quad\forall k\in \mathbb{N}. \label{eq: R constant t}
    \end{equation}
    
    Next, we compute the optimal return starting from  $s_0:$ 
    \begin{align}
        v^*_m(s_0) :&= \mathbb{E}^{\pi^*}\left[\sum_{t=0}^\infty \gamma^t r(s_{t+m},\pi^*(s_{t}))|s_{t=0}=s_0\right] \nonumber \\
        &= R^*_m(s_0) + \gamma \left[\mathbb{P}(s_{t=1}=s_0)R^*_{m+1}(s_0) + \mathbb{P}(s_{t=1}=s_1))R^*_{m+1}(s_1) \right] \nonumber\\
        &\quad+ \gamma^2 \left(\mathbb{P}(s_{t=2}=s_0)R^*_{m+2}(s_0) + \mathbb{P}(s_{t=2}=s_1))R^*_{m+2}(s_1) \right)  + \dots \nonumber\\
        &= R^*_m(s_0) + \gamma R^*_{m+1}(s_0) \left(\mathbb{P}(s_{t=1}=s_0) +\mathbb{P}(s_{t=1}=s_1))\right) \nonumber\\
        &\quad+
        \gamma^2 R^*_{m+2}(s_0)\left(\mathbb{P}(s_{t=2}=s_0) +\mathbb{P}(s_{t=2}=s_1))\right) +\dots \nonumber\\
        &= \frac{1}{1-\gamma} R^*_m(s_0), \label{eq: R summation}
    \end{align}
    where in the second relation we used \eqref{eq: R constant s}, and in the last relation \eqref{eq: R constant t} as well as $\mathbb{P}(s_t=s_0) = 1 - \mathbb{P}(s_t=s_1) ~ \quad\forall t.$
    
    Plugging \eqref{eq: m even} and \eqref{eq: m odd} into \eqref{eq: R cases prob}, together with \eqref{eq: R constant t}, \eqref{eq: R summation} and \eqref{eq: R constant s} gives the optimal return 
    \begin{equation}
    \label{eq: first part result}
        v^*_m(s_0) = v^*_m(s_1) = \begin{cases}
      \frac{1}{1 - \gamma}  \sum_{k ~ even}^m \binom{m}{k} p^k (1-p)^{m-k}, & \text{if $m$ is even,} \\
      \frac{1}{1 - \gamma}  \sum_{k ~ odd}^m \binom{m}{k} p^k (1-p)^{m-k},  & \text{if $m$ is odd.} 
        \end{cases}
    \end{equation}
    This concludes the first part of the proof. 
    
    In the second part, we shall now derive a simpler expression for \eqref{eq: first part result} which can then be analyzed to determine monotonicity w.r.t. $m$ and $p.$ Observe that
    \begin{align*}
        (1-2p)^m &= (-p + 1-p)^m =\sum_{k}^{m} {m\choose k} (-p)^k (1-p)^{m-k} \\
        &= \sum_{k \ even}^{m} {m\choose k} p^k (1-p)^{m-k} - \sum_{k \ odd}^m {m\choose k} p^k (1-p)^{m-k}.
    \end{align*}    
    Since
    $$
    \sum_{k \ even}^{m} {m\choose k} p^k (1-p)^{m-k} + \sum_{k \ odd}^m {m\choose k} p^k (1-p)^{m-k} = 1,
    $$
    we have that
    \begin{align}
    \sum_{k \ even}^{m} {m\choose k} p^k (1-p)^{m-k}  = \frac{1}{2} \left( 1 + (1-2p)^m \right), \\
    \sum_{k \ odd}^{m} {m\choose k} p^k (1-p)^{m-k} = \frac{1}{2} \left( 1 - (1-2p)^m \right).
    \end{align}
    Denote $a:=-(1-2p),$ remember that $0 \leq a \leq 1,$ and let $m=2n$ (resp. $m=2n+1$) with $n\in \mathbb{N}$ when $m$ is even (resp. odd).
    Then 
    \begin{equation}
        \label{eq: writing even with a}
        \frac{1}{2} \left( 1 + (1-2p)^m \right) = \frac{1}{2} \left( 1 + (a^2)^n \right)
    \end{equation}
    and 
    \begin{equation}
        \label{eq: writing odd with a}
        \frac{1}{2} \left( 1 - (1-2p)^m \right) = \frac{1}{2} \left( 1 + a (a^2)^n \right).
    \end{equation}
    Both \eqref{eq: writing even with a} and \eqref{eq: writing odd with a} obviously  monotonically decrease with $n,$ so the even and odd subsequences are monotone. Also, since $a\leq1,$ \eqref{eq: writing odd with a} $\leq$ \eqref{eq: writing even with a}, which gives that the whole sequence itself is monotone in $m$.
    Lastly, as $p$ increases $a$ increases. This obviously causes both \eqref{eq: writing even with a} and \eqref{eq: writing odd with a} to increase as well.

\section{The Standard Approach: Augmentation}
\subsection{The augmented MDP}
\label{appendix: augmented mdp construction}
	Let the augmented state space $\X_m := \St\times\A^m.$  Then, $x_t := (s_t, a_{t}^{-1}, \cdots, a_{t}^{-m})\in\X_m$ is an extended state,
where $a_{t}^{-i}$ is the $i$-th pending action at time $t$. It means that in the following step, $t+1$, action $a_t^{-m}$ will be executed independently of the present action selection. 
Accordingly, a new transition function for $\X_m$ is induced by the original transition matrix $P$ and $m$-step delay. More explicitly, for $(x,a,x')\in\X_m\times\A\times\X_m$ we have
\begin{equation}
\label{eq:transition_F}
\begin{aligned}
    F(x' | x, a)
    &=  \begin{cases}
        P(e_1^\top x' | e_1^\top x, e_{m+1}^\top x) & \text{if }  e_2^\top x' = a \text{ and } e_{i+1}^\top x' = e_i^\top x ~
        \forall i\in[2:m], \\
        0 & \mbox{otherwise},
        \end{cases}
\end{aligned}
\end{equation}
where $e_i\in \{0,1\}^{m+1}$ is the elementary vector with $1$ only in its $i$-th coordinate.\footnote{Throughout this work, we assume without loss of generality that $s \in \St$ is a scalar, to simplify notation of inner products with $e_i$. This assumption is non-limiting since any multi-dimensional state space can be easily transformed to single-dimensional via enumeration as it is finite.} Similarly, the reward function on the augmented state-space is:
\begin{equation}
\label{eq:reward_function}
    \begin{split}
  g(x,a) = r(e_1^\top x, e_{m+1}^\top x).
    \end{split}
\end{equation}
Note that $g$ does not depend on the newly decided action $a\in\A$, but rather on the first and last coordinates of the current state $x\in\X_m$. This leads us to the following definition.

\subsection{$m$A-PI Algorithm}
Let the set of greedy policies w.r.t. $v\in\mathbb{R}^{\abs{\X_m}}$: 
$
\bar{\mathcal{G}}(v) := \{\piD\in \PiD: \bar{T}^{\piD}v = \bar{T}v\}.
$
\label{sec: med_pi_algo}
 \begin{algorithm}[H]
  \centering
\caption{$m$A-PI}
  \label{alg: m-execution_delay_PI}
\begin{algorithmic}[1]
  \STATE {\bfseries Initialize:} $\piD_0 \in \PiD, k = 0$
  \WHILE {$\piD_k$ is changing}
  \STATE $v_{k} \leftarrow v^{\piD_k}$
  \STATE $\piD_{k+1} \leftarrow $ any element of $\bar{\mathcal{G}}(v_k)$
  \STATE $k\leftarrow k+1$
  \ENDWHILE
  \STATE {\bfseries Return: } $\piD_k, v_k$ 
\end{algorithmic}
\end{algorithm}

\subsection{Convergence of $m$A-PI}
\label{sec: bellman_proof}

Convergence of $m$A-PI directly follows from the improvement property of greedy policies, which we prove below. 

\begin{proposition*}[$m$A Evaluation and Improvement]
\label{proposition:bellmanED_evaluation_appendix}
\begin{itemize*}
    \item[(i)] For any $x\in\X_m$ and $\piD \in \PiD$, the augmented value function $v^{\piD}$ satisfies the Bellman recursion $v^{\piD}(x) = \bar{T}^{\piD}v^{\piD}(x)$.\\
    
    \item[(ii)] The optimal augmented value $\bar{v}^*$ is the unique fixed point of $\bar{T}.$ Furthermore, if $\piD^*$ is preserving, \ie $\piD^* \in\arg\max_{\piD} \left\{g^{\piD} + \gamma F^{\piD} \bar{v}^*\right\}$, then $\piD^*$ is optimal and thus, $\bar{v}^* = \bar{v}^{\piD^*}$.
\end{itemize*}
\end{proposition*}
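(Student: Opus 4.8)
The plan is to observe that the $m$-AMDP of Definition~\ref{def: EDMDP} is itself a bona fide discounted MDP with finite state space $\X_m$, finite action space $\A$, bounded reward $g$, and discount $\gamma\in[0,1)$. Hence both claims are instances of the standard Bellman fixed-point theory, and the whole proof reduces to (a) establishing that the operators $\bar{T}^{\piD}v:=g^{\piD}+\gamma F^{\piD}v$ and $\bar{T}v:=\max_{\piD}\{g^{\piD}+\gamma F^{\piD}v\}$ are $\gamma$-contractions in the sup-norm, and (b) identifying the relevant value functions with their fixed points. Throughout I will use that each $F^{\piD}$ is row-stochastic, so that $\norm{F^{\piD}u-F^{\piD}v}_\infty\le\norm{u-v}_\infty$, and that $\abs{\A}<\infty$ guarantees the maximization defining $\bar{T}$ is attained state-wise by a deterministic element of $\PiD$.

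For part (i), I would first note that $v^{\piD}$ is well-defined and finite since $g$ is bounded and $\gamma<1$. Contractivity of $\bar{T}^{\piD}$ follows directly from row-stochasticity of $F^{\piD}$, namely $\norm{\bar{T}^{\piD}u-\bar{T}^{\piD}v}_\infty=\gamma\norm{F^{\piD}(u-v)}_\infty\le\gamma\norm{u-v}_\infty$, so by Banach's theorem $\bar{T}^{\piD}$ has a unique fixed point. It then remains to verify that $v^{\piD}$ is that fixed point. Since the augmented chain $(\tilde{x}_t)$ \emph{is} Markov under $\piD$ (this is the entire point of the state augmentation, in contrast to the non-Markov delayed process of Sec.~\ref{section: ed-mdp new formulation}), I would split off the $t=0$ term in the definition of $v^{\piD}(x)=\mathbb{E}^{\piD}[\sum_{t\ge0}\gamma^t g(\tilde{x}_t,\tilde{a}_t)\mid\tilde{x}_0=x]$, apply the tower property conditioning on $\tilde{x}_1$, and recognize the shifted tail sum as $\gamma\,\mathbb{E}[v^{\piD}(\tilde{x}_1)\mid\tilde{x}_0=x]$, which is exactly $(g^{\piD}+\gamma F^{\piD}v^{\piD})(x)=\bar{T}^{\piD}v^{\piD}(x)$.

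For part (ii), I would show $\bar{T}$ is likewise a $\gamma$-contraction, using the elementary bound $\abs{\max_a f(a)-\max_a g(a)}\le\max_a\abs{f(a)-g(a)}$ together with the per-policy contraction, yielding a unique fixed point $v^{\dagger}$. The optimality identification is a monotonicity sandwich. For any $\piD\in\PiD$ we have $\bar{T}^{\piD}v^{\dagger}\le\bar{T}v^{\dagger}=v^{\dagger}$ pointwise; since $\bar{T}^{\piD}$ is monotone, iterating and passing to the limit gives $v^{\piD}=\lim_n(\bar{T}^{\piD})^n v^{\dagger}\le v^{\dagger}$, hence $\bar{v}^*\le v^{\dagger}$. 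Conversely, for a preserving $\piD^*\in\arg\max_{\piD}\{g^{\piD}+\gamma F^{\piD}v^{\dagger}\}$ we get $\bar{T}^{\piD^*}v^{\dagger}=\bar{T}v^{\dagger}=v^{\dagger}$, so $v^{\dagger}$ is the fixed point of $\bar{T}^{\piD^*}$, and by the uniqueness from part (i) it equals $v^{\piD^*}$; thus $\bar{v}^*\ge v^{\piD^*}=v^{\dagger}$. Combining the two inequalities yields $\bar{v}^*=v^{\dagger}$ and shows $\piD^*$ is optimal.

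I expect the only genuinely delicate step to be the monotone-iteration argument in part (ii): one must justify that $(\bar{T}^{\piD})^n v^{\dagger}\to v^{\piD}$ (immediate from contractivity and uniqueness, once part (i) is in hand) and that the state-wise maximum coincides with the maximum over the policy class $\PiD$, so that $\bar{T}v=\max_{\piD}\bar{T}^{\piD}v$ is attained and $\bar{v}^*$ (a maximum over $\PiD^{\textsc{SD}}$) is sandwiched correctly. The contraction estimates and the unrolling in part (i) are routine given the finiteness and boundedness assumptions.
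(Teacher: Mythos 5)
Your proposal is correct and follows essentially the same route as the paper: part (i) is the identical computation (split off the $t=0$ term, use the Markov property of the augmented chain under $\piD$, and recognize the shifted tail as $\gamma F^{\piD} v^{\piD}$). For part (ii) the paper simply defers to classical discounted-MDP theory (Puterman) applied to the augmented MDP, and your contraction-plus-monotonicity sandwich, including the observation that finiteness of $\A$ lets the preserving policy be chosen deterministic so that $\bar{v}^*$ over $\PiD^{\textsc{SD}}$ is correctly sandwiched, is exactly the standard argument behind that citation.
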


\begin{proof}
Using standard Bellman recursion on the augmented MDP, we can write
\begin{equation*}
\begin{split}
v^{\piD}(x)&= \mathbb{E}^{\piD}\left[ \sum_{t = 0}^{\infty}\gamma^{t}g(x_t, a_t) | x_0 = x\right] \\
&= \mathbb{E}^{\piD}\left[g(x_0, a_0) +  \sum_{t=1}^{\infty}\gamma^{t} g(x_t, a_t)| x_0 = x\right]\\
    &= g(e_1^\top x, e_{m+1}^\top x) + \gamma \mathbb{E}^{\piD}\left[ \sum_{t = 0}^{\infty}\gamma^{t} g(x_{t+1}, a_{t+1}) | x_0 = x\right] \\
    &= g(e_1^\top x, e_{m+1}^\top x)  + \gamma \sum_{(x',a)\in\X_m\times\A} \piD(a| x)F(x' | x, a)v^{\piD}(x')\\
    &= \bar{T}^{\piD}v^{\piD}(x)
\end{split}
\end{equation*}
which ends the proof of Claim (i). 

Note that by definition of $g$ and $F$ as in Equations~\eqref{eq:reward_function} and \eqref{eq:transition_F} respectively, the sum can be reformulated as follows:
\begin{equation*}
\begin{split}
v^{\piD}(x)&=r(e_1^\top x, e_{m+1}^\top x) + \gamma
    \sum\limits_{\substack{(x',a)\in\X_m\times\A:\\ e_{i+1}^\top x' = e_i^\top x \text{ for } i\in[2:m];\\ e_2^\top x' = a }}  \piD(a| x)P(e_1^\top x' | e_1^\top x, e_{m+1}^\top x)v^{\piD}(x')\\
    &=r(e_1^\top  x, e_{m+1}^\top x) + \gamma\sum_{(s',a)\in\St\times\A}\piD(a| x) P(s' | e_1^\top x, e_{m+1}^\top x)v^{\piD}(s', a, e_2^\top  x,\cdots, e_m^\top x)
\end{split}
\end{equation*}
Claim (ii) relies on classical theory of discounted MDPs, applied to the augmented MDP \citep{puterman2014markov}.
\end{proof}

\subsection{Proof of Theorem~\ref{theorem: maPI lower bound}}
\label{sec: general lower bound}

First, we give a general lower bound to the classic PI algorithm by Howard \citep{howard1960dynamic} for non-delayed MDPs, that immediately confirms the exponential complexity of $m$A-PI. 
\begin{proposition*}[Lower Bound for Howard's PI]
The number of iterations required for Howard's PI to converge in standard MDP $(\St, \A, P, r, \gamma)$ is $\Omega(|\St|).$
\end{proposition*}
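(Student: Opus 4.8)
The plan is to prove the $\Omega(|\St|)$ bound by exhibiting, for every $n$, an explicit MDP on $n$ states for which Howard's PI requires $\Theta(n)$ iterations. Since only a linear (not exponential) lower bound is claimed, a single well-chosen family suffices, and I would use a deterministic chain. Take states $s_1,\dots,s_n$ and two actions, ``stay'' and ``move''. At $s_i$ with $i<n$, ``move'' transitions deterministically to $s_{i+1}$ and ``stay'' keeps the agent at $s_i$; state $s_n$ is absorbing. Rewards are $r(s_i,\text{stay})=0$ and $r(s_i,\text{move})=-\epsilon$ for $i<n$, with a large per-step reward $R$ collected at the absorbing $s_n$. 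The small cost $\epsilon$ and the large terminal reward $R$ are chosen (relative to $\gamma$ and $1-\gamma$) so that reaching $s_n$ is always worthwhile once its value has propagated back, while a single step of ``move'' toward a still-worthless successor is strictly disadvantageous.

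The key steps would be as follows. First, initialize PI with ``stay everywhere'' and observe that its value is $v(s_i)=0$ for $i<n$ and $v(s_n)=R/(1-\gamma)$. Then I would prove, by induction on the iteration count $k$, the invariant that after iteration $k$ the policy plays ``move'' exactly at $s_{n-1},\dots,s_{n-k}$ and ``stay'' elsewhere. The inductive step uses that Howard's greedy improvement compares, at each state $s_i$, the quantity $\max\{0,\,-\epsilon+\gamma v(s_{i+1})\}$ using the value $v$ of the \emph{previous} iterate. Because that value is strictly positive only at states that have already switched and is $0$ at all not-yet-switched states, the one-step lookahead flips precisely the single frontier state $s_{n-k-1}$ to ``move'' and leaves all states further to the left untouched. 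Consequently the frontier advances by exactly one state per iteration, so reaching the optimal all-``move'' policy takes $n-1$ iterations, which is $\Omega(n)=\Omega(|\St|)$.

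The main obstacle is guaranteeing that \emph{exactly one} state changes per iteration — neither zero (stalling) nor more than one (premature propagation). This is precisely where the cost $\epsilon$ earns its keep: it makes ``move'' strictly worse than ``stay'' at any state whose successor still has value $0$, so the greedy step cannot leap ahead by more than one state even though it solves the full evaluation equation. Concretely I would verify the two elementary inequalities $-\epsilon+\gamma\,v(s_{n-k})>0$ (the frontier switches) and $-\epsilon+\gamma\cdot 0<0$ (the next state does not), both reducing to bounds on geometric sums that hold for $R$ large and $\epsilon$ small. I would also state explicitly the standard stopping/tie-breaking convention for PI — a state keeps its current action whenever that action is greedy — so that switched states are never reverted and the induction is clean. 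With $|\A|=2$ fixed and rewards bounded, the construction is a valid instance of the MDP model of Section~\ref{sec:prelim}, completing the argument.
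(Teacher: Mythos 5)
Your proposal is correct and follows essentially the same strategy as the paper's proof: a chain of states in which the goal's value propagates backward by exactly one state per policy-iteration step, established by induction on the iteration count, yielding roughly $|\St|$ iterations. The one substantive difference is how premature switching is ruled out. The paper's construction has zero rewards everywhere except at the last state, so at every not-yet-reached state the two actions \emph{tie}, and the argument must invoke a specific tie-breaking convention (choose the action of lowest index, i.e.\ keep $d$); under the opposite tie-breaking rule its MDP would converge in $O(1)$ iterations, so the paper's bound is really a bound on that particular variant of Howard's PI. Your $-\epsilon$ movement cost makes every greedy comparison strict --- ``move'' strictly loses at any state whose successor still has value zero and strictly wins at the frontier, and already-switched states strictly prefer to keep moving since their value is positive --- so your lower bound holds for Howard's PI under \emph{any} tie-breaking rule. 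The price is two parameters $R,\epsilon$ whose required ratio grows like $\gamma^{-(n-1)}$ (the frontier inequality at the last switch reads $\gamma^{n-1}R > \epsilon(1-\gamma^{n-1})$); you should state explicitly that ``$R$ large, $\epsilon$ small'' is relative to $n$, which is legitimate since the hard instances form a family indexed by $n$, but is worth flagging to make the quantifiers clean.
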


\begin{proof}
		To prove the lower bound, we construct an example infinite-horizon MDP in which Howard's PI updates exactly one state at each iteration, and the number of updates is $|\St| -1.$ 

		 The example MDP is given in Fig.~\ref{fig: mdp example}. It contains a row of $n + 1$  states $(s_0, s_1, \dots, s_n),$ and a single absorbing state $s_{n+1}$. The transitions are deterministic. From each state except for $s_{n+1}$ there are two actions, $u$ and $d$, which respectively lead to the next state in the sequence or to $s_{n+1}$. The last state in the row, $s_n,$ leads to itself or $s_{n+1}$ by respectively taking actions $u$ or $d.$ Any action leads $s_{n+1}$ to itself.  The rewards are $0$ everywhere except for $r(s_n,u) := 1 - \gamma.$ We denote by $(v_t ,\pi_{t + 1})$ the value-policy pair at iteration $t$ of Howard's PI. We shall now describe the convergence process to the optimal policy, which is obviously $\pi^*(s) = u ~\forall s \in \St \setminus \{s_{n+1}\}$. 
		
		\textit{Initialization:} Set $\pi_0(s) = d ~\forall s \in \St \setminus \{s_{n+1}\}.$
		
		\textit{Iteration $0$:}  Clearly, $v_0 = \mathbf{0}.$ Then, for all  $s \in \St \setminus \{s_n, s_{n+1}\}, ~ \pi_1(s) = \arg\max_a \{r(s, a) + \gamma v_0(s')\} = \arg\max_a \{0, 0\} =d$ ~\footnote{The policy improvement step needs to choose between two actions that both yield values $0$. Without loss of generality, in such case, it simply chooses according to the lowest index, giving $d$ here.}. Also, $\pi_1(s_n) = u$ since $1-\gamma > 0.$
		
		\textit{Iteration $t~(t=1,\dots,n)$:} We have $${v_t(s_i) = 0 ~ \text{ for } i \in \{0, \dots, n-t\}}$$ and  $${v_t(s_i) = \gamma^{n-i} \frac{1-\gamma}{1-\gamma} = \gamma^{n-i} ~ \text{ for } i \in \{n-t+1,\dots,n\}}.$$ The policy output is thus $$\pi_{t+1}(s_i) = d ~ \text{ for } i \in \{0,\dots n-t-1\}$$ and $$\pi_{t+1}(s_i) = u ~ \text{ for } i \in \{n-t,\dots n\}.$$
		
		To summarize, at each iteration a single state updates its action to the optimal one such that at iteration $t,$ the policy stabilizes on $\pi(s)=u$ for all $s \in \{s_{n-t},\dots,s_n\}.$ Therefore, the total number of iterations until convergence is $n + 1 = |\St| - 1.$
		
		\begin{figure}
        \centering
        \includegraphics[scale=0.65]{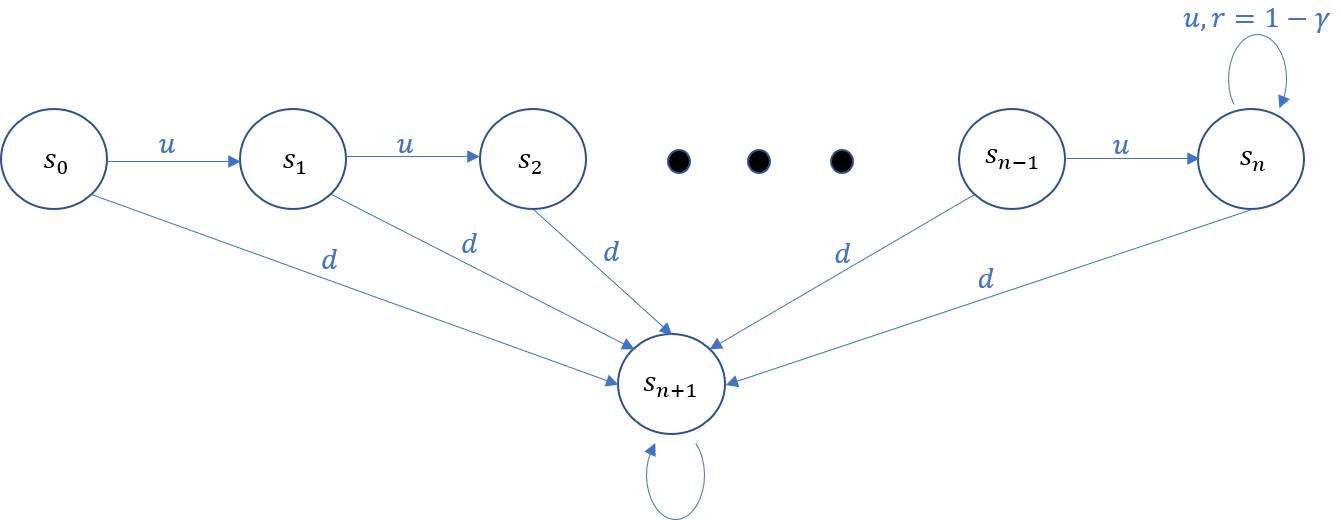}
        \caption{MDP example:  the transitions are deterministic and the rewards are $0$ everywhere except for $r(s_n,u) = 1 - \gamma.$}
        \label{fig: mdp example}
    \end{figure}
	\end{proof}
	
The exponential complexity of $m$A-PI follows, as stated in Thm.~\ref{theorem: maPI lower bound} that we recall below:
\begin{proposition*}[Lower Bound for $m$A-PI]
The number of iterations required for $m$A-PI 
to converge in  $m$-EDMDP $\mathcal{M}_m$ is $\Omega(|\X_m|) = \Omega(|\St||\A|^m).$
\end{proposition*}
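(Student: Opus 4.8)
The plan is to exploit the observation---immediate from Def.~\ref{def: EDMDP}---that the $m$-AMDP $(\X_m, \A, F, g, \gamma)$ is itself a perfectly ordinary non-delayed discounted MDP, merely one whose state space $\X_m = \St\times\A^m$ has cardinality $\abs{\X_m} = \abs{\St}\abs{\A}^m$, and that $m$A-PI is nothing but Howard's PI executed on this MDP. Consequently the general lower bound for Howard's PI established above, namely that PI may require $\Omega(\text{number of states})$ iterations, is precisely the tool needed; the only thing left to verify is that the $\Omega(\abs{\X_m})$ worst case is \emph{realizable} within the restricted subclass of MDPs that arise as $m$-AMDPs. Thus it suffices to exhibit a single base MDP $(\St, \A, P, r, \gamma)$ whose induced $m$-AMDP forces $m$A-PI into $\Omega(\abs{\X_m})$ iterations.

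First I would attempt to transplant the chain instance from the general lower bound into $\X_m$. The obstruction is that an $m$-AMDP is rigid: by \eqref{eq:transition_F} the queue coordinates evolve by a deterministic right-shift (the freshly committed action enters, the oldest is dropped), and by \eqref{eq:reward_function} the reward $g(x,a)$ is independent of the newly chosen $a$, depending only on the base state and on the action currently being executed. Hence the generic $u/d$ chain cannot be copied verbatim. Instead, the idea is to route a long directed path through the augmented states using the queue itself: taking $P$ deterministic, committed actions accumulate in the queue, so committing actions one at a time walks the agent through successive length-$m$ queues. Choosing the committed actions along a de Bruijn sequence of order $m$ over $\A$ yields a directed path visiting all $\abs{\A}^m$ distinct queues, i.e. a simple path of length $\Theta(\abs{\A}^m)$; since $\abs{\St}$ is a fixed constant of the construction, this already matches $\Omega(\abs{\A}^m) = \Omega(\abs{\St}\abs{\A}^m)$.

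On this path I would place a single positive reward at the terminal configuration and retain a second ``divert'' action sending the agent into an absorbing zero-reward region, mirroring the $u/d$ roles of the general-lemma chain. With the same lowest-index tie-breaking, the analysis of the general lower bound then carries over essentially unchanged: at each iteration exactly one augmented state updates its committed action to the path-following one, the stabilized suffix growing by one state per iteration backwards from the goal. Summing over the $\Theta(\abs{\A}^m)$ states of the path yields $\Omega(\abs{\A}^m) = \Omega(\abs{\St}\abs{\A}^m)$ iterations, which is the claim.

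The main obstacle is exactly the rigidity just described: because a committed action influences the transition only $m$ steps later and never the immediate reward, the hard instance must be synthesized entirely through the base kernel $P$ and reward $r$ rather than imported directly. The delicate checks are (i) that the de Bruijn routing genuinely produces a \emph{simple} directed path of full length $\Theta(\abs{\A}^m)$ despite the shared-suffix overlaps between consecutive queues, and (ii) that reward placement together with the $m$-step latency does not let PI shortcut the path---in particular that the positive reward fires only at the terminal configuration---so that the one-improvement-per-iteration behaviour of the general lower bound is faithfully preserved.
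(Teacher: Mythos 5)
Your overall strategy---invoke the general $\Omega(|\St|)$ lower bound for Howard's PI and transfer it to augmented MDPs---is the same as the paper's, but you read the statement more demandingly than the paper does. The paper proves the chain lower bound for \emph{general} MDPs and then simply applies it to a state space of size $|\St||\A|^m$; it never exhibits a base MDP whose induced $m$-AMDP is a hard instance. You correctly note that such a realization is what the statement literally requires (the chain itself is not an $m$-AMDP: for $m\geq 1$ the augmented reward $g(x,a)$ never depends on the chosen action $a$, whereas the chain's reward at $s_n$ does), and your de Bruijn observation---committing actions along a de Bruijn sequence of order $m$ drives the queue through all $|\A|^m$ configurations---is sound as far as it goes.

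The gap sits exactly at the two ``delicate checks'' you defer, and at least one of them fails outright. First, reward localization is impossible: in any $m$-AMDP, $g(x,a)=r(e_1^\top x,\, e_{m+1}^\top x)$ depends on only two coordinates of $x$ (the base state and the queue tail), so if $r(s^*,a^*)\neq 0$ the reward fires on the entire cylinder of $|\A|^{m-1}$ augmented states whose base is $s^*$ and whose tail is $a^*$; for $m\geq 2$ it can never fire ``only at the terminal configuration,'' no matter which base MDP you choose. Second, the divert mechanism that forces one improvement per iteration in the chain cannot be replicated: since $g(x,a)$ is independent of $a$ and the successor's base state is governed by the queue tail, the greedy step at $x$ reduces to $\arg\max_a v(x')$ over successors that differ only in the queue head, and no choice of $a$ can immediately enter an absorbing zero-reward region---a bad commitment only takes effect $m$ steps later. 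For these reasons your claim that the one-improvement-per-iteration analysis ``carries over essentially unchanged'' is unsupported; showing that synchronous greedy updates flip only one queue state per iteration on a de Bruijn-routed instance would require a genuinely new argument about value propagation in this delayed structure, which you do not supply. A smaller issue: with $|\St|$ fixed to a constant your path has length $\Theta(|\A|^m)$, which equals $\Omega(|\St||\A|^m)$ only in the degenerate sense that $|\St|=O(1)$; the two-parameter bound as stated needs instances whose iteration count also scales with $|\St|$.
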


\subsection{Proof of Theorem \ref{thm:delay_PI_augmented}}
\label{sec: augmented upper bound}
\begin{theorem*}[$m$A-PI Convergence]
The $m$A-PI algorithm as given in Alg.\ref{alg: m-execution_delay_PI} converges to the optimal value-policy pair $(\bar{v}^*, \piD^*)$ in at most $$\abs{\St}\abs{\A}^m(\abs{\A} - 1)\ceil*{\log\left(\frac{1}{\gamma}\right)^{-1}\log\left(\frac{1}{1-\gamma}\right)}$$ 
iterations. 
\end{theorem*}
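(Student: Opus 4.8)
The plan is to reduce the statement to the classical convergence analysis of Howard's policy iteration on a generic $\gamma$-discounted MDP. By Def.~\ref{def: EDMDP} together with the $m$A evaluation and improvement proposition above, the $m$-AMDP $(\X_m,\A,F,g,\gamma)$ is an ordinary discounted MDP: $v^{\piD}$ satisfies the Bellman recursion $v^{\piD}=\bar T^{\piD}v^{\piD}$, the optimal value $\bar v^*$ is the unique fixed point of the contraction $\bar T$, and $m$A-PI (Alg.~\ref{alg: m-execution_delay_PI}) is exactly Howard's PI run on this MDP, whose state space has cardinality $N:=\abs{\X_m}=\abs{\St}\abs{\A}^m$ and whose action space has cardinality $\abs{\A}$. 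Hence it suffices to prove that Howard's PI on any $\gamma$-discounted MDP with $N$ states and $\abs{\A}$ actions terminates within $N(\abs{\A}-1)\ceil*{\log(1/\gamma)^{-1}\log(1/(1-\gamma))}$ iterations, and then substitute $N=\abs{\St}\abs{\A}^m$.

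First I would record two standard facts. (i) Geometric value convergence: since $\bar T$ is monotone and $v^{\piD_{k+1}}\ge \bar T v^{\piD_k}\ge v^{\piD_k}$, the PI iterates dominate value iteration, giving $\norm{\bar v^*-v^{\piD_k}}_\infty\le \gamma^{k}\norm{\bar v^*-v^{\piD_0}}_\infty$. (ii) Define $T:=\ceil*{\log(1/\gamma)^{-1}\log(1/(1-\gamma))}$, which is precisely the least integer with $\gamma^{T}\le 1-\gamma$; this is the threshold at which the residual error contracts below the per-step reward scale.

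The core of the argument is an action-elimination lemma in the spirit of \citet{scherrer2016improved} (building on Ye's strongly-polynomial analysis of PI). I would show that within every window of $T$ consecutive non-terminating iterations, at least one state-action pair is \emph{permanently} discarded, meaning no subsequent greedy policy ever re-selects that action at that state. The mechanism is to compare, for a suboptimal action $a$ selected by $\piD_k$ at state $x$, the advantage $Q$-gap against the optimal action with the current value error $\norm{\bar v^*-v^{\piD_k}}_\infty$: once the geometric decay of fact (i) drives this error below the gap — which happens after at most $T$ further iterations — greediness with respect to the improved value can no longer favor $a$, so $(x,a)$ leaves the policy for good. Since there are at most $N(\abs{\A}-1)$ suboptimal pairs and each is eliminated at most once, the total number of iterations is bounded by $N(\abs{\A}-1)T$, which is the claimed bound after substituting $N=\abs{\St}\abs{\A}^m$.

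The main obstacle is the elimination lemma itself — precisely quantifying how the contraction rate $\gamma^{T}\le 1-\gamma$ forces a new permanent elimination every $T$ steps. This requires the delicate advantage-bounding argument of Ye/Scherrer: controlling the gap between the $Q$-value of the chosen suboptimal action and that of the optimal action, and arguing that once this gap is dominated by the shrinking value error it cannot reopen, because PI's value sequence is monotonically increasing. Everything else (the reduction, the geometric-convergence bound, and the final counting) is routine once this lemma is in place.
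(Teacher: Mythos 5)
Your proposal follows essentially the same route as the paper's proof: both reduce the claim to Scherrer's analysis of Howard's PI applied to the augmented MDP viewed as a generic $\gamma$-discounted MDP with $\abs{\X_m}=\abs{\St}\abs{\A}^m$ states, invoke the $\gamma$-contraction of $\norm{\bar v^*-v^{\piD_t}}_\infty$, and conclude via the advantage-based argument that a sub-optimal state-action pair is permanently eliminated every $t^*=\ceil*{\log(1/\gamma)^{-1}\log(1/(1-\gamma))}$ iterations, counted against the $\abs{\X_m}(\abs{\A}-1)$ available pairs. The paper likewise defers the technical core (your ``elimination lemma'') to \citet{scherrer2016improved}, so your level of rigor matches its proof.
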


\begin{proof}
The proof proceeds in three steps which follow the same lines as in \citep{scherrer2016improved} except that here, we adapt that method to the augmented MDP $\mathcal{M}_m$ with its corresponding Bellman operators $\bar{T}^{\piD}$ and $\bar{T}$ instead. For completeness, we recall these three steps whose proofs can be found in \citep{scherrer2016improved}.

Given policy $\piD_t$ output at iteration $t$, define the advantage of $\piD'$ w.r.t. $\piD$ as:
$$
a^{\piD'}_{\piD} := \bar{T}^{\piD'}v^{\piD} - v^{\piD}
$$
and the maximal advantage w.r.t. $\piD$ as 
$$
a_{\piD} := \max_{\piD'\in\PiD}a^{\piD'}_{\piD} = \max_{\piD'\in\PiD}\bar{T}^{\piD'}v^{\piD} - v^{\piD} = \bar{T}v^{\piD} - v^{\piD}.
$$

\textit{Step 1 \citep{scherrer2016improved}[Lemma 10]. }
For all augmented policies $\piD, \piD'\in\PiD$, 
$
v^{\piD'} - v^{\piD} = (\bar{I} - \gamma F^{\piD'})^{-1}a_{\piD}^{\piD'} = (\bar{I} - \gamma F^{\piD})^{-1} (-a_{\piD'}^{\piD}),
$
with $\bar{I}$ being the identity matrix in $\mathbb{R}^{|\X_m|\times|\A|}$.

\textit{Step 2 \citep{scherrer2016improved}[Lemma 2]. } 
Define as $\bar{v}^* = v^{\piD^*}$ the optimal value function of the augmented MDP $\mathcal{M}_m$ as defined in Def.~\ref{def: EDMDP}. Then, the sequence $(\norm{\bar{v}^* - v^{\piD_t}}_{\infty})_{t\geq 0}$ built by the $m$A-PI algorithm as given in Alg.\ref{alg: m-execution_delay_PI} is a $\gamma$-contraction w.r.t. the max-norm. 

\textit{Step 3 \citep{scherrer2016improved}[Section 7]. }
Let $x_0 \in\X_m$ be such that $-a_{\piD^*}^{\piD_0}(x_0) = \norm{a_{\piD^*}^{\piD_0}}_\infty$. Then, for all $t\geq 0$ we have
$$
-a_{\piD^*}^{\piD_t}(x_0) \leq \norm{a_{\piD^*}^{\piD_t}}_\infty \leq \frac{\gamma^t}{1-\gamma}\norm{a_{\piD^*}^{\piD_0}}_\infty = \frac{\gamma^t}{1-\gamma}(-a_{\piD^*}^{\piD_0}(x_0)).
$$

From there it results that $\piD_t(x_0)$ must be different from $\piD_0(x_0)$ whenever $\frac{\gamma^t}{1-\gamma} <1$, that is, for all iterations 
$$
t > \ceil*{\frac{\log(1/(1-\gamma))}{\log(1/\gamma)}}=: t^*.
$$
Therefore, one sub-optimal action is eliminated in favor of a better one within $t^*$ iterations. There are at most $\abs{\X_m}(\abs{\A}-1)$ of them, which ends the proof. 
\end{proof}

\section{Execution-Delay MDP: A New Formulation}
\label{sec: EDMDP formulation}
Let $\mu$ be the initial state distribution. Then policy $\pi\in\Pi^{\textsc{HR}}$ induces a probability measure on $(\Omega, \mathcal{B}(\Omega))$ denoted by $\mathbb{P}^{\pi}_m$ and defined through the following:
\begin{align}
    &\mathbb{P}^{\pi}_m(\tilde{s}_0 = s_0) = \mu (s_0);\label{eq: init state distribution}\\
    &\mathbb{P}^{\pi}_m(\tilde{a}_t = a | \tilde{h}_{t} = h_{t}) = \delta_{\bar{a}_t}(a), \qquad \forall t<m; \label{eq: init policy}\\
    &\mathbb{P}^{\pi}_m(\tilde{a}_t = a | \tilde{h}_{t-m} = h_{t-m}) = q_{d_{t-m}(h_{t-m})}(a), \qquad\forall t\geq m; \label{eq: delayed policy} \\
    &\mathbb{P}^{\pi}_m(\tilde{s}_{t+1} = s | \tilde{h}_t = (h_{t-1}, a_{t-1}, s_t), \tilde{a}_t = a_t) = P(s|s_t, a_t). \label{eq: state transition function}
\end{align}
\subsection{Proof of Proposition \ref{proposition: delayed process distribution}}
\label{appendix: delayed process distrib}
\begin{proof}
We first state the following, which holds by definition of conditional probability. For all measurable sets $A_1,\cdots, A_n \in \mathcal{B}(\Omega)$, we have 
\begin{equation}
\label{eq: definition conditional}
  \mathbb{P}_m^{\pi}(\cap_{i=1}^{n}A_i) = \left( \prod_{i=1}^{n-1}\mathbb{P}_m^{\pi}(A_i| \cap_{j=i+1}^{n} A_j)\right)\mathbb{P}_m^{\pi}(A_n).  
\end{equation}
Applying \eqref{eq: definition conditional} to $n = 2t+1$ on the following events:
\begin{align*}
    A_{2t+1} &:= \{\tilde{s}_0 = s_0\}\\
    A_{2t} &:= \{\tilde{a}_0 = a_0\}\\
    &\vdots \\
    A_2 &:= \{\tilde{a}_{t-1} = a_{t-1}\}\\
    A_1 &:= \{\tilde{s}_t = s_t\},
\end{align*}
we obtain that 
\begin{align*}
  &\mathbb{P}^{\pi}_m(\tilde{s}_0 = s_0, \tilde{a}_0 = a_0, \cdots, \tilde{a}_{t-1}  = a_{t-1}, \tilde{s}_t  = s_t)\\
  &= \mathbb{P}^{\pi}_m(\tilde{s}_0 = s_0) \prod_{i = 0}^{t-1}\mathbb{P}^{\pi}_m(\tilde{a}_i = a_i|\tilde{s}_0 = s_0, \tilde{a}_0 = a_0, \cdots, \tilde{s}_{i}  = s_{i})  \mathbb{P}^{\pi}_m(\tilde{s}_{i+1} = s_{i+1}|\tilde{s}_0 = s_0, \tilde{a}_0 = a_0, \cdots, \tilde{a}_{i}  = a_{i})\\
  &= \mathbb{P}^{\pi}_m(\tilde{s}_0 = s_0) \prod_{i = 0}^{t-1}\mathbb{P}^{\pi}_m(\tilde{a}_i = a_i|\tilde{h}_{i} = h_{i})
 \mathbb{P}^{\pi}_m(\tilde{s}_{i+1} = s_{i+1}|\tilde{h}_{i} = (h_{i-1}, a_{i-1}, s_i), \tilde{a}_{i}  = a_{i})
\end{align*}
If $t \leq m$, then $0\leq i < m$ and by Eqs.~\eqref{eq: init state distribution}, \eqref{eq: init policy} and \eqref{eq: state transition function},   
\begin{align*}
  \mathbb{P}^{\pi}_m(\tilde{s}_0 = s_0, \tilde{a}_0 = a_0, \cdots, \tilde{a}_{t-1}  = a_{t-1}, \tilde{s}_t  = s_t)
  =   \mu(s_0) \left(\prod_{i = 0}^{t-1}\delta_{\bar{a}_i}(a_i)
  P( s_{i+1}| s_i,  a_i)\right).
\end{align*}
Otherwise, by Eq.~\eqref{eq: delayed policy}, 
\begin{align*}
 &\mathbb{P}^{\pi}_m(\tilde{s}_0 = s_0, \tilde{a}_0 = a_0, \cdots, \tilde{a}_{t-1}  = a_{t-1}, \tilde{s}_t  = s_t)\\
  &=  \mathbb{P}^{\pi}_m(\tilde{s}_0 = s_0) \prod_{i = 0}^{m-1}\mathbb{P}^{\pi}_m(\tilde{a}_i = a_i|\tilde{h}_{i} = h_{i})
  \mathbb{P}^{\pi}_m(\tilde{s}_{i+1} = s_{i+1}|\tilde{h}_i = (h_{i-1}, a_{i-1}, s_i), \tilde{a}_i  = a_i)\\
  &\quad \prod_{k = m}^{t-1}\mathbb{P}^{\pi}_m(\tilde{a}_k = a_k|\tilde{h}_k = h_k)
 \mathbb{P}^{\pi}_m(\tilde{s}_{k+1} = s_{k+1}|\tilde{h}_k = (h_{k-1}, a_{k-1}, s_k), \tilde{a}_k  = a_k) \\
  &= \mu(s_0)\left(\prod_{i = 0}^{m-1}\delta_{\bar{a}_i}(a_i) P( s_{i+1}| s_i,  a_{i})\right)\left( \prod_{k = m}^{t-1}q_{d_{k-m}(h_{k-m})}(a_k) P( s_{k+1}| s_k,  a_{k})\right),
\end{align*}
which concludes the proof.
\end{proof}

\subsection{Remark regarding the Markov property}
\label{appx: remark markov property}
For $T > t \geq m$, the conditional probability can be evaluated through:
\begin{align*}
&\mathbb{P}^{\pi}_m(\tilde{a}_t = a_t, \tilde{s}_{t+1} = s_{t+1},\cdots, \tilde{a}_{T-1} = a_{T-1}, \tilde{s}_T = s_T| \tilde{s}_0 = s_0, \tilde{a}_0 = a_0,\cdots, \tilde{a}_{t-1} = a_{t-1},\tilde{s}_t  = s_t)\\
&= \frac{ \mathbb{P}^{\pi}_m(\tilde{s}_0 = s_0, \tilde{a}_0 = a_0, \cdots,\tilde{a}_{T-1} = a_{T-1}, \tilde{s}_T = s_T)}{ \mathbb{P}^{\pi}_m(\tilde{s}_0 = s_0, \tilde{a}_0 = a_0,\cdots, \tilde{a}_{t-1} = a_{t-1},\tilde{s}_t  = s_t)}\\
&= q_{d_{t-m}(h_{t-m})}(a_{t})
 P(s_{t+1}| s_{t}, a_{t}) \cdots q_{d_{T-m-1}(h_{T-m-1})}(a_{T-1}) P(s_T| s_{T-1}, a_{T-1}).
\end{align*}
For a stationary policy $\pi:=(d,d,\cdots)\in\Pi^{\textsc{SR}}$, this simplifies to  
\begin{align*}
&\mathbb{P}^{\pi}_m(\tilde{a}_t = a_t, \tilde{s}_{t+1} = s_{t+1},\cdots, \tilde{a}_{T-1} = a_{T-1}, \tilde{s}_T = s_T| \tilde{s}_0 = s_0, \tilde{a}_0 = a_0,\cdots, \tilde{a}_{t-1} = a_{t-1},\tilde{s}_t  = s_t)\\
&= q_{d(s_{t-m})}(a_{t})
 P(s_{t+1}| s_{t}, a_{t}) \cdots q_{d(s_{T-m-1})}(a_{T-1}) P(s_T| s_{T-1}, a_{T-1}).
\end{align*}
Observing that the resulting conditional probability is a function of past observations when $m>0$, we conclude that even under a stationary policy, the induced stochastic process is not a Markov chain. This is different from the standard MDP setting in which any Markov policy induces a discrete time Markov chain \citep{puterman2014markov}[Sec. 2.1.6].

\subsection{Proof of Theorem \ref{theorem: markov policy is sufficient}}
\label{appendix: markov policy is sufficient}
We first prove the following lemma, which will be used in the theorem's proof. 

\begin{lemma}
\label{lemma: independency}
For all $m > 0$, $t\geq 0$, 
\begin{align}
    \label{eq: rm}
    \mathbb{P}^{\pi}_m(\tilde{s}_{t+1} = s' | \tilde{a}_{t+1} = a', \tilde{s}_{t} = s, \tilde{a}_{t} = a) =
     \mathbb{P}^{\pi}_m(\tilde{s}_{t+1} = s' |  \tilde{s}_{t} = s, \tilde{a}_{t} = a)
\end{align}
\end{lemma}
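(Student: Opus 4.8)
The plan is to show that \emph{both} conditional probabilities appearing in \eqref{eq: rm} are in fact equal to the one-step transition kernel $P(s'|s,a)$, exploiting that the distribution of the action $\tilde a_{t+1}$ is governed by history strictly preceding the state $\tilde s_{t+1}$ and hence cannot depend on it. Equality of the two sides then follows immediately.

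First I would expand the right-hand side as a ratio and marginalize over all histories $h_t=(s_0,a_0,\dots,a_{t-1},s_t)$ ending in $\tilde s_t = s$. Using the factorization of Prop.~\ref{proposition: delayed process distribution} together with the transition rule \eqref{eq: state transition function}, I get $\mathbb{P}^{\pi}_m(\tilde s_{t+1}=s',\tilde s_t=s,\tilde a_t=a)=P(s'|s,a)\,C$ and $\mathbb{P}^{\pi}_m(\tilde s_t=s,\tilde a_t=a)=C$, where $C:=\sum_{h_t:\,s_t=s}\mathbb{P}^{\pi}_m(h_t)\,\mathbb{P}^{\pi}_m(\tilde a_t=a\mid \tilde h_t=h_t)$. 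The key point is that $\mathbb{P}^{\pi}_m(\tilde s_{t+1}=s'\mid \tilde h_t=h_t,\tilde a_t=a)=P(s'|s,a)$ depends only on $(s,a)$ and therefore factors out of the history sum, so the right-hand side equals $P(s'|s,a)$.

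Next I would treat the left-hand side by the same marginalization, but now carrying the extra event $\{\tilde a_{t+1}=a'\}$. Appending one action step to Prop.~\ref{proposition: delayed process distribution}, the conditional law of $\tilde a_{t+1}$ given $\tilde h_{t+1}=(h_t,a,s')$ is, by \eqref{eq: init policy}--\eqref{eq: delayed policy}, either $\delta_{\bar a_{t+1}}(a')$ when $t+1<m$ or $q_{d_{t+1-m}(h_{t+1-m})}(a')$ when $t+1\ge m$. In either case this factor is a function of a prefix $h_{t+1-m}$ of $h_t$ only, which I will denote $\phi(h_t)$, and in particular it does not depend on $s'=s_{t+1}$. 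Hence the numerator becomes $P(s'|s,a)\sum_{h_t:\,s_t=s}\mathbb{P}^{\pi}_m(h_t)\,\mathbb{P}^{\pi}_m(\tilde a_t=a\mid \tilde h_t=h_t)\,\phi(h_t)$, and summing this over $s'$ (using $\sum_{s'}P(s'|s,a)=1$ and that $\phi$ is $s'$-free) yields exactly the denominator $\mathbb{P}^{\pi}_m(\tilde a_{t+1}=a',\tilde s_t=s,\tilde a_t=a)$. The common $\phi$-weighted sum over histories cancels, leaving $P(s'|s,a)$ again, which matches the right-hand side.

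The main obstacle is the careful bookkeeping of the delayed decision rule: I must verify that the factor governing $\tilde a_{t+1}$ truly depends only on $h_{t+1-m}$, a prefix of $h_t$, so that it is independent of $s_{t+1}$ and cancels in the ratio; this forces the case split $t+1<m$ versus $t+1\ge m$, and for the latter it relies on the fact that $d_{t+1-m}$ acts on history no later than time $t$. A minor point to state explicitly is that all conditioning events are assumed to have positive probability (when $t+1<m$ this forces $a'=\bar a_{t+1}$, otherwise the left-hand side is vacuous), and that $\sum_{s'}P(s'|s,a)=1$ is precisely what collapses the $\tilde a_{t+1}$-weighted history sum back to the denominator.
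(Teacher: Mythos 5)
Your proposal is correct, and it reaches the result by a more explicit route than the paper. The paper's proof is a short Bayes-rule argument: it flips the conditional to $\mathbb{P}^{\pi}_m(\tilde{a}_{t+1} = a' \mid \tilde{s}_{t+1} = s', \tilde{s}_{t} = s, \tilde{a}_{t} = a)$, asserts that this equals $\mathbb{P}^{\pi}_m(\tilde{a}_{t+1} = a' \mid \tilde{s}_{t} = s, \tilde{a}_{t} = a)$ because $\tilde{a}_{t+1}$ is governed by the history up to time $t+1-m \le t$, and cancels the resulting factor against the denominator. You instead compute both sides of the identity directly by marginalizing over histories $h_t$ ending in $s_t = s$ and invoking the sample-path factorization of Prop.~\ref{proposition: delayed process distribution}, concluding that each side equals the transition kernel $P(s'\mid s,a)$ --- a stronger intermediate statement than the lemma itself. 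Both arguments rest on the same structural fact (with $m>0$, the conditional law of $\tilde{a}_{t+1}$ is a function of a prefix $h_{t+1-m}$ of $h_t$, hence cannot depend on $s_{t+1}$), but your version makes rigorous precisely the step the paper leaves informal: the paper's claim that ``$\tilde{a}_{t+1}$ is independent of $\tilde{s}_{t+1}$'' is exactly the conditional-independence statement that your $\phi(h_t)$-cancellation establishes, and a skeptical reader would need your history-sum computation to verify it. What the paper's route buys is brevity; what yours buys is a self-contained verification (including the positivity caveat on the conditioning events and the explicit case split $t+1<m$ versus $t+1\ge m$) together with the explicit identification of both conditionals with $P(s'\mid s,a)$.
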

\begin{proof}
First, note that for all delay value $m>0$, $\tilde{a}_{t+1}$ only depends on the history up to $t-m+1$, which is $h_{t-m+1} = (h_{t-m}, a_{t-m}, s_{t-m+1}),$ as Eq.~\eqref{eq: delayed policy} suggests. Thus, since $t-m+1< t+1$, we have that $\tilde{a}_{t+1}$ is independent of $\tilde{s}_{t+1}$.
Using Bayes rule, it follows that 
\begin{align*}
    &\mathbb{P}^{\pi}_m(\tilde{s}_{t+1} = s' | \tilde{a}_{t+1} = a', \tilde{s}_{t} = s, \tilde{a}_{t} = a) \\
    &=\frac{\mathbb{P}^{\pi}_m(\tilde{a}_{t+1} = a' | \tilde{s}_{t+1} = s', \tilde{s}_{t} = s, \tilde{a}_{t} = a)
    \mathbb{P}^{\pi}_m(\tilde{s}_{t+1} = s'| \tilde{s}_{t} = s, \tilde{a}_{t} = a)}{\mathbb{P}^{\pi}_m(\tilde{a}_{t+1} = a' |  \tilde{s}_{t} = s, \tilde{a}_{t} = a)}\\
    &= \frac{\mathbb{P}^{\pi}_m(\tilde{a}_{t+1} = a' | \tilde{s}_{t} = s, \tilde{a}_{t} = a)
    \mathbb{P}^{\pi}_m(\tilde{s}_{t+1} = s'| \tilde{s}_{t} = s, \tilde{a}_{t} = a)}{\mathbb{P}^{\pi}_m(\tilde{a}_{t+1} = a' |  \tilde{s}_{t} = s, \tilde{a}_{t} = a)}\\
    &=\mathbb{P}^{\pi}_m(\tilde{s}_{t+1} = s' |  \tilde{s}_{t} = s, \tilde{a}_{t} = a).
\end{align*}
\end{proof}

\begin{theorem*}
Let $\pi := (d_0,d_1,\cdots)\in \Pi^{\textsc{HR}}$ be a history dependent policy. For all $s_0\in\St$, there exists a Markov policy $\pi' := (d_0',d_1',\cdots)\in \Pi^{\textsc{MR}}$  that yields the same process distribution as $\pi$, \ie for all $a\in\A, s'\in \St, t\geq m, $
\begin{equation}
\label{eq: HD_to_MD}
  \mathbb{P}^{\pi'}_m(\tilde{s}_{t-m}  = s', \tilde{a}_t  = a | \tilde{s}_0  = s_0) = 
\mathbb{P}^{\pi}_m(\tilde{s}_{t-m}  = s', \tilde{a}_t  = a | \tilde{s}_0  = s_0).
\end{equation}
\end{theorem*}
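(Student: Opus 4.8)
The plan is to build $\pi'$ directly from the conditional action distributions of $\pi$ and then show the two processes agree. Fix $s_0$. For every decision time $k\ge 0$ and every state $s$ reachable at time $k$ under $\pi$ (i.e.\ $\mathbb{P}^{\pi}_m(\tilde s_k=s\mid \tilde s_0=s_0)>0$), I would define the Markov decision rule $d_k'$ by $q_{d_k'(s)}(a):=\mathbb{P}^{\pi}_m(\tilde a_{k+m}=a\mid \tilde s_k=s,\tilde s_0=s_0)$ (arbitrary on unreachable states), and let $\pi'$ reuse the exogenous queue $\bar a$ over the first $m$ steps. This is precisely the sequence of Markov rules that reconstructs the time-dependent action distribution of $\pi$.

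Next I would reduce the theorem to an equality of marginal state distributions. Since $\tilde a_t$ depends on the past only through $\tilde h_{t-m}$ by \eqref{eq: delayed policy}, Prop.~\ref{proposition: delayed process distribution} gives, for either policy, $\mathbb{P}_m(\tilde s_{t-m}=s',\tilde a_t=a\mid s_0)=\sum_{h_{t-m}:\,s_{t-m}=s'}\mathbb{P}_m(\tilde h_{t-m}=h_{t-m}\mid s_0)\,q_{d_{t-m}(h_{t-m})}(a)$, where I have used that the choice of $\tilde a_t$ is a fresh draw given $\tilde h_{t-m}$, independent of the intervening transitions. For the Markov $\pi'$ the decision term sees $h_{t-m}$ only through $s_{t-m}=s'$, so the sum collapses to $q_{d'_{t-m}(s')}(a)\,\mathbb{P}^{\pi'}_m(\tilde s_{t-m}=s'\mid s_0)$; by the definition of $d'_{t-m}$ this equals the expression for $\pi$ exactly when $\mathbb{P}^{\pi'}_m(\tilde s_{t-m}=s'\mid s_0)=\mathbb{P}^{\pi}_m(\tilde s_{t-m}=s'\mid s_0)$. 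Hence it suffices to prove that $\pi$ and $\pi'$ induce the same marginal state distribution at every step.

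Finally I would prove $\mathbb{P}^{\pi'}_m(\tilde s_k=\cdot\mid s_0)=\mathbb{P}^{\pi}_m(\tilde s_k=\cdot\mid s_0)$ by induction on $k$. The base case $k\le m$ is immediate, because the first $m$ transitions are driven solely by the fixed queue $\bar a$ and the common kernel $P$, identically for both policies. For the inductive step I would push the marginal forward one transition, $\mathbb{P}_m(\tilde s_{k+1}=s'')=\sum_{s,a}P(s''\mid s,a)\,\mathbb{P}_m(\tilde s_k=s,\tilde a_k=a)$, and use Lemma~\ref{lemma: independency} to decouple the executed action from the ensuing one-step transition, rewriting the joint law of $(\tilde s_k,\tilde a_k)$ in terms of the decision-time state $\tilde s_{k-m}$, the reconstructed action distribution, and lower-index marginals already matched by the hypothesis.

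I expect this last step to be the main obstacle. Unlike the non-delayed setting, where the state--action marginal recursion closes at a single time index (the action at $t$ conditions on the state at $t$), here $\tilde a_t$ is committed $m$ steps before it acts, so under a history-dependent $\pi$ it is correlated with the intervening dynamics through the entire history $\tilde h_{t-m}$, whereas under $\pi'$ it is tied only to $\tilde s_{t-m}$. Bridging this delay — arguing that the forward state recursion is insensitive to these extra correlations, so that matched decision-time marginals and matched conditional action distributions still force the next-step state marginal to agree — is the crux, and the repeated application of Lemma~\ref{lemma: independency} (independence of the next state from the next action given the current state--action pair) is the tool I would rely on to carry the induction through.
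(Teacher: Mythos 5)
Your construction of $\pi'$ and your reduction are exactly the paper's: you define $q_{d'_{t-m}(s')}(a)=\mathbb{P}^{\pi}_m(\tilde{a}_t=a\mid \tilde{s}_{t-m}=s',\tilde{s}_0=s_0)$ and correctly observe that, by this construction, the joint equality \eqref{eq: HD_to_MD} follows once the state marginals $\mathbb{P}^{\pi}_m(\tilde{s}_k=\cdot\mid\tilde{s}_0)$ and $\mathbb{P}^{\pi'}_m(\tilde{s}_k=\cdot\mid\tilde{s}_0)$ agree; the base case $k\le m$ is policy-independent, as you say. But the entire mathematical content of the theorem lies in the inductive step you defer, and the route you sketch for it fails. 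Your plan is to push the state marginal forward via $\mathbb{P}_m(\tilde{s}_{k+1}=s'')=\sum_{s,a}P(s''\mid s,a)\,\mathbb{P}_m(\tilde{s}_k=s,\tilde{a}_k=a)$ and to ``rewrite the joint law of $(\tilde{s}_k,\tilde{a}_k)$'' in terms of the decision-time state, the reconstructed action distribution, and already-matched marginals. That rewriting is impossible: under a history-dependent $\pi$, the committed action $\tilde{a}_k$ and the intervening executed actions are both functions of the shared history $\tilde{h}_{k-m}$, so $\tilde{a}_k$ and $\tilde{s}_k$ are correlated given $\tilde{s}_{k-m}$, whereas under $\pi'$ they are conditionally independent. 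Consequently the same-time joints of $\pi$ and $\pi'$ genuinely differ, even when the theorem's conclusion holds. Concretely, take $m=1$, $\St=\{A,B\}$, $\A=\{0,1\}$, $P(\cdot\mid s,0)$ uniform and $P(\cdot\mid s,1)=\delta_B$ for every $s$, $s_0=A$, $\bar{a}_0=0$, and the HR policy committing $a_1=0$, $a_2=\mathbbm{1}[s_1=B]$, $a_3=\mathbbm{1}[s_1=B]$. Then $\mathbb{P}^{\pi}_1(\tilde{s}_3=A,\tilde{a}_3=0\mid\tilde{s}_0)=\nicefrac{1}{4}$, while the reconstructed $\pi'$ has $d'_2\equiv\mathrm{Unif}\{0,1\}$ (because $\tilde{s}_2$ is independent of $\tilde{s}_1$ here), giving $\mathbb{P}^{\pi'}_1(\tilde{s}_3=A,\tilde{a}_3=0\mid\tilde{s}_0)=\nicefrac{1}{8}$. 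The next-step marginals still agree, but only through a cancellation inside the sum $\sum_{s,a}P(\cdot\mid s,a)[\cdot]$ that your induction hypothesis (marginals only) cannot see. Lemma~\ref{lemma: independency} does not repair this: it decouples $\tilde{s}_{t+1}$ from the \emph{next} committed action $\tilde{a}_{t+1}$ given $(\tilde{s}_t,\tilde{a}_t)$, which is a different correlation from the history-induced one between $\tilde{a}_k$ and $\tilde{s}_k$.

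The paper's proof (Appx.~\ref{appendix: markov policy is sufficient}) closes this gap with machinery your proposal lacks. It inducts on the \emph{joint} equality \eqref{eq: HD_to_MD} itself (not on state marginals alone), aligns the recursion with the delay structure by writing $n-1=km+r$, and unrolls the whole block of intermediate states and executed actions between times $km$ and $km+r$. Within that block it repeatedly invokes the delayed-dependence property \eqref{eq: delayed policy} (the action executed at $km+j$ was decided from $\tilde{h}_{(k-1)m+j}$, a time strictly before the block) together with Lemma~\ref{lemma: independency}, and then uses the induction hypothesis applied to earlier \emph{decision-time joints} $(\tilde{s}_{(k-1)m+i-1},\tilde{a}_{km+i-1})$ to replace each $\pi$-action marginal by the corresponding $\pi'$-action marginal, factor by factor, until the only $\pi$-dependent term left is $\mathbb{P}^{\pi}_m(\tilde{s}_m=\cdot\mid\tilde{s}_0)$, which is policy-independent by Prop.~\ref{proposition: delayed process distribution}. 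In short: your construction and final reduction match the paper, but the inductive engine --- block decomposition, a joint (not marginal) hypothesis, and explicit handling of the correlation between committed actions and the intervening dynamics --- is missing, and the one-step recursion you propose in its place is provably insufficient.
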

\begin{proof}
When $m= 0$, the result holds true by standard RL theory \citep{puterman2014markov}[Thm 5.5.1]. Thus, assume that $m >0$. 
Fix $s\in \St$. Let $\pi' :=(d_0',d_1',\cdots)$ with $d_0': \{s\} \rightarrow \Delta_{\A}$ defined  as 
\begin{equation}
    \label{eq: d_0 tag}
    q_{d_{0}'(s)}(a):= \mathbb{P}^{\pi}_m(\tilde{a}_m = a | \tilde{s}_0  = s )
\end{equation} 
and for all $t > m$,
\begin{equation}
    \label{eq: policy_tag_markov}
    \begin{split}
        q_{d_{t-m}'(s')}(a) := \mathbb{P}^{\pi}_m(\tilde{a}_t = a |\tilde{s}_{t-m}  =s', \tilde{s}_0  = s ), \quad \forall s'\in\St, a\in\A.
    \end{split}
\end{equation}
For the policy $\pi'$ defined as in Eqs.~\eqref{eq: d_0 tag}-\eqref{eq: policy_tag_markov}, we prove Eq.~\eqref{eq: HD_to_MD} by induction on $t\geq m$. By construction of $\pi'$, the induction base is satisfied at $t=m$. By construction of $\pi'$ again, for all $t > m$ we have
\begin{align}
     \mathbb{P}^{\pi'}_m(\tilde{a}_{t} = a |\tilde{s}_{t-m} = s',\tilde{s}_0  = s ) &= \mathbb{P}^{\pi'}_m(\tilde{a}_{t} = a |\tilde{s}_{t-m}  =s') \nonumber \\
     &= q_{d_{t-m}'(s')}(a) \nonumber\\
    &= \mathbb{P}^{\pi}_m(\tilde{a}_{t} = a |\tilde{s}_{t-m} =s',  \tilde{s}_0  = s )  \label{eq:action given s_0}.
\end{align}
Assume that Eq.~\eqref{eq: HD_to_MD} holds up until $t = n-1$. Further let the Euclidean division $n-1 = km + r$ of $n-1$ by $m$, so that $k,r\in\mathbb{N}$ with $0\leq r< m$. Then, we can write
\scriptsize{
\begin{align*}
&\mathbb{P}^{\pi}_m(\tilde{s}_{n}  = s' | \tilde{s}_0  = s)\\
&= \sum_{\substack{s_{km+r}\in\St,\\ a_{km+r}\in\A}}\mathbb{P}^{\pi}_m(\tilde{s}_{n}  = s' , \tilde{s}_{km+r}  = s_{km+r}, \tilde{a}_{km+r}  = a_{km+r}|\tilde{s}_0  = s) \\
&= \sum_{\substack{s_{km+r}\in\St,\\ a_{km+r}\in\A}}\mathbb{P}^{\pi}_m(\tilde{s}_{n}  = s' |\tilde{a}_{km+r}  = a_{km+r}, \tilde{s}_{km+r}  = s_{km+r},   \tilde{s}_0  = s)\\
&\qquad\qquad \mathbb{P}^{\pi}_m(\tilde{a}_{km+r}  = a_{km+r}| \tilde{s}_{km+r}  = s_{km+r},   \tilde{s}_0  = s) \mathbb{P}^{\pi}_m(\tilde{s}_{km + r}  = s_{km+r} | \tilde{s}_0  = s)\\
&= \sum_{\substack{s_{km+r}\in\St,\\ a_{km+r}\in\A}}P(s' | s_{km+r}, a_{km+r})
 \mathbb{P}^{\pi}_m(\tilde{a}_{km+r}  = a_{km+r}| \tilde{s}_{km+r}  = s_{km+r},   \tilde{s}_0  = s)\mathbb{P}^{\pi}_m(\tilde{s}_{km + r}  = s_{km+r} | \tilde{s}_0  = s).
\end{align*}}
\normalsize By Eq.~\eqref{eq: delayed policy}, $\tilde{a}_{km+r}$ only depends on history up to $(k-1)m+r$. Thus,
$\mathbb{P}^{\pi}_m(\tilde{a}_{km+r} = a_{km+r}| \tilde{s}_{km+r}  = s_{km+r}, \tilde{s}_{0}  = s )=\mathbb{P}^{\pi}_m(\tilde{a}_{km+r} = a_{km+r}| \tilde{s}_{0} = s)$ and
\begin{align*}
&\mathbb{P}^{\pi}_m(\tilde{s}_{n}  = s' | \tilde{s}_0  = s)\\ &=\sum_{\substack{s_{km+r}\in\St,\\ a_{km+r}\in\A}}P(s' | s_{km+r}, a_{km+r}) \mathbb{P}^{\pi}_m(\tilde{a}_{km+r}  = a_{km+r}| \tilde{s}_0  = s) \mathbb{P}^{\pi}_m(\tilde{s}_{km + r}  = s_{km+r} | \tilde{s}_0  = s).
\end{align*}
Since $km+r=n-1$, by the induction hypothesis we can rewrite
\begin{align*}
\mathbb{P}^{\pi}_m(\tilde{a}_{km+r}  = a_{km+r}| \tilde{s}_0  = s)  
&= \sum_{s_{(k-1)m+r}\in\St}\mathbb{P}^{\pi}_m(\tilde{a}_{km+r}  = a_{km+r}, \tilde{s}_{(k-1)m + r}  = s_{(k-1)m+r}| \tilde{s}_0  = s)\\
&= \sum_{s_{(k-1)m+r}\in\St}\mathbb{P}^{\pi'}_m(\tilde{a}_{km+r}  = a_{km+r}, \tilde{s}_{(k-1)m + r}  = s_{(k-1)m+r}| \tilde{s}_0  = s)\\
&= \mathbb{P}^{\pi'}_m(\tilde{a}_{km+r}  = a_{km+r}| \tilde{s}_0  = s),  
\end{align*}
so that 
\begin{align*}
&\mathbb{P}^{\pi}_m(\tilde{s}_{n}  = s' | \tilde{s}_0  = s)\\ &=\sum_{\substack{s_{km+r}\in\St,\\ a_{km+r}\in\A}}P(s' | s_{km+r}, a_{km+r}) \mathbb{P}^{\pi'}_m(\tilde{a}_{km+r}  = a_{km+r}| \tilde{s}_0  = s) \mathbb{P}^{\pi}_m(\tilde{s}_{km + r}  = s_{km+r} | \tilde{s}_0  = s).
\end{align*}
We now study the last term in the above equation, $\mathbb{P}^{\pi}_m(\tilde{s}_{km + r}  = s_{km+r} | \tilde{s}_0).$ We have
\scriptsize{\scriptsize{\begin{align*}
 &\mathbb{P}^{\pi}_m(\tilde{s}_{km + r}  = s_{km+r} | \tilde{s}_0  = s)\nonumber\\
&= \sum_{\substack{s_{km + r-1},\cdots, s_{km}\in\St\\ a_{km + r-1},\cdots, a_{km}\in\A}}\mathbb{P}^{\pi}_m(\tilde{s}_{km + r}  = s_{km+r},\tilde{s}_{km + r-1} = s_{km + r-1}, \tilde{a}_{km + r-1}= a_{km + r-1}, \cdots, \tilde{s}_{km} = s_{km}, \tilde{a}_{km} = a_{km}| \tilde{s}_{0}  = s )\nonumber\\
&= \sum_{\substack{s_{km + r-1},\cdots, s_{km}\in\St\\ a_{km + r-1},\cdots, a_{km}\in\A}}\mathbb{P}^{\pi}_m(\tilde{s}_{km + r}  = s_{km+r}|\tilde{s}_{km + r-1} = s_{km + r-1}, \tilde{a}_{km + r-1}= a_{km + r-1},  \cdots, \tilde{s}_{km} = s_{km}, \tilde{a}_{km} = a_{km}, \tilde{s}_{0}  = s )\nonumber\\
&\quad \mathbb{P}^{\pi}_m(\tilde{s}_{km + r-1} = s_{km + r-1}, \tilde{a}_{km + r-1}= a_{km + r-1},  \cdots, \tilde{s}_{km} = s_{km}, \tilde{a}_{km} = a_{km}| \tilde{s}_{0}  = s)\nonumber\\
&= \sum_{\substack{s_{km + r-1},\cdots, s_{km}\in\St\\ a_{km + r-1},\cdots, a_{km}\in\A}}P( s_{km+r}| s_{km + r-1},  a_{km + r-1})\nonumber\\
&\quad\mathbb{P}^{\pi}_m(\tilde{s}_{km + r-1} = s_{km + r-1}, \tilde{a}_{km + r-1}= a_{km + r-1},  \cdots, \tilde{s}_{km} = s_{km}, \tilde{a}_{km} = a_{km}| \tilde{s}_{0}  = s)\nonumber\\
&= \sum_{\substack{s_{km + r-1},\cdots, s_{km}\in\St\\ a_{km + r-1},\cdots, a_{km}\in\A}}P( s_{km+r}| s_{km + r-1},  a_{km + r-1})\nonumber\\
&\quad \mathbb{P}^{\pi}_m(\tilde{s}_{km + r-1} = s_{km + r-1}| \tilde{a}_{km + r-1}= a_{km + r-1}, \tilde{s}_{km + r-2}= s_{km + r-2}, \tilde{a}_{km + r-2}= a_{km + r-2},  \cdots, \tilde{s}_{km} = s_{km}, \tilde{a}_{km} = a_{km}, \tilde{s}_{0}  = s)\nonumber\\
&\quad\mathbb{P}^{\pi}_m(\tilde{a}_{km + r-1}= a_{km + r-1}, \tilde{s}_{km + r-2}= s_{km + r-2}, \tilde{a}_{km + r-2}= a_{km + r-2},  \cdots, \tilde{s}_{km} = s_{km}, \tilde{a}_{km} = a_{km}|\tilde{s}_{0}  = s)\nonumber\\
&\overset{\textrm{Lemma~\ref{lemma: independency}}}{=} \sum_{\substack{s_{km + r-1},\cdots, s_{km}\in\St\\ a_{km + r-1},\cdots, a_{km}\in\A}}P( s_{km+r}| s_{km + r-1},  a_{km + r-1})\nonumber\\
&\quad \mathbb{P}^{\pi}_m(\tilde{s}_{km + r-1} = s_{km + r-1}| \tilde{s}_{km + r-2}= s_{km + r-2}, \tilde{a}_{km + r-2}= a_{km + r-2}, \cdots, \tilde{s}_{km} = s_{km}, \tilde{a}_{km} = a_{km}, \tilde{s}_{0}  = s)\nonumber\\
&\quad \mathbb{P}^{\pi}_m(\tilde{a}_{km + r-1}= a_{km + r-1}, \tilde{s}_{km + r-2}= s_{km + r-2}, \tilde{a}_{km + r-2}= a_{km + r-2}, \cdots, \tilde{s}_{km} = s_{km}, \tilde{a}_{km} = a_{km}|\tilde{s}_{0}  = s)\nonumber\\
&= \sum_{\substack{s_{km + r-1},\cdots, s_{km}\in\St\\ a_{km + r-1},\cdots, a_{km}\in\A}}P( s_{km+r}| s_{km + r-1},  a_{km + r-1})P(s_{km + r-1}|  s_{km + r-2}, a_{km + r-2} ) \nonumber\\
&\quad  \mathbb{P}^{\pi}_m(\tilde{a}_{km + r-1}= a_{km + r-1}, \tilde{s}_{km + r-2}= s_{km + r-2}, \tilde{a}_{km + r-2}= a_{km + r-2}, \cdots, \tilde{s}_{km} = s_{km}, \tilde{a}_{km} = a_{km}|\tilde{s}_{0}  = s)\nonumber\\
&= \sum_{\substack{s_{km + r-1},\cdots, s_{km}\in\St\\ a_{km + r-1},\cdots, a_{km}\in\A}}P( s_{km+r}| s_{km + r-1},  a_{km + r-1})P(s_{km + r-1}|  s_{km + r-2}, a_{km + r-2} ) \nonumber\\
&\quad  \mathbb{P}^{\pi}_m(\tilde{a}_{km + r-1}= a_{km + r-1}| \tilde{s}_{km + r-2}= s_{km + r-2}, \tilde{a}_{km + r-2}= a_{km + r-2} \cdots, \tilde{s}_{km} = s_{km}, \tilde{a}_{km} = a_{km}|\tilde{s}_{0}  = s)\nonumber\\
&\quad \mathbb{P}^{\pi}_m(\tilde{s}_{km + r-2}= s_{km + r-2}, \tilde{a}_{km + r-2}= a_{km + r-2} \cdots, \tilde{s}_{km} = s_{km}, \tilde{a}_{km} = a_{km}|\tilde{s}_{0}  = s)\nonumber\\
&=\nonumber\\
&\vdots\nonumber\\
&= \sum_{\substack{s_{km + r-1},\cdots, s_{km}\in\St\\ a_{km + r-1},\cdots, a_{km}\in\A}}\biggr( \prod_{i = 1}^{r} P(s_{km+i}|s_{km+i-1}, a_{km+i-1}) \biggr) \nonumber\\
&\quad\biggr( \prod_{j = 1}^{r-1}\mathbb{P}^{\pi}_m(\tilde{a}_{km+j} = a_{km+j}|\tilde{s}_{km + j-1} = s_{km + j-1},\tilde{a}_{km + j-1}= a_{km + j-1},\cdots, \tilde{s}_{km} = s_{km}, \tilde{a}_{km} = a_{km} ,\tilde{s}_{0}  = s )\biggr)\nonumber\\
&\quad\mathbb{P}^{\pi}_m(\tilde{s}_{km}  = s_{km}|\tilde{a}_{km} = a_{km}, \tilde{s}_{0}  = s)\mathbb{P}^{\pi}_m(\tilde{a}_{km} = a_{km}| \tilde{s}_{0}  = s)\nonumber\\
&\overset{(1)}{=}\sum_{\substack{s_{km + r-1},\cdots, s_{km}\in\St\\ a_{km + r-1},\cdots, a_{km}\in\A}}\biggr( \prod_{i = 1}^{r} P(s_{km+i}|s_{km+i-1}, a_{km+i-1}) \biggr)\biggr( \prod_{j = 1}^{r-1}\mathbb{P}^{\pi}_m(\tilde{a}_{km+j} = a_{km+j}|\tilde{s}_{0}  = s )\biggr)\nonumber\\
&\quad\mathbb{P}^{\pi}_m(\tilde{s}_{km}  = s_{km}|\tilde{a}_{km} = a_{km}, \tilde{s}_{0}  = s)\mathbb{P}^{\pi}_m(\tilde{a}_{km} = a_{km}| \tilde{s}_{0}  = s)\nonumber\\
&=\sum_{\substack{s_{km + r-1},\cdots, s_{km}\in\St\\ a_{km + r-1},\cdots, a_{km}\in\A}}\biggr( \prod_{i = 1}^{r} P(s_{km+i}|s_{km+i-1}, a_{km+i-1})  \mathbb{P}^{\pi}_m(\tilde{a}_{km+i-1} = a_{km+i-1}|\tilde{s}_{0}  = s ) \biggr) \nonumber\\
&\quad\mathbb{P}^{\pi}_m(\tilde{s}_{km}  = s_{km}|\tilde{a}_{km} = a_{km}, \tilde{s}_{0}  = s)\nonumber\\
&=\sum_{\substack{s_{km + r-1},\cdots, s_{km}\in\St\\ a_{km + r-1},\cdots, a_{km}\in\A}} \biggr(\prod_{i = 1}^{r} P(s_{km+i}|s_{km+i-1}, a_{km+i-1})\\
&\quad \biggr(\sum_{s_{(k-1)m+i-1}'\in\St}\mathbb{P}^{\pi}_m(\tilde{a}_{km+i-1} = a_{km+i-1}, \tilde{s}_{(k-1)m+i-1} = s_{(k-1)m+i-1}'|\tilde{s}_{0}  = s )\biggr)\biggr)\nonumber\\
&\quad \mathbb{P}^{\pi}_m(\tilde{s}_{km}  = s_{km}|\tilde{a}_{km} = a_{km}, \tilde{s}_{0}  = s)\nonumber
\end{align*}
\begin{align*}
&\overset{(2)}{=}\sum_{\substack{s_{km + r-1},\cdots, s_{km}\in\St\\ a_{km + r-1},\cdots, a_{km}\in\A}} \biggr(\prod_{i = 1}^{r} P(s_{km+i}|s_{km+i-1}, a_{km+i-1})\\
&\quad \biggr(\sum_{s_{(k-1)m+i-1}'\in\St}\mathbb{P}^{\pi'}_m(\tilde{a}_{km+i-1} = a_{km+i-1}, \tilde{s}_{(k-1)m+i-1} = s_{(k-1)m+i-1}'|\tilde{s}_{0}  = s )\biggr)\biggr)\nonumber\\
&\quad \mathbb{P}^{\pi}_m(\tilde{s}_{km}  = s_{km}|\tilde{a}_{km} = a_{km}, \tilde{s}_{0}  = s)\nonumber\\
&=\sum_{\substack{s_{km + r-1},\cdots, s_{km}\in\St\\ a_{km + r-1},\cdots, a_{km}\in\A}}\biggr( \prod_{i = 1}^{r} P(s_{km+i}|s_{km+i-1}, a_{km+i-1})  \mathbb{P}^{\pi'}_m(\tilde{a}_{km+i-1} = a_{km+i-1}|\tilde{s}_{0}  = s ) \biggr) \nonumber\\
&\quad\mathbb{P}^{\pi}_m(\tilde{s}_{km}  = s_{km}|\tilde{a}_{km} = a_{km}, \tilde{s}_{0}  = s)\nonumber\\
&\overset{(3)}{=} \sum_{\substack{s_{km + r-1},\cdots, s_{km}\in\St\\ a_{km + r-1},\cdots, a_{km}\in\A}}\biggr( \prod_{i = 1}^{r} P(s_{km+i}|s_{km+i-1}, a_{km+i-1})  \mathbb{P}^{\pi'}_m(\tilde{a}_{km+i-1} = a_{km+i-1}|\tilde{s}_{0}  = s ) \biggr) \nonumber\\
&\quad\mathbb{P}^{\pi}_m(\tilde{s}_{km}  = s_{km}| \tilde{s}_{0}  = s).\nonumber
\end{align*}}}
\normalsize
In $(1)$, we use Eq.~\eqref{eq: delayed policy} to establish that $\tilde{a}_{km+j}$ only depends on history up to $(k-1)m+j$. Since $m-1>r-1 \geq j \geq 1$, we have $km >(k-1)m+j$, and
\begin{align*}
&\mathbb{P}^{\pi}_m(\tilde{a}_{km+j} = a_{km+j}|\tilde{s}_{km+j-1} = s_{km+j-1}, \tilde{a}_{km+j-1} = a_{km+j-1},\cdots ,\tilde{s}_{km} = s_{km}, \tilde{a}_{km} = a_{km}, \tilde{s}_{0}  = s )\\
&=\mathbb{P}^{\pi}_m(\tilde{a}_{km+j} = a_{km+j}| \tilde{s}_{0}  = s ).
\end{align*}
In $(2)$, we use the induction hypothesis.
In $(3)$ we use Bayes rule and Eq.~\eqref{eq: delayed policy} again to obtain:
\begin{align*}
\mathbb{P}^{\pi}_m(\tilde{s}_{km}  = s_{km}|\tilde{a}_{km} = a_{km}, \tilde{s}_{0}  = s)
  &= \frac{\mathbb{P}^{\pi}_m(\tilde{a}_{km} = a_{km}|\tilde{s}_{km}  = s_{km}, \tilde{s}_{0}  = s)\mathbb{P}^{\pi}_m(\tilde{s}_{km}  = s_{km}|\tilde{s}_{0}  = s)}{\mathbb{P}^{\pi}_m(\tilde{a}_{km} = a_{km}|\tilde{s}_{0}  = s)}\\
  &= \frac{\mathbb{P}^{\pi}_m(\tilde{a}_{km} = a_{km}| \tilde{s}_{0}  = s)\mathbb{P}^{\pi}_m(\tilde{s}_{km}  = s_{km}|\tilde{s}_{0}  = s)}{\mathbb{P}^{\pi}_m(\tilde{a}_{km} = a_{km}|\tilde{s}_{0}  = s)}\\
  &=\mathbb{P}^{\pi}_m(\tilde{s}_{km}  = s_{km}|\tilde{s}_{0}  = s).
\end{align*}
Thus, it results that 
\begin{align*}
&\mathbb{P}^{\pi}_m(\tilde{s}_{n}  = s' | \tilde{s}_0  = s)\\ 
&=\sum_{\substack{s_{km+r}\in\St,\\ a_{km+r}\in\A}}P(s' | s_{km+r}, a_{km+r}) \mathbb{P}^{\pi'}_m(\tilde{a}_{km+r}  = a_{km+r}| \tilde{s}_0  = s)\\
&\sum_{\substack{s_{km + r-1},\cdots, s_{km}\in\St\\ a_{km + r-1},\cdots, a_{km}\in\A}}\biggr( \prod_{i = 1}^{r} P(s_{km+i}|s_{km+i-1}, a_{km+i-1})  \mathbb{P}^{\pi'}_m(\tilde{a}_{km+i-1} = a_{km+i-1}|\tilde{s}_{0}  = s ) \biggr) \nonumber\\
&\quad\mathbb{P}^{\pi}_m(\tilde{s}_{km}  = s_{km}| \tilde{s}_{0}  = s)\\
&=\sum_{\substack{s_{km + r},\cdots, s_{km}\in\St\\ a_{km + r},\cdots, a_{km}\in\A}}\biggr( \prod_{i = 1}^{r+1} P(s_{km+i}|s_{km+i-1}, a_{km+i-1})  \mathbb{P}^{\pi'}_m(\tilde{a}_{km+i-1} = a_{km+i-1}|\tilde{s}_{0}  = s ) \biggr)\\
&\quad\mathbb{P}^{\pi}_m(\tilde{s}_{km}  = s_{km}| \tilde{s}_{0}  = s),
\end{align*}
where we used the convention $s_{km+r+1} = s_{n} = s'$.
We similarly use backward induction until the remaining term that depends on $\pi$ becomes 
\begin{align*}
    &\mathbb{P}^{\pi}_m(\tilde{s}_{m}  = s_{m}| \tilde{s}_{0}  = s)\\
    &= \sum_{\substack{s_{m-1},\cdots,s_1\in\St\\ \substack{a_{m-1},\cdots, a_0\in\A}}} \mathbb{P}^{\pi}_m(\tilde{s}_{m}  = s_{m}, \tilde{s}_{m-1}  = s_{m-1}, \tilde{a}_{m-1}  = a_{m-1},\cdots, \tilde{s}_{1}  = s_{1}, \tilde{a}_{1}  = a_{1}, \tilde{a}_{0}  = a_{0}| \tilde{s}_{0}  = s)\\
    &= \sum_{\substack{s_{m-1},\cdots,s_1\in\St\\ \substack{a_{m-1},\cdots, a_0\in\A}}} \frac{1}{\mathbb{P}^{\pi}_m( \tilde{s}_{0}  = s)}\mathbb{P}^{\pi}_m(\tilde{s}_{m}  = s_{m}, \tilde{s}_{m-1}  = s_{m-1}, \tilde{a}_{m-1}  = a_{m-1},\cdots, \tilde{s}_{1}  = s_{1}, \tilde{a}_{1}  = a_{1}, \tilde{a}_{0}  = a_{0}, \tilde{s}_{0}  = s)\\
    &\overset{(4)}{=} \sum_{\substack{s_{m-1},\cdots,s_1\in\St\\ \substack{a_{m-1},\cdots, a_0\in\A}}}\frac{1}{\mu(s)}\mu(s)\left(\prod_{i = 0}^{m-1} P(s_{i+1}|s_{i}, a_{i})\delta_{\bar{a}_i}( a_{i})\right)
    =\sum_{\substack{s_{m-1},\cdots,s_1\in\St\\ \substack{a_{m-1},\cdots, a_0\in\A}}}\left(\prod_{i = 0}^{m-1} P(s_{i+1}|s_{i}, a_{i})\delta_{\bar{a}_i}( a_{i})\right),
\end{align*}
where $(4)$ results from Prop.~\ref{proposition: delayed process distribution}. Since the obtained quantity is independent of $\pi$, we have
$$
\mathbb{P}^{\pi}_m(\tilde{s}_{m}  = s_{m}|\tilde{s}_{0}  = s) = \mathbb{P}^{\pi'}_m(\tilde{s}_{m}  = s_{m}| \tilde{s}_{0}  = s).
$$
Thus, if we decompose $\mathbb{P}^{\pi'}_m(\tilde{s}_{n}  = s' | \tilde{s}_0  = s)$ according to the exact same derivation as we did for  $\mathbb{P}^{\pi}_m(\tilde{s}_{n}  = s' | \tilde{s}_0  = s)$, we obtain that at $t=n$,
\begin{equation}
\label{eq: s_t given s_0}
    \mathbb{P}^{\pi}_m(\tilde{s}_{n}  = s' | \tilde{s}_0  = s) = \mathbb{P}^{\pi'}_m(\tilde{s}_{n}  = s' | \tilde{s}_0  = s).
\end{equation}
As a result, at $t=n$ we have
\begin{align*}
\mathbb{P}^{\pi'}_m(\tilde{s}_{n-m}  = s', \tilde{a}_{n}  = a | \tilde{s}_0  = s) 
&{=} \mathbb{P}^{\pi'}_m( \tilde{a}_{n}  = a | \tilde{s}_{n-m}  = s', \tilde{s}_0  = s)\mathbb{P}^{\pi'}_m( \tilde{s}_{n-m}  = s'| \tilde{s}_0  = s)\\
&\overset{(a)}{=} \mathbb{P}^{\pi'}_m( \tilde{a}_{n}  = a | \tilde{s}_{n-m}  = s', \tilde{s}_0  = s) \mathbb{P}^{\pi}_m( \tilde{s}_{n-m}  = s'| \tilde{s}_0  = s)\\
&\overset{(b)}{=} \mathbb{P}^{\pi}_m(\tilde{a}_{n} = a |\tilde{s}_{n-m}  =s', \tilde{s}_0  = s )\mathbb{P}^{\pi}_m( \tilde{s}_{n-m}  = s'| \tilde{s}_0  = s)\\
&\overset{(c)}{=} \mathbb{P}^{\pi}_m(\tilde{s}_{n-m}  = s', \tilde{a}_{n}  = a | \tilde{s}_0  = s),
\end{align*}
where $(b)$ follows from Eq.~\eqref{eq: s_t given s_0};
$(c)$ from Eq.~\eqref{eq:action given s_0}.
Finally, assuming it is satisfied at $t=n-1$, the induction step is proved for $t=n$, which ends the proof. 
\end{proof}

\subsection{Degradation due to Stationarity}
\label{sec: degradation stationarity}
Prop.~\ref{prop: degradation due to stationarity} follows from computing the optimal return on an execution-delay MDP (EDMDP) using simulation. Specifically, we use Example~\ref{exmp: 2-state MDP} which we analytically studied in Sec.~\ref{sec: ED MDP}. We exhaustively search over the deterministic policy spaces $\Pi^{\textsc{SD}}$ and  $\Pi^{\textsc{MD}}$ to find the optimum.
We stress that limiting our search to deterministic policies is sufficient for this MDP. Indeed, as shown in Appx.~\ref{sec: example proof}, optimal return is attained for a deterministic policy when maximizing over $\Pi^{\textsc{SR}}$. Regarding the non-stationary Markov policy space $\Pi^{\textsc{MR}}$, as we have proved in Thm.~\ref{thm: optimal deterministic policy}, there exists an optimal deterministic policy in $\Pi^{\textsc{MD}}$. We show with this experiment that this optimal Markov deterministic policy attains better return than any stationary policy.
We set $p=0.8.$ Since the search-space of non-stationary policies is exponential in the simulation horizon ($T=10$ here), we choose $\gamma=0.5$ to have low approximation error. 

\begin{table}[h]
    \centering
    {\scriptsize
    \begin{tabular}{||c c c c c c c||} 
 \hline
  Policy-Type & $m=0$ & $m=1$ & $m=2$ & $m=3$ & $m=4$ & $m=5$ \\ [0.5ex] 
 \hline\hline
 Stationary (theoretical) & $2$ & $1.6$ & $1.36$ & $1.216$ & $1.129$ & $1.077$\\ 
 \hline
 Stationary & $1.99 \pm 0.01$ & $1.59 \pm 0.03$ & $1.32 \pm 0.05$ & $1.22 \pm 0.08$ & $1.11 \pm 0.09$ & $1.02 \pm 0.13$ \\
 \hline
 Non-stationary Markov & $1.99 \pm 0.01$ & $1.82 \pm 0.05$ & $1.67 \pm 0.08$ & $1.59 \pm 0.12$ & $1.46 \pm 0.15$ & $1.38 \pm 0.2$\\
 [1ex] 
 \hline
\end{tabular}
}
 \caption{Optimal return for different delay values and policy types.}
  \label{table: stationarity deg}
\end{table}

The results are summarized in Table~\ref{table: stationarity deg}. Apart from demonstrating sub-optimality of the stationary policy, they also confirm that our theoretical return for the stationary policy $\frac{1 + (2p - 1)^m}{2(1-\gamma)}$ from Prop.~\ref{prop: degredation} matches closely with simulation.  

\subsection{The Delayed Value Function}
\label{appx: delayed value function}
Given a random variable $W$ over $(\Omega, \mathcal{B}(\Omega), \mathbb{P}_m^{\pi})$, its expectation is
$
\label{expectation_W}
 \mathbb{E}_{m}^{\pi}[W] = \sum_{\omega\in\Omega}W(\omega)\mathbb{P}^{\pi}_m(\omega),
$
where $\omega=(s_0, a_0, s_1,\cdots)$ is a sample path. A typical $W$ to consider is the discounted sum of rewards
$W(s_0, a_0, s_1,\cdots):= \sum_{t=0}^{\infty} \gamma^t r(s_t, a_t)$. Thus, the expectation conditioned on initial state $s_0$ is given by
$\mathbb{E}_{m}^{\pi}[W|s_0] = \sum_{\omega\in\Omega}W(s_0, a_0, \cdots)\mathbb{P}^{\pi}_m(s_{0}, a_{0}, s_{1},\cdots|s_0).$
Let the \emph{delayed value function} 
\begin{equation}
\label{def: delayed value function}
    v^{\mu_0:\mu_{m-1} , \pi}_{m}(s_0) := \mathbb{E}_{m}^{\pi}\left[\sum_{ t= m}^{\infty} \gamma^{t-m} r(\tilde{s}_t,\tilde{a}_t)\biggr | \tilde{s}_0 = s_0 \right],
\end{equation}
where $\mu_0:\mu_{m-1} := (\mu_0, \cdots, \mu_{m-1})$ denotes some fixed queue of action distributions according to which the initial $m$ actions should be executed.
Note that the definition of $W$ does not change w.r.t. the delay value $m$: it always denotes the discounted sum of rewards. However, its distribution does depend on the delay value $m$ through the process distribution $\mathbb{P}^{\pi}_m$ and, as a result, so does its expectation $\mathbb{E}_{m}^{\pi}[W|s_0]$. 

Consider a Markov policy $\pi:=(d_k)_{k\geq 0}\in\Pi^{\textsc{MR}}$.  
For all $s,s'\in\St, k\in\mathbb{N}$ and $u\in\Delta_{\A}$, let $P_{u}(s, s') := \sum_{a\in\A}u(a)P(s'| s,a)$ and $R_{d_k}(s',s) := \sum_{a\in\A}q_{d_k(s)}(a)r(s',a)$. 
We then have the following result. 

\begin{theorem}
\label{thm: delayed value function}
For a Markov policy $\pi\in\Pi^{\textsc{MR}}$ given by $\pi:=(d_0,d_1,\cdots)$, the delayed value function satisfies the following relation:

\begin{align*}
    v^{\mu_0:\mu_{m-1},\pi}_{m}(s_0) &=    
    \left(P_{\mu_0} \cdots P_{\mu_{m-1}}R_{d_0}\right) (s_0,s_0)
    + \gamma    \sum_{\substack{ s_1,  \cdots, s_{m}\in\St } } \left( \prod_{k=0}^{m-1} P_{\mu_k} (s_k, s_{k+1})\right) v^{d_0(s_0): d_{m-1}(s_{m-1}),\pi_{m:}}_{m}(s_m),
\end{align*}
where $\pi_{m:}:=(d_m, d_{m+1},\cdots)$ denotes the policy $\pi$ starting from its $m+1$-th decision rule.
\end{theorem}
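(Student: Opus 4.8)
The plan is to unfold $v^{\mu_0:\mu_{m-1},\pi}_m(s_0)$ by conditioning on the first $m$ transitions and to recognize the discounted remainder as a shifted delayed value function. Throughout I would work from the explicit sample-path probability of Prop.~\ref{proposition: delayed process distribution}, replacing the Dirac factors $\delta_{\bar a_k}(a_k)$ by the queue densities $\mu_k(a_k)$ — the only modification caused by permitting a randomized initial queue $\mu_0:\mu_{m-1}$.

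First I would peel off the earliest summand, writing $\sum_{t=m}^{\infty}\gamma^{t-m}r(\tilde s_t,\tilde a_t)=r(\tilde s_m,\tilde a_m)+\gamma\sum_{t=m+1}^{\infty}\gamma^{t-m-1}r(\tilde s_t,\tilde a_t)$ and taking $\mathbb E^\pi_m[\,\cdot\mid \tilde s_0=s_0]$ of each piece. For the immediate term I would marginalise over the executed actions $a_0,\dots,a_{m-1}$ and over $a_m$. By Eq.~\eqref{eq: delayed policy} the action $\tilde a_m$ is drawn from $q_{d_0(\tilde s_0)}=q_{d_0(s_0)}$ and hence is independent of $\tilde s_1,\dots,\tilde s_m$ given $s_0$; summing each $a_k$ against $\mu_k$ collapses the $k$-th transition into $P_{\mu_k}(s_k,s_{k+1})$, while summing $a_m$ against $q_{d_0(s_0)}$ paired with the reward produces $R_{d_0}(s_m,s_0)$. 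Collecting the $m$ transitions yields $\mathbb E^\pi_m[r(\tilde s_m,\tilde a_m)\mid s_0]=\sum_{s_1,\dots,s_m}\big(\prod_{k=0}^{m-1}P_{\mu_k}(s_k,s_{k+1})\big)R_{d_0}(s_m,s_0)$, which is exactly $(P_{\mu_0}\cdots P_{\mu_{m-1}}R_{d_0})(s_0,s_0)$.

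For the discounted tail I would use the tower property, conditioning on $\tilde s_1=s_1,\dots,\tilde s_m=s_m$, and show that the process observed from time $m$ onward is distributed as the delayed process $\mathbb P^{\pi_{m:}}_m$ launched at $s_m$ with the relabelled queue $\big(q_{d_0(s_0)},\dots,q_{d_{m-1}(s_{m-1})}\big)=d_0(s_0):d_{m-1}(s_{m-1})$. The point is that, by Eq.~\eqref{eq: delayed policy}, the actions pending at time $m$ are precisely those decided during steps $0,\dots,m-1$, i.e.\ drawn from $q_{d_k(s_k)}$, whereas every later action is drawn from $q_{d_{t-m}(\tilde s_{t-m})}$ — exactly the decision rules of $\pi_{m:}$ once time is shifted by $m$. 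Re-assembling the shifted sample-path probability from Prop.~\ref{proposition: delayed process distribution} and matching it factor by factor with the conditioned original path then gives $\mathbb E^\pi_m[\sum_{t=m+1}^{\infty}\gamma^{t-m-1}r(\tilde s_t,\tilde a_t)\mid s_1,\dots,s_m]=v^{d_0(s_0):d_{m-1}(s_{m-1}),\pi_{m:}}_m(s_m)$, and averaging against $\prod_{k=0}^{m-1}P_{\mu_k}(s_k,s_{k+1})$ delivers the second term.

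The hard part will be the identification in the previous paragraph: proving that the law of the conditioned tail coincides with that of the freshly relabelled delayed process, with the discount and the state index lining up correctly once the queue is shifted. This is exactly the kind of bookkeeping that dominates the proof of Thm.~\ref{theorem: markov policy is sufficient}, where the delay forces one to track a Euclidean-division structure and to move conditioning in and out of the action laws. I would discharge it by an induction on the time index that parallels Appx.~\ref{appendix: markov policy is sufficient}, using Prop.~\ref{proposition: delayed process distribution} to write both path probabilities in product form and invoking Lemma~\ref{lemma: independency} — future actions do not affect the current transition — to detach the kernels $P$ from the pending actions when the two products are compared.
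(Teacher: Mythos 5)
Your proposal retraces the paper's own proof almost step for step: expand $v^{\mu_0:\mu_{m-1},\pi}_m(s_0)$ through the sample-path probability of Prop.~\ref{proposition: delayed process distribution} with $\mu_k$ in place of the Dirac factors, marginalize actions so the first $m$ transitions collapse to $P_{\mu_k}$ factors and the reward at $t=m$ to $R_{d_0}$, peel off that $t=m$ term, and identify the remaining discounted conditional tail with $v^{d_0(s_0):d_{m-1}(s_{m-1}),\pi_{m:}}_m(s_m)$. The only methodological divergence is the last step: the paper neither runs an induction nor invokes Lemma~\ref{lemma: independency} (that lemma appears only in the proof of Thm.~\ref{theorem: markov policy is sufficient}); it rewrites the explicit product-form sum as $\mathbb{E}^{\pi}_m[\,\cdot\mid \tilde{s}_0=s_0,\dots,\tilde{s}_m=s_m]$ and asserts the identification with the shifted value function in a single line.

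Your instinct that this identification is the hard part is exactly right, and the factor-by-factor matching you propose will expose a mismatch rather than close the argument. By the definition in \eqref{def: delayed value function}, $v^{d_0(s_0):d_{m-1}(s_{m-1}),\pi_{m:}}_m(s_m)$ counts rewards only from the shifted process's own time $m$ onward, i.e., original times $t\geq 2m$, with discount $\gamma^{t-2m}$; the tail you must represent is $\sum_{t=m+1}^{\infty}\gamma^{t-m-1}r(\tilde{s}_t,\tilde{a}_t)$, which additionally contains the rewards of the decision rules $d_1,\dots,d_{m-1}$ (executed at original times $m+1,\dots,2m-1$) and whose discount exponents are shifted by $m-1$. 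For $m\geq 2$ the two quantities differ in general: in a one-state MDP with $m=2$ in which only $d_1$ has nonzero expected reward $\rho$, the theorem's left-hand side equals $\gamma\rho$ while its right-hand side equals $0$. This defect is inherited from the paper, whose proof asserts exactly the same identity, so your reconstruction is faithful to it -- but the step is not salvageable as written. A sound version requires re-aligning time origins: either a one-step recursion onto the updated queue $(\mu_1,\dots,\mu_{m-1},d_0(s_0))$ with policy $\pi_{1:}$ and a single factor of $\gamma$, or an $m$-step recursion whose first term keeps all $m$ rewards of $d_0,\dots,d_{m-1}$ and whose shifted value carries the discount $\gamma^m$.
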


In addition, this relation becomes a recursion when the policy is  $m$-periodic. Its proof is omitted since the result immediately follows from Thm.~\ref{thm: delayed value function}.  

\begin{corollary}
For an $m$-periodic Markov policy $\pi\in\Pi^{\textsc{MR}}$ given by ${\pi:=(d_0,\cdots,d_{m-1},d_0,\cdots,d_{m-1},\cdots)}$, the delayed value function satisfies the following recursion:
\begin{align*}
    v^{\mu_0:\mu_{m-1},\pi}_{m}(s_0) &=    
    \left(P_{\mu_0} \cdots P_{\mu_{m-1}}R_{d_0}\right) (s_0,s_0)
    + \gamma    \sum_{\substack{ s_1,  \cdots, s_{m}\in\St } } \left( \prod_{k=0}^{m-1} P_{\mu_k} (s_k, s_{k+1})\right) v^{d_0(s_0): d_{m-1}(s_{m-1}),\pi}_{m}(s_m).
\end{align*}
\end{corollary}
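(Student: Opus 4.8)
The displayed Corollary is an immediate specialization of Theorem~\ref{thm: delayed value function}, so the plan is to prove the latter and then collapse it. For an $m$-periodic policy $\pi=(d_0,\dots,d_{m-1},d_0,\dots)$ one has $d_{m+j}=d_j$ for all $j\ge 0$, hence the tail policy $\pi_{m:}=(d_m,d_{m+1},\dots)$ equals $\pi$ itself; substituting $\pi_{m:}=\pi$ into the relation of Theorem~\ref{thm: delayed value function} replaces $v^{d_0(s_0):d_{m-1}(s_{m-1}),\pi_{m:}}_{m}$ by $v^{d_0(s_0):d_{m-1}(s_{m-1}),\pi}_{m}$ and yields exactly the claimed recursion. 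All the work therefore lies in Theorem~\ref{thm: delayed value function}, whose proof I sketch next.

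First I would compute $v^{\mu_0:\mu_{m-1},\pi}_{m}(s_0)=\mathbb{E}_{m}^{\pi}[\sum_{t\ge m}\gamma^{t-m}r(\tilde s_t,\tilde a_t)\mid \tilde s_0=s_0]$ directly from the explicit sample-path law of Proposition~\ref{proposition: delayed process distribution}, splitting the sum into its $t=m$ term and the tail $t\ge m+1$. For the first term, Proposition~\ref{proposition: delayed process distribution} shows the first $m$ transitions are governed solely by the fixed, state-independent queue $\mu_0,\dots,\mu_{m-1}$, so the conditional law of $\tilde s_m$ is the $m$-step product kernel $(P_{\mu_0}\cdots P_{\mu_{m-1}})(s_0,\cdot)$, while $\tilde a_m$ is drawn from $d_0(\tilde s_0)=d_0(s_0)$. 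Averaging $r$ over $\tilde a_m$ produces the factor $R_{d_0}(\cdot,s_0)$ and marginalizing over $\tilde s_m$ reproduces $(P_{\mu_0}\cdots P_{\mu_{m-1}}R_{d_0})(s_0,s_0)$ by the very definitions of $P_u$ and $R_{d_k}$.

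The core of the proof is the tail. Here I would condition on the trajectory $\tilde s_1=s_1,\dots,\tilde s_m=s_m$ through the first window and argue that, given this information, the remaining process is itself an ED-MDP: its initial state is the window endpoint $s_m$, its pending-action queue is the queue of freshly committed decisions $(d_0(s_0),\dots,d_{m-1}(s_{m-1}))$, and its continuation policy is the shifted tail $\pi_{m:}$. The justification rests on the lag-$m$ dependence recorded in Eq.~\eqref{eq: delayed policy} and on Lemma~\ref{lemma: independency}: each action $\tilde a_t$ depends on the past only through the state seen $m$ steps earlier, which is what decouples the window states from the actions committed within it and lets the future be re-read as a delayed value function. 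Integrating this regenerated delayed value function over the window endpoint against the state-path weight $\prod_{k=0}^{m-1}P_{\mu_k}(s_k,s_{k+1})$ assembles the second term, and adding the two pieces gives the recursion.

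The hard part will be the index bookkeeping that connects the two formulations across the shift: aligning the discount exponent and the reward time-index of the original tail with those of the regenerated delayed value function, and checking that the updated queue is exactly $(d_0(s_0),\dots,d_{m-1}(s_{m-1}))$ as claimed. Since the delayed process is provably non-Markov (Appx.~\ref{appx: remark markov property}), this regeneration cannot be obtained by a one-line Markov-property argument; it must be extracted from the explicit path law together with the conditional independence of Lemma~\ref{lemma: independency}, and this is the step I would write out most carefully, as it is exactly where an off-by-one or off-by-$m$ slip would occur.
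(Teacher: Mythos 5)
Your proposal is correct and takes the same route as the paper: the paper explicitly omits the corollary's proof because it follows immediately from Thm.~\ref{thm: delayed value function} by observing that $\pi_{m:}=\pi$ for an $m$-periodic policy, which is exactly your first paragraph. Your accompanying sketch of Thm.~\ref{thm: delayed value function} also mirrors the paper's own proof of that theorem (explicit path law from Prop.~\ref{proposition: delayed process distribution}, split at $t=m$, and re-reading the conditional tail as a delayed value function with queue $d_0(s_0),\dots,d_{m-1}(s_{m-1})$), the only cosmetic difference being that the paper extracts the regeneration step directly from the path-law formula rather than invoking Lemma~\ref{lemma: independency}.
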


\begin{proof}[Proof of Theorem~\ref{thm: delayed value function}]
By definition of the delayed value function we have:
\begin{align*}
  v^{\mu_0:\mu_{m-1},\pi}_{m}(s_0) &= \mathbb{E}_{m}^{\pi}\left[\sum_{ t= m}^{\infty} \gamma^{t-m} r(\tilde{s}_t,\tilde{a}_t)\biggr | \tilde{s}_0 = s_0 \right]\\
  &\overset{(1)}{=} \sum_{ t= m}^{\infty} \gamma^{t-m}  \mathbb{E}_{m}^{\pi}\left[ r(\tilde{s}_t,\tilde{a}_t)\biggr | \tilde{s}_0 = s_0 \right]\\
  &\overset{(2)}{=} \sum_{ t= m}^{\infty} \gamma^{t-m} \sum_{\substack{s_1,\cdots, s_t\in\St \\a_0,\cdots,a_t\in\A}} r(s_t, a_t)\mathbb{P}_{m}^{\pi}(\tilde{a}_0 = a_0,\tilde{s}_1  = s_1, \cdots, \tilde{a}_{t-1}  = a_{t-1}, \tilde{s}_t  = s_t ,\tilde{a}_t = a_t| \tilde{s}_0 = s_0),
\end{align*}
where $(1)$ results from the dominated convergence theorem and $(2)$ by the definition of expectation. 
Using Prop.~\ref{proposition: delayed process distribution} and the fact that $\pi$ is a Markov policy, we can write the probability of a sample path conditioned on the initial state as:
    \begin{equation*}
    \begin{split}
     &\mathbb{P}_{m}^{\pi}(\tilde{a}_0 = a_0,\tilde{s}_1  = s_1, \cdots, \tilde{a}_{t-1}  = a_{t-1}, \tilde{s}_t  = s_t ,\tilde{a}_t = a_t| \tilde{s}_0 = s)\\
     &= \left(\prod_{k =0}^{m-1}\mu_k(a_k) P(s_{k+1}|s_k, a_k)\right)\left(\prod_{k=m}^{t-1} q_{d_{k-m}(s_{k-m})}(a_k)P(s_{k+1}|s_k, a_k)
  \right)q_{d_{t-m}(s_{t-m})}(a_t),
     \end{split}
    \end{equation*}
so that:
\begin{align*}
  v^{\mu_0:\mu_{m-1},\pi}_{m}(s_0)
  &=  \sum_{ t= m}^{\infty} \gamma^{t-m} \sum_{\substack{s_1,\cdots, s_t\in\St \\a_0,\cdots,a_t\in\A}} r(s_t, a_t)\left(\prod_{k =0}^{m-1}\mu_k(a_k) P(s_{k+1}|s_k, a_k)\right)\\
  &\quad \cdot \left(\prod_{k=m}^{t-1} q_{d_{k-m}(s_{k-m})}(a_k)P(s_{k+1}|s_k, a_k)
  \right)q_{d_{t-m}(s_{t-m})}(a_t).
\end{align*}

Then, we can rewrite the delayed value function as:
\begin{align*}
&v^{\mu_0:\mu_{m-1},\pi}_{m}(s_0)\\
&= \sum_{ t= m}^{\infty} \gamma^{t-m} \sum_{\substack{ s_1,  \cdots, s_t \in\St\\ a_0,\cdots, a_{t-1}\in\A} }R_{d_{t-m}}(s_t, s_{t-m}) \cdot
\left(\prod_{k=0}^{m-1} \mu_k(a_k) P_{a_k}(s_k, s_{k+1})\right) \left(\prod_{k=m}^{t-1} q_{d_{k-m}(s_{k-m})}(a_k)P_{a_k}(s_k, s_{k+1})
  \right).\\
&= \sum_{ t= m}^{\infty} \gamma^{t-m} \sum_{\substack{ s_1,  \cdots, s_t \in\St\\ a_m,\cdots, a_{t-2}\in\A} }R_{d_{t-m}}(s_t, s_{t-m}) \cdot
\left(\prod_{k=0}^{m-1} \mu_k(a_k) P_{a_k}(s_k, s_{k+1})\right) \left(\prod_{k=m}^{t-2} q_{d_{k-m}(s_{k-m})}(a_k)P_{a_k}(s_k, s_{k+1})
  \right)\cdot\\
  &\qquad\sum_{a_{t-1}\in\A}q_{d_{t-1-m}(s_{t-1-m})}(a_{t-1})P_{a_{t-1}}(s_{t-1}, s_{t})\\
  &=\sum_{ t= m}^{\infty} \gamma^{t-m} \sum_{\substack{ s_1,  \cdots, s_t \in\St\\ a_m,\cdots, a_{t-2}\in\A} }R_{d_{t-m}}(s_t, s_{t-m}) \cdot
\left(\prod_{k=0}^{m-1} \mu_k(a_k) P_{a_k}(s_k, s_{k+1})\right)\left(\prod_{k=m}^{t-2} q_{d_{k-m}(s_{k-m})}(a_k)P_{a_k}(s_k, s_{k+1})
  \right)\cdot\\
  &\qquad P_{d_{t-1-m}(s_{t-1-m})}(s_{t-1}, s_{t})\\
  &\vdots\\
  &= \sum_{ t= m}^{\infty} \gamma^{t-m} \sum_{\substack{ s_1,  \cdots, s_t \in\St} }R_{d_{t-m}}(s_t, s_{t-m}) \cdot
\left(\prod_{k=0}^{m-1} P_{\mu_k}(s_k, s_{k+1})\right)\left(\prod_{k=m}^{t-1} P_{d_{k-m}(s_{k-m})}(s_k, s_{k+1})\right),
\end{align*}
and the following can be derived:
\begin{align}
\label{eq: explicit delayed value}
v^{\mu_0:\mu_{m-1},\pi}_{m}(s_0)&=\sum_{t= m}^{\infty} \gamma^{t-m} \sum_{\substack{ s_1,  \cdots, s_t } } R_{d_{t-m}}(s_t, s_{t-m}) \left(\prod_{k=0}^{m-1} P_{\mu_k}(s_k, s_{k+1})\right) \left(\prod_{k=m}^{t-1} P_{d_{k-m}(s_{k-m})}(s_k, s_{k+1})\right)\nonumber\\
&=\sum_{\substack{ s_1,  \cdots, s_m } } R_{d_{0}}(s_m, s_0) \left(\prod_{k=0}^{m-1} P_{\mu_k}(s_k, s_{k+1})\right)  + f(s_0)\nonumber\\
&=\sum_{s_m}\left(P_{\mu_0} \cdots P_{\mu_{m-1}} \right)(s_0, s_m) R_{d_{0}}( s_m, s_0) + f(s_0)\nonumber\\
&=  \left(P_{\mu_0} \cdots P_{\mu_{m-1}} R_{d_{0}}\right) (s_0,s_0)  + f(s_0)\nonumber\\
&= \left(P_{\mu_0} \cdots P_{\mu_{m-1}}R_{d_0}\right) (s_0,s_0) + f(s_0),
\end{align}
where
\begin{align}
\label{eq: f s_0}
    f(s_0) :&= \sum_{t= m+1}^{\infty} \gamma^{t-m} \sum_{\substack{ s_1,  \cdots, s_t } } R_{d_{t-m}}(s_t, s_{t-m}) \left(\prod_{k=0}^{m-1} P_{\mu_k}(s_k, s_{k+1})\right) \left(\prod_{k=m}^{t-1} P_{d_{k-m}(s_{k-m})}(s_k, s_{k+1})\right)\nonumber\\
    &= \sum_{t= m+1}^{\infty} \gamma^{t-m} \sum_{\substack{ s_1,  \cdots, s_m } } \sum_{\substack{ s_{m+1},  \cdots, s_t} } R_{d_{t-m}}(s_t, s_{t-m}) \left(\prod_{k=0}^{m-1} P_{\mu_k}(s_k, s_{k+1})\right) \nonumber\\
    &\quad \cdot \left(\prod_{k=m}^{t-1} P_{d_{k-m}(s_{k-m})}(s_k, s_{k+1})\right)\nonumber\\
    &= \sum_{t= m+1}^{\infty} \gamma^{t-m} \sum_{\substack{ s_1,  \cdots, s_m } } \left(\prod_{k=0}^{m-1} P_{\mu_k}(s_k, s_{k+1})\right) \sum_{\substack{ s_{m+1},  \cdots, s_t} } R_{d_{t-m}}(s_t, s_{t-m})  \left(\prod_{k=m}^{t-1} P_{d_{k-m}(s_{k-m})}(s_k, s_{k+1})\right).
\end{align}
In fact, the last part of the sum corresponds to the following expectation: 
\begin{align*}
    \sum_{s_{m+1}, \cdots,s_t}R_{d_{t-m}}(s_t, s_{t-m})\left(\prod_{k=m}^{t-1} P_{d_{k-m}(s_{k-m})}(s_k, s_{k+1})\right)=\mathbb{E}_m^{\pi}\left[r(\tilde{s}_t, \tilde{a}_t)| \tilde{s}_0 = s_0, \tilde{s}_1 = s_1,\cdots, \tilde{s}_m = s_m\right], 
\end{align*}
so
\begin{align*}
    f(s_0)  &=  \sum_{t= m+1}^{\infty} \gamma^{t-m} \sum_{\substack{ s_1,  \cdots, s_m } } \left(\prod_{k=0}^{m-1} P_{\mu_k}(s_k, s_{k+1})\right) \mathbb{E}_m^{\pi}\left[r(\tilde{s}_t, \tilde{a}_t)| \tilde{s}_0 = s_0, \tilde{s}_1 = s_1,\cdots, \tilde{s}_m = s_m\right]
\end{align*}
and 
\begin{align*}
    v^{\mu_0:\mu_{m-1},\pi}_{m}(s_0)
    &= \left(P_{\mu_0} \cdots P_{\mu_{m-1}} R_{d_{0}}\right) (s_0,s_0)\\
    &\quad+ \sum_{t= m+1}^{\infty} \gamma^{t-m} \sum_{\substack{ s_1,  \cdots, s_m } } \left(\prod_{k=0}^{m-1} P_{\mu_k}(s_k, s_{k+1})\right) \mathbb{E}_m^{\pi}\left[r(\tilde{s}_t, \tilde{a}_t)| \tilde{s}_0 = s_0, \tilde{s}_1 = s_1,\cdots, \tilde{s}_m = s_m\right]\\
    &= \left(P_{\mu_0} \cdots P_{\mu_{m-1}} R^{\pi_{0}}\right) (s_0,s_0)\\
    &\quad+ \gamma \sum_{\substack{ s_1,  \cdots, s_m } } \left(\prod_{k=0}^{m-1} P_{\mu_k}(s_k, s_{k+1})\right) \mathbb{E}_m^{\pi}\left[\sum_{t= m+1}^{\infty} \gamma^{t-m-1} r(\tilde{s}_t, \tilde{a}_t)| \tilde{s}_0 = s_0, \tilde{s}_1 = s_1,\cdots, \tilde{s}_m = s_m\right]
\end{align*}
Finally, fixing initial states $s_0,\cdots,s_{m-1}$ implies fixing a queue of $m$ action distributions $d_0(s),\cdots,d_{m-1}(s_{m-1})$. Therefore, denoting by $\pi_{m:}:=(d_m, d_{m+1},\cdots)$ the original policy $\pi$ starting from its $m+1$-th decision rule, we have 
$$
\mathbb{E}_m^{\pi}\left[\sum_{t= m+1}^{\infty} \gamma^{t-m-1} r(\tilde{s}_t, \tilde{a}_t)| \tilde{s}_0 = s_0, \tilde{s}_1 = s_1,\cdots, \tilde{s}_m = s_m\right] = v^{d_0(s_0): d_{m-1}(s_{m-1}),\pi_{m:}}_{m}(s_m),
$$
and 
\begin{align*}
&v^{\mu_0:\mu_{m-1},\pi}_{m}(s_0)  \\
&=\left(P_{\mu_0} \cdots P_{\mu_{m-1}} R^{\pi_{0}}\right) (s_0,s_0) + \gamma \sum_{\substack{ s_1,  \cdots, s_m } } \left(\prod_{k=0}^{m-1} P_{\mu_k}(s_k, s_{k+1})\right)v^{d_0(s_0): d_{m-1}(s_{m-1}),\pi_{m:}}_{m}(s_m),
\end{align*}
which concludes the proof.
\end{proof}

\subsection{Proof of Theorem \ref{thm: optimal deterministic policy}}
\begin{theorem*}
For any action distribution queue $\mu_0:\mu_{m-1}:=(\mu_0,\dots,\mu_{m-1})$ and $s_0\in \St,$
\begin{equation}
    \label{eq: optimality of deterministic markov}
\max_{\pi \in \Pi^{\textsc{MD}}} v^{\mu_0:\mu_{m-1},\pi}_{m}= \max_{\pi \in \Pi^{\textsc{MR}}}v^{\mu_0:\mu_{m-1},\pi}_{m}.
\end{equation}
\end{theorem*}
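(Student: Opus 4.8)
The inclusion $\Pi^{\textsc{MD}}\subseteq\Pi^{\textsc{MR}}$ gives ``$\leq$'' immediately, so the entire content is the reverse inequality: every randomized Markov policy can be matched by a deterministic one. The plan is to phrase this as a dynamic-programming statement over the pair (action-distribution queue, state). Writing $\nu=(\nu_0,\dots,\nu_{m-1})$ with $\nu_k\in\Delta_{\A}$ for brevity, define the optimal delayed value $W(\nu,s):=\sup_{\pi\in\Pi^{\textsc{MR}}}v^{\nu_0:\nu_{m-1},\pi}_{m}(s)$. Because in the recursion of Thm.~\ref{thm: delayed value function} the tail policy $\pi_{m:}$ ranges over all of $\Pi^{\textsc{MR}}$ independently of $(d_0,\dots,d_{m-1})$, taking the supremum on both sides should turn that recursion into a Bellman optimality equation
$$
W(\nu,s_0)=\max_{d_0,\dots,d_{m-1}}\Big[(P_{\nu_0}\cdots P_{\nu_{m-1}}R_{d_0})(s_0,s_0)+\gamma\!\!\sum_{s_1,\dots,s_m}\Big(\prod_{k=0}^{m-1}P_{\nu_k}(s_k,s_{k+1})\Big)W\big(d_0(s_0){:}d_{m-1}(s_{m-1}),s_m\big)\Big],
$$
where the outer queue $\nu$ enters only through fixed nonnegative coefficients $P_{\nu_k}$, while the decision rules $(d_0,\dots,d_{m-1})$ being optimized reappear as the \emph{continuation queue}. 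This is the crucial structural point: choosing the policy is the same as choosing the next queue.

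The de-randomization then rests on a convexity observation. From the explicit expansion \eqref{eq: explicit delayed value}, for a \emph{fixed} $\pi$ the map $\nu\mapsto v^{\nu_0:\nu_{m-1},\pi}_{m}(s)$ is multilinear: each $\nu_k$ appears only inside $P_{\nu_k}(s_k,s_{k+1})=\sum_a\nu_k(a)P(s_{k+1}\mid s_k,a)$, to the first power. Hence $W$, a supremum of functions affine in each queue slot, is convex in each slot $\nu_k$ separately. Now inspect the maximization in the Bellman equation coordinate by coordinate: for each reachable $s_j$ the distribution $d_j(s_j)\in\Delta_{\A}$ may be optimized independently, and in that single slot the objective is a nonnegative combination of the convex maps $\nu\mapsto W(\cdots\nu\cdots,s_m)$ (plus, for $j=0$, the term $R_{d_0}(\cdot,s_0)$ which is \emph{linear} in $d_0(s_0)$), hence convex. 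A convex function on the simplex $\Delta_{\A}$ attains its maximum at a vertex, i.e. at a Dirac $\delta_a$; de-randomizing one slot at a time never lowers the objective and preserves convexity in the remaining slots, so every $d_j(s_j)$ can be replaced by a deterministic action. Thus the maximizers may be taken in $\Pi^{\textsc{MD}}$, and the continuation queues $d_0(s_0){:}d_{m-1}(s_{m-1})$ are all Dirac, so $W$ is already determined by its values on the finite set $\A^m\times\St$.

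It remains to turn the pointwise optimality equation into a policy achieving it. The operator underlying the Bellman equation is a $\gamma$-contraction in the sup-norm, so $W$ is its unique fixed point and coincides with the value of the deterministic greedy policy obtained by always selecting the Dirac maximizers above; this is a standard verification argument, carried out here over the augmented ``state'' $(\nu,s)$ and respecting the non-stationary shift $\pi\mapsto\pi_{m:}$. Evaluating that deterministic policy for the \emph{given} queue then yields $\max_{\pi\in\Pi^{\textsc{MD}}}v^{\mu_0:\mu_{m-1},\pi}_m=W(\mu_0:\mu_{m-1},\cdot)=\max_{\pi\in\Pi^{\textsc{MR}}}v^{\mu_0:\mu_{m-1},\pi}_m$, which is \eqref{eq: optimality of deterministic markov}. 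I expect the main obstacle to be precisely this last step: making the dynamic program rigorous over the continuous queue space $(\Delta_{\A})^m$ — well-posedness of $W$, attainment of the suprema, and the contraction/verification — together with the bookkeeping that the continuation queue is itself assembled from the decision rules being optimized, so that the convex vertex argument applies legitimately slot by slot.
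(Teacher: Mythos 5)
Your two easy ingredients --- the inclusion $\Pi^{\textsc{MD}}\subseteq\Pi^{\textsc{MR}}$ and the fact that a convex function on $\Delta_{\A}$ is maximized at a vertex --- are fine, but the dynamic-programming scaffold you hang them on has a genuine hole, and it sits exactly where the content of the theorem lies. Passing from the recursion of Thm.~\ref{thm: delayed value function} to your ``Bellman optimality equation'' $W=TW$ is not a matter of ``taking the supremum on both sides'': the supremum of a sum is not the sum of suprema. Only $W\le TW$ is immediate. The reverse inequality would require a \emph{single} Markov tail policy $\pi_{m:}$ that is simultaneously near-optimal for every continuation pair $\bigl(d_0(s_0){:}d_{m-1}(s_{m-1}),\,s_m\bigr)$ appearing under the sum, and this is precisely what cannot be assumed: for $m\ge 2$, two sample paths $(s_1,\dots,s_{m-1})$ can reach the same $s_m$ while carrying different pending queues, and a Markov policy is forced to play the same $d_m(s_m)$ in both, whereas the maximizer defining $W(q,s_m)$ generally depends on $q$ whenever actions influence transitions. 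Your operator $T$ implicitly lets the continuation policy be re-chosen as a function of the realized queue, i.e.\ of the path, so its fixed point is the optimum over history-dependent (queue-augmented) behavior; asserting that this equals $W=\sup_{\Pi^{\textsc{MR}}}v_m$ is essentially the statement that path-dependence buys nothing --- the content of Thm.~\ref{theorem: markov policy is sufficient} plus the very theorem you are proving --- which you never invoke or establish. The same defect resurfaces in your verification step: the ``deterministic greedy policy'' read off from the fixed point selects its decision rules at times $m,\dots,2m-1$ as functions of $(q,s_m)$, hence of the path, so it is not an element of $\Pi^{\textsc{MD}}$; even granting the contraction argument, you have not exhibited a deterministic \emph{Markov} policy attaining $\sup_{\Pi^{\textsc{MR}}}v_m$. (A further, smaller problem: the recursion you borrow discounts the continuation by $\gamma$ and starts counting its rewards at relative time $m$, so for $m\ge2$ the rewards collected at times $m+1,\dots,2m-1$ are unaccounted for; any DP built on it needs that bookkeeping repaired, e.g.\ a $\gamma^m$ discount together with the intermediate reward terms.)

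Contrast this with the paper's proof, which never forms an optimality equation. It fixes the given $\pi\in\Pi^{\textsc{MR}}$, writes its value as the explicit series \eqref{eq: explicit delayed value}, observes that this value is \emph{affine} in each single action distribution $q_{d_i(s_i)}$ (all other decision rules frozen), and replaces those distributions one at a time --- state by state, index by index --- by the maximizing Dirac (Puterman, Lemma 4.3.1), producing a deterministic Markov policy $\pi_{\textsc{D}}$ with $v^{\mu_0:\mu_{m-1},\pi}_{m}(s_0)\le v^{\mu_0:\mu_{m-1},\pi_{\textsc{D}}}_{m}(s_0)$. Your convexity-at-a-vertex observation is the convex upgrade of that affine step, so the kernel of your argument is sound; but to make your route rigorous you would first have to prove $W=TW$ via a history-to-Markov reduction, at which point the direct coordinate-wise de-randomization of a fixed policy is both simpler and already sufficient.
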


\begin{proof}
First, since $\Pi^{\textsc{MD}} \subset \Pi^{\textsc{MR}},$ the RHS of \eqref{eq: optimality of deterministic markov} must be at least as great as the LHS. We now establish the reverse inequality.
Recall from Eqs.\eqref{eq: explicit delayed value}-\eqref{eq: f s_0}[Proof of Thm.~\ref{thm: delayed value function}] that 
\begin{align}
\label{eq: rewriting the value}
&v^{\mu_0:\mu_{m-1},\pi}_{m}(s_0) = \left(P_{\mu_0} \cdots P_{\mu_{m-1}} R_{d_{0}}\right) (s_0,s_0) \nonumber\\
  &+ \sum_{ t= m+1}^{\infty} \gamma^{t-m} \sum_{\substack{ s_1,  \cdots, s_t \in\St} }R_{d_{t-m}}(s_t, s_{t-m}) \cdot
\left(\prod_{k=0}^{m-1} P_{\mu_k}(s_k, s_{k+1})\right)\left(\prod_{k=m}^{t-1} P_{d_{k-m}(s_{k-m})}(s_k, s_{k+1})\right).
\end{align}
We shall now rewrite the value in two forms, to respectively show its dependence on $d_0(s_0),$ and $d_i(s_i)$ for $i\geq 1.$ For each of these two forms, we will prove a deterministic decision is at least as good as a random one in terms of value.

We begin with rewriting the first term in \eqref{eq: rewriting the value} as
\begin{align*}
    \left(P_{\mu_0} \cdots P_{\mu_{m-1}} R_{d_{0}}\right) (s_0,s_0) &=
    (\left(P_{\mu_0} \cdots P_{\mu_{m-1}}) R_{d_{0}}\right) (s_0,s_0)\\
    &= \sum_{s_m\in\St} (P_{\mu_0} \cdots P_{\mu_{m-1}})(s_0, s_m)R_{d_{0}}(s_m,s_0)\\
    &= \sum_{s_m\in\St} (P_{\mu_0} \cdots P_{\mu_{m-1}})(s_0, s_m)\left(\sum_{a_0\in\A}q_{d_0(s_0)}(a_0)r(s_m,a_0)\right)\\
    &= \sum_{a_0\in\A}q_{d_0(s_0)}(a_0)\sum_{s_m\in\St} (P_{\mu_0} \cdots P_{\mu_{m-1}})(s_0, s_m)r(s_m,a_0)\\
    &= \sum_{a_0\in\A}q_{d_0(s_0)}(a_0)(P_{\mu_0} \cdots P_{\mu_{m-1}}r_{a_0})(s_0),
\end{align*}
where for all $a\in\A$, $r_a:= (r(s,a))_{s\in\St}\in\mathbb{R}^{\St}$ is the reward vector corresponding to a given action. 
Next,  we rewrite the second term in \eqref{eq: rewriting the value} as
\begin{align*}
     &\sum_{ t= m+1}^{\infty} \gamma^{t-m} \sum_{\substack{ s_1,  \cdots, s_t \in\St} }R_{d_{t-m}}(s_t, s_{t-m})
\left(\prod_{k=0}^{m-1} P_{\mu_k}(s_k, s_{k+1})\right)\left(\prod_{k=m}^{t-1} P_{d_{k-m}(s_{k-m})}(s_k, s_{k+1})\right)\\
=& \sum_{ t= m+1}^{\infty} \gamma^{t-m}\sum_{\substack{ s_1,  \cdots, s_t \in\St} }R_{d_{t-m}}(s_t, s_{t-m}) \left(\prod_{k=0}^{m-1} P_{\mu_k}(s_k, s_{k+1})\right)\\
&\qquad P_{d_{0}(s_{0})}(s_m, s_{m+1})\left(\prod_{k=m+1}^{t-1} P_{d_{k-m}(s_{k-m})}(s_k, s_{k+1})\right)\\
=& \sum_{ t= m+1}^{\infty} \gamma^{t-m}\sum_{\substack{ s_1,  \cdots, s_t \in\St} }R_{d_{t-m}}(s_t, s_{t-m}) \left(\prod_{k=0}^{m-1} P_{\mu_k}(s_k, s_{k+1})\right)\\
&\qquad \left(\sum_{a_0\in\A}q_{d_0(s_0)}(a_0)P(s_{m+1}|s_m, a_0)\right)\left(\prod_{k=m+1}^{t-1} P_{d_{k-m}(s_{k-m})}(s_k, s_{k+1})\right)\\
=& \sum_{a_0\in\A}q_{d_0(s_0)}(a_0) \left[\sum_{ t= m+1}^{\infty} \gamma^{t-m}\sum_{\substack{ s_1,  \cdots, s_t \in\St} }R_{d_{t-m}}(s_t, s_{t-m}) \left(\prod_{k=0}^{m-1} P_{\mu_k}(s_k, s_{k+1})\right)\right.\\
&\qquad  \left.P(s_{m+1}|s_m, a_0)\left(\prod_{k=m+1}^{t-1} P_{d_{k-m}(s_{k-m})}(s_k, s_{k+1})\right)\right]
\end{align*}
Putting the two expressions together gives
\begin{align*}
   v^{\mu_0:\mu_{m-1},\pi}_{m}(s_0) 
&= \sum_{a_0\in\A}q_{d_0(s_0)}(a_0)\left[(P_{\mu_0} \cdots P_{\mu_{m-1}}r_{a_0})(s_0) + \sum_{ t= m+1}^{\infty} \gamma^{t-m}\sum_{\substack{ s_1,  \cdots, s_t \in\St} }R_{d_{t-m}}(s_t, s_{t-m})\right.\\ &\qquad \left.\left(\prod_{k=0}^{m-1} P_{\mu_k}(s_k, s_{k+1})\right)P(s_{m+1}|s_m, a_0)\left(\prod_{k=m+1}^{t-1} P_{d_{k-m}(s_{k-m})}(s_k, s_{k+1})\right) \right]\\
&\overset{(1)}{\leq} \max_{a_0\in\A}\left\{(P_{\mu_0} \cdots P_{\mu_{m-1}}r_{a_0})(s_0) + \sum_{ t= m+1}^{\infty} \gamma^{t-m}\sum_{\substack{ s_1,  \cdots, s_t \in\St} }R_{d_{t-m}}(s_t, s_{t-m})\right.\\ &\qquad \left.\left(\prod_{k=0}^{m-1} P_{\mu_k}(s_k, s_{k+1})\right)P(s_{m+1}|s_m, a_0)\left(\prod_{k=m+1}^{t-1} P_{d_{k-m}(s_{k-m})}(s_k, s_{k+1})\right) \right\}\\
&\overset{(2)}{:=} v^{\mu_0:\mu_{m-1},\pi^0_{\textsc{D}}}_{m}(s_0), 
\end{align*}
where $(1)$ holds by applying \citep{puterman2014markov}[Lemma 4.3.1], and $(2)$ by defining 
policy $\pi^0_{\textsc{D}}:= (d^{\textsc{D}}_0, d_1,d_2\cdots)\in\Pi^{\textsc{MR}}$ such that 
the first decision rule is deterministic  $d^{\textsc{D}}_0:=\delta_{a_0^*}$ with $a_0^*$ the argmax of $(1)$, while $(d_1,d_2,\cdots)$ are the same as in the original policy $\pi\in\Pi^{\textsc{MR}}$. 

We now continue to showing the dependence of the value on $d_i(s_i)$ for $i\geq1,$ by continuing with the second term in \eqref{eq: rewriting the value}:
\begin{align*}
    &\sum_{ t= m+1}^{\infty} \gamma^{t-m}\sum_{ s_1,  \cdots, s_t \in\St }R_{d_{t-m}}(s_t, s_{t-m})\left(\prod_{k=0}^{m-1} P_{\mu_k}(s_k, s_{k+1})\right)P(s_{m+1}|s_m, a_0^*)\left(\prod_{k=m+1}^{t-1} P_{d_{k-m}(s_{k-m})}(s_k, s_{k+1})\right)\\
    &= \gamma\sum_{ s_1,  \cdots, s_{m+1} \in\St }R_{d_{1}}(s_{m+1}, s_{1})\left(\prod_{k=0}^{m-1} P_{\mu_k}(s_k, s_{k+1})\right)P(s_{m+1}|s_m, a_0^*) \\
    &\qquad +\sum_{ t= m+2}^{\infty} \gamma^{t-m}\sum_{ s_1,  \cdots, s_t \in\St }R_{d_{t-m}}(s_t, s_{t-m})\left(\prod_{k=0}^{m-1} P_{\mu_k}(s_k, s_{k+1})\right)P(s_{m+1}|s_m, a_0^*)\\
    &\qquad\qquad P_{d_{1}(s_{1})}(s_{m+1}, s_{m+2}) \left(\prod_{k=m+2}^{t-1} P_{d_{k-m}(s_{k-m})}(s_k, s_{k+1})\right)\\
    &= \sum_{a_1\in\A}q_{d_1(s_1)}(a_1) \left[ \gamma\sum_{ s_1,  \cdots, s_{m+1} \in\St }r(s_{m+1}, a_{1})\left(\prod_{k=0}^{m-1} P_{\mu_k}(s_k, s_{k+1})\right)P(s_{m+1}|s_m, a_0^*) \right.\\
    &\qquad\qquad\qquad\qquad +\sum_{ t= m+2}^{\infty} \gamma^{t-m}\sum_{ s_1,  \cdots, s_t \in\St }R_{d_{t-m}}(s_t, s_{t-m})\left(\prod_{k=0}^{m-1} P_{\mu_k}(s_k, s_{k+1})\right)P(s_{m+1}|s_m, a_0^*)\\
    &\qquad\qquad\qquad\qquad~~ \left.P(s_{m+2}| s_{m+1}, a_1) \left(\prod_{k=m+2}^{t-1} P_{d_{k-m}(s_{k-m})}(s_k, s_{k+1})\right)\right]\\
    &\leq \max_{a_1\in\A}\left\{\gamma\sum_{ s_1,  \cdots, s_{m+1} \in\St }r(s_{m+1}, a_1)\left(\prod_{k=0}^{m-1} P_{\mu_k}(s_k, s_{k+1})\right)P(s_{m+1}|s_m, a_0^*) \right.\\
    &\qquad \qquad \left. +\sum_{ t= m+2}^{\infty} \gamma^{t-m}\sum_{ s_1,  \cdots, s_t \in\St }R_{d_{t-m}}(s_t, s_{t-m})\left(\prod_{k=0}^{m-1} P_{\mu_k}(s_k, s_{k+1})\right)P(s_{m+1}|s_m, a_0^*)\right.\\
    &\qquad\qquad\quad\left. P(s_{m+2}|s_{m+1}, a_1)\left(\prod_{k=m+2}^{t-1} P_{d_{k-m}(s_{k-m})}(s_k, s_{k+1})\right)\right\}\\
    &\vdots\\
    &\leq \sum_{t=m+1}^{\infty}  \gamma^{t-m}\sum_{ s_1,  \cdots, s_t \in\St }r(s_t,a_{t-m}^* )\left(\prod_{k=0}^{m-1} P_{\mu_k}(s_k, s_{k+1})\right)\left(\prod_{k=m+1}^{t-1} P( s_{k+1}|s_k, a_{k-m}^*)\right).
\end{align*}
Let the deterministic decision rule $d^{\textsc{D}}_i:= \delta_{a^*_i}$ with $a^*_i$ being the optimal action per each maximization above for every $i\geq1,$ and the resulting deterministic policy $\pi_{\textsc{D}}:= (d^\textsc{D}_0,d^\textsc{D}_1,\cdots).$ Then,
\begin{align*}
    v^{\mu_0:\mu_{m-1},\pi}_{m}(s_0) \leq v^{\mu_0:\mu_{m-1},\pi_{\textsc{D}}}_{m}(s_0),  
\end{align*}
i.e.,
\begin{align*}
    \max_{\pi\in\Pi^{\textsc{MR}}}v^{\mu_0:\mu_{m-1},\pi}_{m}(s_0) \leq \max_{\pi\in\Pi^{\textsc{MD}}}v^{\mu_0:\mu_{m-1},\pi}_{m}(s_0).  
\end{align*}

\end{proof}

\section{Experiments}
\label{appx: implementation details}
\subsection{Numerical Summary of Atari Results}
\label{sec: atari table}
\begin{table}[h]
{\tiny
\hspace{-2cm}
  \begin{tabular}{|l|l|c|c|c|c|c|c|c|c|c|}
    \hline &
    \multirow{3}{*}{Environment} &
      \multicolumn{3}{c}{$m=5$} &
      \multicolumn{3}{c}{$m=15$} &
      \multicolumn{3}{c|}{$m=25$}\\
    && Del. & Aug. & Obl. & Del. & Aug. & Obl. & Del. & Aug. & Obl. \\
    \hline
    \multirow{2}{*}{Tabular} &
    Maze & ${\bf 0.50} \pm 0.49$ & $0.18 \pm 0.54$ & $-0.61 \pm 0.26$ & ${\bf -0.25} \pm 0.43$ & $-1 \pm 0$ & $-0.76 \pm 0.16$ & ${\bf -0.45} \pm 0.36$ & N/A & $-0.71 \pm 0.19$ \\
    
    &Noisy Maze & ${\bf 0.40} \pm 0.44$ & $-0.29 \pm 0.45$ & $-0.64 \pm 0.21$ & ${\bf -0.50} \pm 0.26$ & $-1 \pm 0$ & $-0.78 \pm 0.15$ & ${\bf -0.49} \pm 0.34$ & N/A & $-0.99 \pm 0$ \\
    \hline
    \multirow{3}{*}{Physical} &
    Cartpole & ${\bf 489} \pm 11$ & $453 \pm 16$ & $27 \pm 4$ & ${\bf 414} \pm 14$ & $192 \pm 15$ & $30 \pm 3$ & ${\bf 324} \pm 7$ & $41 \pm 2$ & $41 \pm 3$ \\
    
    &Noisy Cartpole & ${\bf 435} \pm 8$ & $379 \pm 17$ & $26 \pm 3$ & ${\bf 251} \pm 22$ & $129 \pm 24$ & $30 \pm 3$ & ${\bf 60} \pm 7$ & $36 \pm 3$ & $40 \pm 3$ \\
    &Acrobot & ${\bf -131} \pm 32$ & $-463 \pm 18$ & $-467 \pm 47$ & ${\bf -211} \pm 53$ & $-481 \pm 21$ & $-467 \pm 34$ & ${\bf -351} \pm 57$ & $-493 \pm 5$ & $-465 \pm 20$ \\
    &Noisy Acrobot & ${\bf -134} \pm 37$ & $-491 \pm 2$ & $-445 \pm 11$ & ${\bf -329} \pm 24$ & $-425 \pm 41$ & $-399 \pm 41$ & ${\bf -361} \pm 62$ &  $-471 \pm 12$ & $-438 \pm 39$ \\
    \hline
    													
     \multirow{10}{*}{Atari} &
    Enduro & $16 \pm 6$ & $29 \pm 4$ & ${\bf 33} \pm 2$ & ${\bf 1.4} \pm 0.4$ & $0.6 \pm 0.7$ & $0.5 \pm 0.2$ & ${\bf 1.1} \pm 0.6$ & $ 0.2 \pm 0.1$ & $0.2 \pm 0.4$ \\
    &MsPacman & ${\bf 1354} \pm 86$ & $1083 \pm 60$ & $1319 \pm 35$ & $ {\bf1034}	\pm 124$ & $ 691 \pm 272$ & $ 701 \pm 123$ & ${\bf 959} \pm 77$ & $450 \pm 84$ & $612 \pm 23$ \\
    &NameThisGame & ${\bf2476} \pm 96$ & $2278 \pm 167$ & $2153 \pm 152$ & ${\bf 2122} \pm  132$ & $1573 \pm 43$ & $2013 \pm 300$ & ${\bf 1887} \pm 204$ & $1510 \pm 210$ & $1775 \pm 96$ \\
    &Qbert & $367 \pm 19$ & $372 \pm 177	$ & ${\bf 402} \pm 152$ & ${\bf 304} \pm 15$ & $245 \pm 29$ & $254 \pm 34	$ & ${\bf 253} \pm 29$ & $154 \pm	77$ & $200 \pm 74$ \\
    &RoadRunner & ${\bf 2975} \pm 237$ & $1790 \pm 255$ & $1152 \pm 430$ & ${\bf 1294} \pm 472$ & $1153 \pm 119$ & $360	\pm	204$ & ${\bf 1056} \pm 698$ & $668 \pm 268$ & $485	\pm	451$ \\
    &StarGunner & $902 \pm 74$ & $838 \pm 104$ & ${\bf919} \pm	44$ & ${\bf 801} \pm 38$ & $622 \pm 68$ & $643 \pm 50$ & ${\bf 712} \pm 49$ & $649 \pm 47$ & $635 \pm 20$ \\
    &TimePilot & ${\bf 1941} \pm 133$ & $1844  \pm 599$ & $1616 \pm  474$ & ${\bf 2695} \pm 418$ & $2049 \pm  665$ & $2341 \pm  72$ & ${\bf 2690} \pm 201$ & $2671 \pm  127$ & $1980 \pm  623$ \\
    &Zaxxon & ${\bf 1418} \pm 148$ & $431 \pm 77$ & $605 \pm 66$  & ${\bf 461} \pm 185$ & $97 \pm 65$ & $225 \pm 19	$ & ${\bf 130} \pm 42$ & $	72 \pm 22$ & $	67 \pm 35$ \\
    \hline
  \end{tabular}
  }
  \caption{Experiment summary: episodic return mean and std for all domains. Delayed-Q outperforms the alternatives in 39 of 42 experiments.}
  \label{table: exp summary}
\end{table}
 \newpage
 \subsection{Comparison to RNN-Based Policy}
 \label{sec: rnn comparison}
In one environment, we compared Delayed-Q with a fourth algorithm which uses an RNN-based policy that is unaware of the delay value. Specifically, we tested A2C, which managed to converge on Atari's Frostbite. As can be seen in Fig.~\ref{fig: rnn comparison}, using a recurrent policy does not improve upon Augmented-Q or Oblivious-Q. This result is not surprising though: as stated in Thm.~\ref{theorem: markov policy is sufficient}, the sequence of states $s_{t-m}, s_{t-m-1}, \dots$ does not aid the policy any further than only using $s_{t-m}$. An additional deficiency of RNN-policies that are oblivious to the delay value is that, similarly to Oblivious-Q, they target the wrong Q-value without accounting for delayed execution. Notice that this is not the case in both Augmented-Q and Delayed-Q. 
\begin{figure}[!ht]
    \centering
    \includegraphics[scale=0.2]{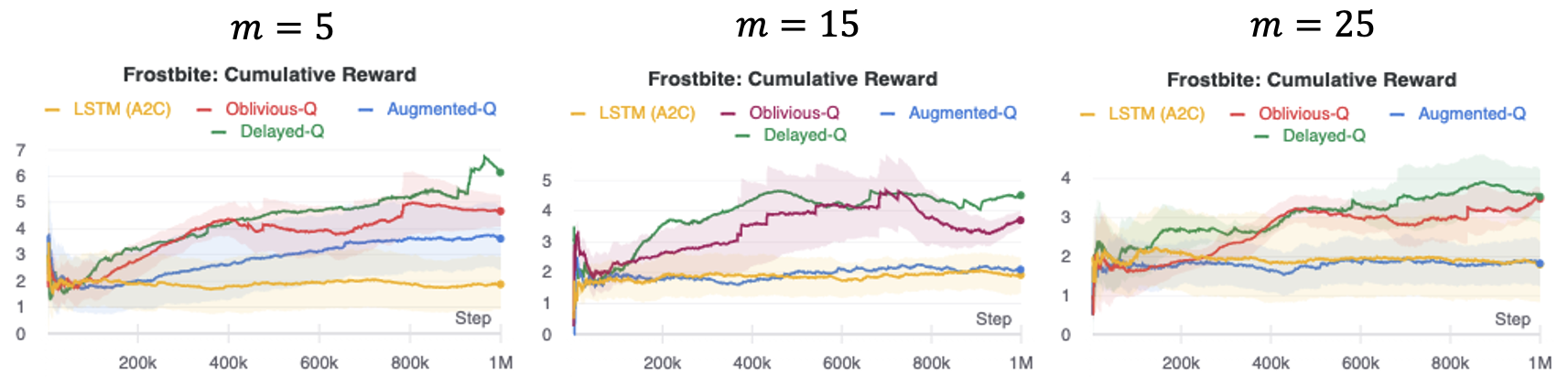}
    \caption{Comparison to RNN-based policy on Atari ``Frostbite''.}
    \label{fig: rnn comparison}
\end{figure}

\end{document}